\def\extendedversion{1} 
\newtheorem{theorem}{Theorem}
{Lemma}
{Proposition}
{Corollary}
\newtheorem{definition}
{Definition}
\newtheorem{example}
{Example}
\newcommand{\todo}[1]{{\noindent \color{blue} \textsc{TODO}: #1}}
\newcommand{\opt}[1]{{\noindent \color{cyan} \textsc{OPT}: #1}}
\newcommand{\noteG}[1]{\noindent\textcolor{magenta}{$\boldsymbol{\blacktriangleright}$#1}}
\newcommand{\tobo}[1]{{\noindent \color{blue} \textsc{TOBO}: #1}}
\def\sem{\mathscr{E}}
\def\ecem{\mathscr{C}}
\def\gau{\mathscr{U}}
\newcommand{\s}{\scriptstyle}
\title{A Logic of General Attention Using Edge-Conditioned Event Models \\ (Extended Version)}
\author{
	Gaia Belardinelli
	\and
	Thomas Bolander \and
	Sebastian Watzl \\
	\affiliations
	Stanford University\\
	Technical University of Denmark\\
	University of Oslo\\
	\emails
	gaiabel@stanford.edu,
	tobo@dtu.dk,
	sebastian.watzl@ifikk.uio.no
}
\begin{document}
	
	\maketitle
	
	\begin{abstract}		
In this work, we present the first general logic of attention. Attention is a powerful cognitive ability that allows agents to focus on potentially complex information, such as logically structured 
propositions,
higher-order beliefs, or what other agents pay attention to.  This ability is a strength, as it helps to ignore what is irrelevant, but it can also introduce biases when some types of information or agents are systematically ignored. Existing dynamic epistemic logics for attention cannot model such complex attention scenarios, as they only model attention to atomic formulas. Additionally, such logics quickly become cumbersome, as their size grows exponentially in the number of agents and announced literals. Here, we introduce a logic that overcomes both limitations. First, we generalize edge-conditioned event models, which we show to be as expressive as standard event models yet exponentially more succinct (generalizing both standard event models and generalized arrow updates). Second, we extend attention to arbitrary formulas, allowing agents to also attend to other agents' beliefs or attention. Our work treats attention as a modality, like belief or awareness. We introduce attention principles that impose closure properties on that modality and that can be used in its axiomatization. Throughout, we illustrate our framework with examples of AI agents reasoning about human attentional biases, demonstrating how such agents can discover attentional biases.

	\end{abstract}
	
	\section{Introduction}
Attention is the crucial ability of the mind to select and prioritize specific subsets of available information 
\cite{watzl2017structuring}. Research in psychology suggests that agents who do not pay attention to something do not update their beliefs about it \cite{simons1999gorillas}. Therefore, 
while the ability to focus attention is a strength, as it helps agents ignore irrelevant information, restricting which information is learned can also introduce biases \cite{johnson2024varieties}: an agent with limited attention may never learn about certain news, some aspects of a candidate’s application, or what certain individuals (e.g.\ of a specific gender or nationality) have to say. Researchers have begun to investigate the significance of attention-driven biases in both humans and AI \cite{johnson2024varieties,munton2023prejudice}. AI systems that can robustly reason about attention could potentially detect such attentional biases and correct them. 

As a step toward that aim, we provide the first logic of general attention, here captured in a dynamic epistemic logic (DEL) framework \cite{baltag1998logic}. Attention is treated as a modality that restricts which parts of an event an agent learns. 
An agent who is not attending to a formula will mistake an event containing that formula for one that does not. Our work builds on earlier proposals that introduce DEL models of attention to capture its effects on agents’ beliefs. However, the first proposal only modeled a notion of all-or-nothing attention \cite{bolander2015announcements}, and later work extended it to apply to atomic propositions \cite{belardinelli2023attention}. 
While the latter can treat cases where a member of a hiring committee systematically attends only to, say, the research parts of applicants’ CVs, 
it cannot represent agents whose attention is biased against specific agents, 
e.g.\ paying attention to 
what candidate $a$ has to say about a research topic, but not candidate $b$~\cite{munton2023prejudice,smith2020epistemic}. Or consider the distinction between an agent who aims to learn by paying attention to the world and an agent who aims to learn by attending to what others pay attention to -- the latter arguably being an important component of 
social learning
\cite{rendell2010copy,boyd2011cultural}. 
To model such cases, we need a notion of attention that applies to arbitrary formulas.

A key obstacle to such a richer notion is that models for attention quickly become highly complex and their size grows at least exponentially large, even for all-or-nothing attention. A fully general theory requires technical innovations that allow to capture attention and its impact on beliefs in a tractable way.
To that goal, we adopt an edge-conditioned version of DEL \cite{bolander2018seeing} and generalize it so that every edge carries both a source and a target condition. Source conditions are used to encode an agent's current attentional state, while target conditions specify what the agent learns about her own attention. We show that this single modification allows for an exponentially more succinct reformulation of earlier event models for propositional attention \cite{belardinelli2023attention}. 
We also show that edge-conditioned event models are as expressive as standard event models, establishing exponential succinctness of edge-conditioned event models over standard event models. We further provide an axiomatization of edge-conditioned models and show that they are as expressive as generalized arrow updates \cite{kooi2011generalized}
and as succinct as them. Taken together, these results imply that edge-conditioned models provide a novel 
 event model formalism that generalizes and unifies the two common alternative formalisms in DEL: standard event models and generalized arrow updates. Due to these properties, it seems a good candidate for a new standard formalism in DEL.

Thanks to the increased simplicity and clarity of the new formalisation, we can now generalize the framework along another dimension, namely to accommodate attention to arbitrary formulas (not merely atomic ones). In this richer framework attention becomes a modality, in that way resembling belief or awareness (though, unlike the latter, attention is intrinisically dynamic as it affects what agents learn from events, and it is not used to restrict their standing beliefs). We argue that the behaviour of that modality can be governed by a family of attention principles, namely closure conditions that can be used to axiomatize a specific logic of attention.
The logic of general attention can model agents who attend to the world, to what other agents believe, or to what other agents attend to.  It can thus be used to study social attention and learning and the attentional biases mentioned above in potentially complex multi-agent settings. 

In summary, the paper makes the following contributions:

\vspace{-1mm} 
\begin{enumerate}\setlength{\itemsep}{0mm}
	\item \emph{We present the first general theory of attention}, allowing agents to attend to arbitrary formulas. 
		\item \emph{We provide the first event model formalism that generalizes and unifies 
		standard event models and generalized arrow updates}. We show that this new formalism is always at least as succinct as both, and can be exponentially more succinct. 
		\item \emph{We provide the first principles for general attention} that can be used to axiomatize specific attention notions. 
	\end{enumerate}

%


%
%

%
%
%
%
%


\vspace{-1mm} 
\if\extendedversion 1
The main text only contains proof sketches. Full proofs can be found in the appendix.  
\else
Due to page limits, we only provide proof sketches of results. \emph{All full proofs can be found in the extended version on Arxiv: \url{https://arxiv.org/pdf/2307.07448}}. 
\fi

	\section{DEL and Propositional Attention}\label{section:del and propositional attention}
	We are going to work with multiple distinct languages, in all of which 
	we use $Ag$ to denote a finite set of \emph{agents} and $P$ to denote a finite set of \emph{propositional atoms}.\footnote{So all our languages take the sets $Ag$ and  $P$ as parameters, but this dependency is kept implicit throughout the paper.} 
	The symbol $\mathcal{L}$ 
	%
	is used to denote any language extending the \emph{language of epistemic logic} $\mathcal{L}_\text{EL}$ given by the grammar: 
	\[
	\varphi::= \top \mid p \mid
	\neg\varphi\mid\varphi\wedge\varphi\mid B_a\varphi,
	\]  
	with $p \in P$, $a \in Ag$, and $B_a \varphi$ reads ``agent $a$ believes $\varphi$''. We define the other propositional connectives in the standard way. 
	Every language $\mathcal{L}$ has a set of \emph{atoms} $At(\mathcal{L})$ with $At(\mathcal{L}) \supseteq P$. 
	A \emph{literal} is an element of $At(\mathcal{L}) \cup \{\neg p \colon p \in At(\mathcal{L})\}$, and a \emph{propositional literal} is an element of $P \cup \{\neg p \colon p \in P\}$. For $p\in At(\mathcal{L})$, we denote a literal by $\ell(p)$, where either $\ell(p)=p$ or $\ell(p)=\neg p$.
	For any formula $\varphi$, $At(\varphi)$ denotes the set of atoms appearing in it. 
	
	\paragraph{DEL}
	We now introduce the standard ingredients of DEL 
	\cite{baltag1998logic,ditmarsch2007dynamic}.
	\begin{definition}[Kripke model]\label{def: kripke model} A \emph{Kripke model} 
		for $\mathcal{L}$ 
		is a tuple $\mathcal{M}=(W,R,V)$ where $W 
		\not=\emptyset$ 
		is a finite set of \emph{worlds}, $R:Ag\rightarrow \mathcal{P}(W^{2})$ assigns an \emph{accessibility relation} $R_a$ to each agent $a\in Ag$, and $V:W\rightarrow\mathcal{P}(At(\mathcal{L}))$ is a \emph{valuation function}. Where $w$ is the \emph{actual world}, we call $(\mathcal{M},w)$ a \emph{pointed Kripke model}.
	\end{definition}
	\begin{definition}[Standard event model]
		\label{def: standard event model} A \emph{standard event
			model} for $\mathcal{L}$ is a tuple $\mathcal{E}=(E,Q,pre)$
		where $E
		\neq\emptyset$ 
		is a finite set of \emph{events}, $Q:Ag\rightarrow \mathcal{P}(E^{2})$ assigns an \emph{accessibility relation} $Q_a$ to each agent $a\in Ag$, and $pre:E\rightarrow\mathcal{L}$ assigns a \emph{precondition} 
		to each event.
		The set of event models for $\mathcal{L}$ is denoted by $\sem(\mathcal{L})$.
		Where $E_d\subseteq E$ is a set of \emph{actual events}, $(\mathcal{E},E_d)$ is a \emph{multi-pointed standard event model}. 
		If $E_d = \{e \}$ for some
		\emph{actual event} $e \in E$, then $(\mathcal{E},E_d)$ is called a \emph{pointed standard event model}, also denoted $(\mathcal{E},e)$.\footnote{We often denote event models by $\mathcal{E}$ independently of whether we refer to an event model $(E,Q,pre)$, a pointed event model $((E,Q,pre),e)$ or a multi-pointed one, $((E,Q,pre),E_d)$. Their distinction will be clear from context.} 
	\end{definition}
	
	
	Intuitively, a (pointed) Kripke model represents an epistemic state, while a (pointed) event model represents an epistemic action or event happening. The \emph{product update operator}, defined next, expresses how an epistemic state is updated as the consequence of an epistemic event.
	\begin{definition}
		[Standard product update]
		\label{def: product update no post}  Let $\mathcal{M} = (W,R,V)$ be a Kripke model and $\mathcal{E} = (E,Q,pre)$ an event model, both 
		for the same language $\mathcal{L}$. 
		The \emph{product update} of $\mathcal{M}$ with $\mathcal{E}$ is $\mathcal{M} \otimes \mathcal{E} = (W',R',V')$ where:\footnote{The meaning of $(\mathcal{M},w) \vDash pre(e)$ depends on the semantics of $\mathcal{L}$. 
			Semantics of specific languages $\mathcal{L}$ are introduced later.} 
		
		\vspace{1pt}
		\noindent	
		\hspace{1pt}
		$W'=\{(w,e)\in W\times E\colon (\mathcal{M},w) \vDash pre(e)\}$,
	
	\vspace{1pt}
	\noindent	
	\hspace{1pt}
	$R'_a=\{((w{,}e),(v{,}f))\in (W')^2\colon (w{,}v)\in R_a,  (e{,}f)\in Q_a\}$,
	
	\vspace{1pt}
	\noindent	
	\hspace{1pt}
	$V'((w,e))=\{p\in At(\mathcal{L}) \colon p\in V(w)\}$.
	
	\noindent	
	Given a pointed Kripke model $(\mathcal{M},w)$ and a pointed or multi-pointed event model $(\mathcal{E}, E_d)$, we say that $(\mathcal{E},E_d)$ is \emph{applicable} in $(\mathcal{M},w)$ iff there exists a unique $e' \in E_d$ such that $(\mathcal{M}, w) \vDash pre(e')$. In that case, we define the \emph{product update} of $(\mathcal{M},w)$ with $(\mathcal{E},E_d)$ as the pointed Kripke model $(\mathcal{M},w) \otimes (\mathcal{E},E_d) = (\mathcal{M} \otimes \mathcal{E}, (w,e'))$.
\end{definition}
We define the language of \emph{DEL with standard event models} $\mathcal{L}_\text{DEL}$ as the language given by the grammar of $\mathcal{L}_\text{EL}$ extended with the clause $\varphi ::= [\mathcal{E}] \varphi$, where $\mathcal{E}$ is a pointed or multi-pointed standard event model. 
The formula 
$[\mathcal{E}] \varphi$ reads ``after $\mathcal{E}$ happens, $\varphi$ is the case''. Notice that $At(\mathcal{L}_\text{DEL}) = P$.
\begin{definition}[Satisfaction]\label{def: static truth} \label{def:satisfaction} 
	Let $(\mathcal{M},w )= ((W,R,V),w)$
	be a pointed Kripke model for $\mathcal{L}_\text{DEL}$. Satisfaction of $\mathcal{L}_\text{DEL}$-formulas in $(\mathcal{M},w )$  is given by the following clauses extended with the standard clauses for propositional logic:
	\noindent \begin{center}
		\begin{tabular}{lll}
			$(\mathcal{M},w) \vDash p$ & iff & $p\in V(w)$, \text{ where $p \in At(\mathcal{L}_\text{DEL})$}\tabularnewline
			$(\mathcal{M},w) \vDash B_a\varphi$ & iff & $(\mathcal{M},v) \vDash \varphi$ for all $(w,v)\in R_a$\tabularnewline
			$(\mathcal{M},w) \vDash [\mathcal{E}]\varphi$ & iff & 
			if $\mathcal{E}$ is applicable in $(\mathcal{M},w)$ then \\
			&&$(\mathcal{M},w) \otimes \mathcal{E} \vDash \varphi$.
		\end{tabular}
		\par\end{center}	
	We say that a formula $\varphi$ is \emph{valid} if $(\mathcal{M},w) \vDash \varphi$ for all pointed Kripke models $(\mathcal{M},w)$, and in that case we write $\vDash \varphi$.
\end{definition}

\paragraph{The language of propositional attention}
We add new atomic proposition $A_a p$ called an \emph{attention atom}, for all $a \in Ag$ and $p \in P$. The set of \emph{attention
	atoms} is $H=\{A_a p \colon p\in P, a\in Ag\}$. 
The \emph{language of propositional attention} $\mathcal{L}_\text{PA}$ is the language given by the grammar of $\mathcal{L}_\text{DEL}$ extended with the following clause: $\varphi ::= A_a p$, where $A_a p \in H$~\cite{belardinelli2023attention}. 
We read $A_a p$ as ``agent $a$ pays attention to whether $p$''.%
%
%
%
%
%
\footnote{\citeauthor{belardinelli2023attention}~[\citeyear{belardinelli2023attention}] used instead the notation $h_a p$ derived from the ``hearing'' atoms $h_a$ of Bolander \emph{et al.}~[\citeyear{bolander2015announcements}]. 
	The new notation $A_a p$ fits better with our generalizations in Section~\ref{sect:general}. 
}  
We have $At(\mathcal{L}_\text{PA}) = P \cup H$.
Satisfaction of formulas in $\mathcal{L}_\text{PA}$ is exactly as in Definition~\ref{def: static truth}, except we replace $\mathcal{L}_\text{DEL}$ by $\mathcal{L}_\text{PA}$ everywhere (the difference only amounts to the addition of the new set of atoms $H$).

\begin{example}\label{ex:static}
	Ann is about to review the CV of an applicant for a position in her lab. To ensure an unbiased hiring process, her lab has adopted an AI agent to detect the presence of attentional biases and assess whether some aspects of the candidates' CVs receive more attention than others. The AI agent does not have any information about which aspects Ann is prioritizing, and only knows that she has not read the CV yet and so has no information about it. However, it correctly assumes that Ann overestimates her own attentional capacities: Ann believes that she (and everybody else) will pay attention to every aspect of the CV. In reality, Ann focuses more on the candidate's research track record, and does not pay so much attention to other important factors such as contributions to diversity. The AI agent has already read and paid attention to all aspects of the CV. This situation is represented in Fig.~\ref{figure: propositional attention}. We have e.g. $(\mathcal{M},w) \vDash A_a p \land \neg A_a q$: Ann ($a$) pays attention to $p$ and not to $q$. We also have $(\mathcal{M},w) \vDash B_b B_a(A_a p \land A_a q)$: the AI agent ($b$) thinks that Ann believes to be paying attention to all aspects of the CV.

		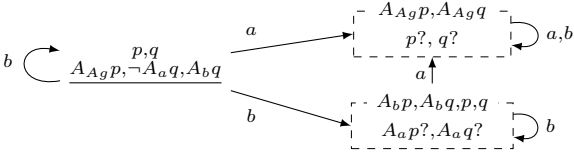
\begin{figure}
			\label{figure:prop-attention-first-example}
			\begin{center}
				\begin{tikzpicture}[auto]\tikzset{ 
						deepsquare/.style ={rectangle,draw=black, inner sep=2pt, very thin, dashed, minimum height=3pt, minimum width=1pt, text centered}, 
						world/.style={}, 
						actual/.style={},
						cloudy/.style={cloud, cloud puffs=15, cloud ignores aspect, inner sep=0.5pt, draw}
					}
					\hspace{-10pt}
					\node [actual] (!) {$\underline{\s \substack{p, q \\A_{Ag} p, \neg A_aq,
					A_bq}}$};
					\path (!) edge [-latex, looseness=4,in=170,out=190] (!) node [above, xshift=-52pt, yshift=-4pt] {$\s b$};
					
					\node [world, right=of !, xshift=20pt, yshift=10pt] (1) {$\hspace{5mm}\s p?,\hspace{2pt}q?\hspace{5mm}$};
					\node [deepsquare, fit={(1)},yshift=2pt](square) {};
					\node[fill=white] (square name 1) at (square.north) [yshift=0pt]{$\s A_{Ag} p, A_{Ag}q$};
					\path (square) edge[-latex, loop, out=8, in=-8, looseness=4] node {$\s a,b$} (square);
					\path (!) edge [-latex] (square.west) node [above, xshift=40pt, yshift=8pt] {$\s a$};

					\node [world, below=of 1, yshift=7pt] (2) {$\hspace{2.0mm}\s A_ap?,A_aq?\hspace{2.0mm}$};
					\node [deepsquare, fit={(2)},yshift=2pt](square1) {};
					\node[fill=white] (square name 1) at (square1.north) [yshift=0pt]{$\s A_b p, A_b q, p, q$};
					\path (square1) edge[-latex, loop, out=8, in=-8, looseness=3.5] node {$\s b$} (square1);
					
					
					\path (!) edge [-latex] (square1.west) node [above, xshift=40pt, yshift=-25pt] {$\s b$};
					
					
					\path[draw,-latex] ([yshift=4mm]2.north) to node[yshift=-0.7mm,xshift=0.5mm]{$\s a$} (1);
					\node[above=of square1, yshift=-26pt] (anchor) {};
				\end{tikzpicture}
			\end{center}
			
			\caption{ 
				A pointed Kripke model $(\mathcal{M},w)$ for $\mathcal{L}_{\text{PA}}$.  In the figure, $p$ stands for ``the applicant has published several papers in top-tier journals'', $q$ for ``the applicant has made significant contributions to diversity''. There are two agents, Ann ($a$) and the AI agent ($b$), i.e.\ $Ag=\{a,b\}$. We use the following conventions. For a set of agents $Ag'$, 
				$A_{Ag'}p := \bigwedge_{a\in Ag'} A_ap $.
				Worlds are represented either by a sequence of literals true at the world, or by such a sequence of literals where some of the atoms are followed by question marks: $p_1?,\dots,p_n?, \ell(q_1),\dots,\ell(q_m)$.
				This is shorthand for the set of $2^n$ worlds corresponding to all possible truth-value assignments of the atoms $p_1,p_2,...,p_n$, where $\ell(q_1), ...,\ell(q_m)$ are true at each of these worlds.
				When a world appears inside a dashed box, all the literals in the label of that box are also true at the world. The actual world is underlined. The accessibility relations are represented by labelled arrows. An arrow from (or to) the border of a box means that there is an arrow from (or to) all the events inside the box. 
			}\label{figure: propositional attention}
		\end{figure} 
		
\end{example}

Fig.~\ref{figure: propositional attention} models the external perspective on the situation, for simplicity of presentation. To represent the case where it is the AI agent $b$ itself using the logic to reason about the scenario, we would apply perspective shifts and consider the situation from the perspective of $b$ only %
\cite{bolander2021del}. 

\paragraph{Standard event models for propositional attention}
The propositional event models of \citeauthor{belardinelli2023attention}~[\citeyear{belardinelli2023attention}]
are intended to represent the 
revelation
of concurrent stimuli from the environment, of which an agent may pay attention to and receive only a portion.\footnote{We 
	choose the term \emph{revelation} over the term \emph{announcement}  
	to emphasize that the information is not necessarily disclosed by an agent (announced), but may 
	simply 
	be something that is seen or heard~\cite{ditmarsch2023announced}. Revelations are here always truthful.} 
Accordingly, a revelation is represented as a conjunction of propositional literals, that is, the conjunction of facts being revealed by the relevant stimuli. An agent may pay attention to only some aspects of the revealed stimuli and, hence, receive only some of these facts.
Say agents $a$ and $b$ are shown a picture of $c$ wearing a blue hat and a red shirt. Use $p_1$ for ``$c$ is wearing a blue hat'' and $p_2$ for ``$c$ is wearing a red shirt''. Then the formula revealed by 
the picture is $p_1 \land p_2$, with the intuition that 
$p_1 \land p_2$ \emph{contains} the two pieces of information $p_1$ and $p_2$, and that $a$ and $b$ may attend to and receive different parts of it (e.g.\ agent $a$ may only pay attention to the hat, 
$p_1$, and agent $b$ only to 
the shirt, 
$p_2$). 


We use the following additional conventions. For a set of formulas $S$, 
$\bigwedge S$ denotes their conjunction. For $S=\emptyset$, define $\bigwedge S := \top$. Conjunctions of literals are assumed to be in a normal form where each atom occurs at most once and in a specific order.
A conjunction of literals $\varphi = \bigwedge_{1 \leq i \leq n} \ell_i$ is said to \emph{contain} the literals $\ell_1,\dots,\ell_n$, and we write $\ell_i \in \varphi$. 

The next definition recalls the propositional attention event models~\cite{belardinelli2023attention}. The function $id_E$ used in the definition is the identity function on $E$, that is, for every $e\in E, id_E(e)=e$.
\begin{definition}[Event model for propositional attention $\mathcal{F}(\varphi)$ \cite{belardinelli2023attention}]\label{e-varphi} 
Let $\varphi=\ell(p_1)\wedge \dots \wedge \ell(p_n)$ with all $p_i \in P$. 
The \emph{event model for propositional attention} representing the revelation of $\varphi$ is    
the multi-pointed event model $\mathcal{F}(\varphi)=((E,Q,id_{E}), E_d)$ for $\mathcal{L}_\text{PA}$ defined by:
\begingroup
\setlength{\abovedisplayskip}{0pt}
\setlength{\belowdisplayskip}{5pt}
\begin{multline*}
	\hspace{-8pt}		E=\{\bigwedge_{p \in S} \ell(p) \wedge  \bigwedge_{a \in Ag} \bigl(\bigwedge_{p \in X_a} A_ap \wedge \bigwedge_{p \in S \setminus X_a} \neg A_ap \bigr)  \colon \\[-0mm] S \subseteq \mathit{At}(\varphi) \text{ and for all } a \in Ag,X_a \subseteq S \}
\end{multline*}
\endgroup

\noindent$Q_a$ is such that $(e,f)\in Q_a$ iff the following holds for all $p$:

\vspace{-1mm}
\begin{itemize}\setlength{\itemsep}{-2pt} 
	\item[-] \textsc{Attentiveness}: if $A_ap\in e$ then $A_ap, \ell(p)\in f$;
	\item[-] \textsc{Inertia}: if $A_ap\notin e$ then $\ell(p)\not\in f;$
\end{itemize}  

\vspace{-1mm} 
$E_d=\{\psi\in E\colon \ell(p)\in \psi, \text{ for all } \ell(p)\in \varphi\}$. 
\end{definition}

This definition is exemplified in Fig.~\ref{fig2:comparison-left}. 
As earlier shown, these event models grow exponentially both in the number of revealed literals and in the number of agents~\cite{belardinelli2023attention}. We will now 
introduce \emph{edge-conditioned event models} that will help us represent these models more compactly, and generalize them more easily. 

\pgfdeclarelayer{background}
\pgfdeclarelayer{foreground}
\pgfsetlayers{background,main,foreground}

\begin{figure*}
\begin{minipage}[c]{0.5\textwidth}
				\begin{center}
					\resizebox{310pt}{180pt}{\begin{tikzpicture}\tikzset{deepsquare/.style ={rectangle,draw=black, inner sep=2.5pt, very thin, dashed, minimum height=3pt, minimum width=1pt, text centered}, 
								world/.style={node distance=12pt,inner sep=1pt},auto,align=center,
								atransitive/.style={-,black}, a/.style={-latex,black}, 
								b/.style={-latex,dotted, thick}, 
								aloopy/.style={-latex, black, loop, in=60, out=20, looseness=3}, 
								bloopy/.style={-latex, dotted, loop, in=45, out=10, looseness=3, thick}}
							\hspace{-30pt}
							
							
							\node [world] (1) {$\s  A_a p,  A_aq$};
							\node [world, below=of 1, yshift=6pt] (2) {$\s  A_a p, \neg A_aq$};
							\node [world, below=of 2, yshift=6pt] (3) {$\s \neg A_a p,  A_aq$};
							\node [world, below=of 3, yshift=6pt] (4) {$\s \neg A_a p, \neg  A_aq$};
							\node [deepsquare, fit={($(1) +(0,4mm)$)(2)(3)(4)}] (square1) {};
							\node[fill=white] (square name 1) at (square1.north) {$\s  A_bp,  A_bq$};
							\node [world, right=of 1, xshift=25pt] (5) {$\s  A_a p,  A_aq$};
							\node [world, below=of 5, yshift=6pt] (6) {$\s  A_a p, \neg A_aq$};
							\node [world, below=of 6, yshift=6pt] (7) {$\s \neg A_a p, A_aq$};
							\node [world, below=of 7, yshift=6pt] (8) {$\s \neg A_a p, \neg  A_aq$};
							\node [deepsquare, fit={($(5) +(0,4mm)$)(6)(7)(8)}] (square 2) {};
							\node[fill=white] (square name 2) at (square 2.north) {$\s {\neg A_bp,  A_bq}$};
							
							\node [world, right=of 5, xshift=25pt] (9) {$\s  A_a p,   A_aq$};
							\node [world, below=of 9, yshift=6pt] (10) {$\s  A_a p, \neg A_aq$};
							\node [world, below=of 10, yshift=6pt] (11) {$\s \neg A_a p,  A_aq$};
							\node [world, below=of 11, yshift=6pt] (12) {$\s \neg A_a p,  \neg  A_aq$};
							\node [deepsquare, fit={($(9) +(0,4mm)$)(10)(11)(12)}] (square 3) {};
							\node[fill=white] (square name 3) at (square 3.north) {$\s { A_bp,  \neg A_bq}$};
							
							\node [world, right=of 9, xshift=25pt] (13) {$\s  A_a p,  A_aq$};
							\node [world, below=of 13, yshift=6pt] (14) {$\s  A_a p, \neg A_aq$};
							\node [world, below=of 14, yshift=6pt] (15) {$\s \neg A_a p,  A_aq$};
							\node [world, below=of 15, yshift=6pt] (16) {$\s \neg A_a p, \neg  A_aq$};
							\node [deepsquare, fit={($(13) +(0,4mm)$)(14)(15)(16)}] (square 4) {};
							\node[fill=white] (square name 4) at (square 4.north) {$\s {\neg A_bp, \neg A_bq}$};
							
							\node[above=of 1, yshift=-18pt] (anchor1) {};
							\node [deepsquare, inner sep=5pt,  fit={(square1)(square 2)(square 3)(square 4)(anchor1)}] (outer) {};
							\node[fill=white] (outer name) at (outer.north) {$\s \underline{p, q}$};
							
							\node [world, below=of 8, yshift=-20pt, xshift=-20pt] (17) {$\s  A_b p$};
							\node [world, below=of 17, yshift=6pt] (18) {$\s\neg A_bp$};
							\node [deepsquare, fit={(17)(18)}, inner sep=6.5pt] (square 5up) {};
							\node[fill=white] (square name 5) at (square 5up.north) {$\s A_ap$};
							
							\node [world, below=of 18, yshift=-9pt] (19) {$\s  A_bp$};
							\node [world, below=of 19, yshift=6pt] (20) {$\s \neg  A_bp$};
							\node [deepsquare, fit={(19)(20)}, inner sep=7pt] (square 5down) {};
							\node[fill=white] (square name 5) at (square 5down.north) {$\s\neg A_ap$};
							
							\node [deepsquare, fit={($(17) +(0,3mm)$)(18)(19)(20)}, inner sep=11.5] (square 5) {};
							\node[fill=white] (square name 5) at (square 5.north) {$\s p$};
							
							\node [world, right=of 17, xshift=75pt] (21) {$\s  A_bq$};
							\node [world, below=of 21, yshift=6pt] (22) {$\s\neg A_bq$};
							\node [deepsquare, fit={(21)(22)}, inner sep=6.5pt] (square 6up) {};
							\node[fill=white] (square name 6) at (square 6up.north) {$\s A_aq$};
							
							\node [world, below=of 22, yshift=-9pt] (23) {$\s   A_bq$};
							\node [world, below=of 23, yshift=6pt] (24) {$\s \neg  A_bq$};
							\node [deepsquare, fit={(23)(24)}, inner sep=7pt] (square 6down) {};
							\node[fill=white] (square name 6) at (square 6down.north) {$\s\neg A_aq$};
							
							\node [deepsquare, fit={($(21) +(0,3mm)$)(22)(23)(24)}, inner sep=11.5] (square 6) {};
							\node[fill=white] (square name 6) at (square 6.north) {$\s q$};
							
							\node [world, below=of 20, yshift=-7pt, xshift=51pt] (25) {$\s \top$};
							
							\path[a] (1) edge [-] (5) node [above, xshift=31pt, yshift=-1pt] {};
							\path[a] (5) edge [-] (9) node [above, xshift=31pt, yshift=-1pt] {};
							\path[a] (9) edge [-] (13) node [above, xshift=31pt, yshift=-1pt] {};
							\draw[atransitive] (1.north east) to [looseness=1.3, bend left] (9.north west);
							\draw[atransitive] (1.north east) to [looseness=1.3, bend left] (13.north west);
							\draw[atransitive] (5.north east) to [looseness=1.3, bend left] (13.north west);
							
							\path[b] (square1) edge [looseness=5,in=115,out=130] node [yshift=-6pt]{}(square1);
							
							\draw[aloopy] (1) to (1);
							\draw[aloopy] (5) to (5);
							\draw[aloopy] (9) to (9);
							\draw[aloopy] (13) to (13);
							
							\path[b] (25) edge [looseness=8,in=85,out=155] node [yshift=-6pt]{} (25);
							\path[a] (25) edge [looseness=7,in=95,out=155] node [yshift=-6pt]{} (25);

							\draw[a] (2.west) to [bend right, out=-90,in=225,looseness=1.4] (square 5up.west);
							\draw[a] (3.west) to [bend right, out=-120,in=170,looseness=1.2] (square 6up.north west);
							\draw[a]   (4.south) to [bend right,in=240,looseness=1.5, out=280] (25.west);
							
							\draw[a] (6.west) to [bend right, out=330,in=220,looseness=1.15] (square 5up.west);
							\draw[b] (square 2.south) to [bend right, in=200,looseness=0.5] node[yshift=-1pt, xshift=-3] { } (21.west);
							\draw[b] (square 2.south) to [bend right, in=200,looseness=0.5] node[yshift=-1pt, xshift=-3] { } (23.west);
							\draw[a] (8.west) to [bend left, out=290,in=255,looseness=2] (25.west);
							\draw[a] (7.east) to [out=-35,looseness=1] (square 6up.north west);
							
							\draw[a] (10.west) to [bend right, out=20,in=160,looseness=1.15] (square 5up.north east);
							\draw[a] (11.east) to [bend right, out=90,in=100,looseness=1] (square 6up.east);
							\draw[a] (12.east) to [bend right, out=70,in=110,looseness=2] (25.east);
							\draw[b] (square 3.south) to [bend right, in=170, out=20, looseness=0.7]  (17.east);		
							\draw[b] (square 3.south) to [bend right, xshift=1pt, in=130, out=15, looseness=1]  (19.east);		
							
							\draw[a] (14.east) to [bend right, out=115,in=180,looseness=1.5] (square 5up.north east);
							\draw[a] (15.east) to [bend left, out=90,in=140,looseness=1] (square 6up.east);
							\draw[a]   (16.south) to [bend right,in=120,out=80, looseness=1.5](25.east);
							\draw[b] (square 4.south) to [bend right, xshift=1pt, in=125, out=75, looseness=1.5]  (25.east);		
							
							\draw[-latex, a, loop, in=60, out=40, looseness=7] (square 5up) to (square 5up);
							\draw[-latex, b, loop, in=45, out=10, looseness=7] (17) to (17);
							\path [b] (17.east) edge [-, bend left, out=90, in=90, looseness=1.5] (19.east);	
							\draw[a] (square 5down.east) to [bend left] (25.north);		
							\draw[b] (18.west) to [bend right, out=270, in=238, looseness=2] (25.west);		
							\draw[-latex, b, loop, in=40, out=10, looseness=8] (19) to (19);
							
							\draw[b] (20.south) to [bend right, looseness=1]  (25.west);	
							
							\draw[-latex, a, loop, in=60, out=40, looseness=7] (square 6up) to (square 6up);
							\draw[-latex, b, loop,in=41, out=10, looseness=8] (21) to (21);
							\draw[-latex, b, loop, in=37, out=10, looseness=9] (23) to (23);
							\path [b] (21.east) edge [-, bend left, out=90, in=90, looseness=1.5] (23.east);			
							\draw[a] (square 6down.west) to [bend right] (25.north);		
							\draw[b] (22.east) to [bend left, out=90, in=124, looseness=2] (25.east);		
							\draw[b] (24.south) to [bend left, looseness=1] (25.east);		
					\end{tikzpicture}} 
		\end{center}
	\vspace{-4mm}
	\caption{Event model $\mathcal{F}(p\land q)$ from \protect\cite{belardinelli2023attention}. 
	 Solid edges are for Ann ($a$), dotted for the AI agent ($b$). The figure adopts the same conventions as in the cited paper: An event is represented by a list of literals, corresponding to (some of the) literals that appear in the conjunctive precondition of the event itself. The formulas in the label of a dashed box are to be included as conjuncts in the precondition of all events inside the box. The convention for the edges is the same as in Fig.~\ref{figure: propositional attention}. See \protect\cite{belardinelli2023attention} for a detailed explanation of the figure.}\label{fig2:comparison-left}
\end{minipage}
\hfill
\begin{minipage}[c]{0.45\textwidth}
		\begin{center}
			\scalebox{1}
			{	\begin{tikzpicture}
					\tikzset{scale=2, 
						world/.style={}, 
						actual/.style={}
					}
					\node [actual,fill=white] (!) at (0,0) {$\s \underline{p \land q}$}; 
					\path (!) edge [-latex, looseness=6,in=70,out=110] (!) node [above, xshift=9pt, yshift=17pt] {$\s i:(A_i p \land A_i q, A_i p \land A_i q)$};
					
					\node [world,fill=white] (1) at (-1,-1) {$\s p$};
					\begin{scope}[on background layer]
						\draw[-latex] (!) arc 
						[
						start angle=90,
						end angle=173,
						x radius=1,
						y radius =1
						] node[midway,left,xshift=-2pt] {$\s i:(A_i p \land \neg A_i q, A_i p)$};
					\end{scope}
					
					\path (1) edge [-latex, looseness=9,in=160,out=200] node [left] {$\s i:(A_i p, A_i p)$} (1);
					
					\node [world,fill=white] (2) at (1,-1) {$\s q$};
					
					\begin{scope}[on background layer]
						\draw[-latex] (!) arc 
						[
						start angle=90,
						end angle=7,
						x radius=1,
						y radius =1
						] node[midway,right,xshift=2pt] {$\s i:(A_i q \land \neg A_i p, A_i q)$};
					\end{scope}
					
					\path (2) edge [-latex, looseness=9,in=20,out=-20] node [right] {$\s i:(A_i q, A_i q)$} (2);

					\node [world] (T) at (0,-2) {$\s \top$};
					\begin{scope}[on background layer]
						\draw[-latex] (1) arc 
						[
						start angle=-180,
						end angle=-95,
						x radius=1,
						y radius =1
						] node[midway,left,xshift=0mm] {$\s i:(\neg A_i p, \top)$};
						
						\draw[-latex] (2) arc 
						[
						start angle=0,
						end angle=-85,
						x radius=1,
						y radius =1
						] node[midway,right,xshift=1mm] {$\s i:(\neg A_i q, \top)$};    
					\end{scope}
					
					\path (T) edge [-latex, looseness=6,in=-110,out=-70] node [below,xshift=9pt] {$\s i:(\top, \top)$} (T);
					
					\path (!) edge[-latex,bend right=0] node[fill=white,above,yshift=-7pt,align=left,xshift=4pt]{$\s i:(\neg A_i p \land \neg A_i q, \top)$}
					(T);

			\end{tikzpicture}} 
		\end{center} 
\vspace{-2.5mm} 
\caption{Edge-conditioned event model for propositional attention $\mathcal{H}(p\land q)$. Events are represented by conjunctive formulas corresponding to the event's own precondition. When for all agents $i\in Ag$, we have a (conditioned) edge $(e{:}\varphi_i,f{:}\psi_i) \in Q_i$, we add an arrow from $e$ to $f$ labelled by $i{:}(\varphi_i,\psi_i)$. This means that agent $i$ has an edge from $e$ to $f$ with source condition $\varphi_i$ and target condition $\psi_i$. For example, the arrow from event $p\wedge q$ to event $p$ labelled by $i{:}(A_ip\wedge\neg A_iq,A_ip)$ corresponds to the edge $(p\wedge q{:} A_ip\wedge\neg A_iq,p{:}A_ip) \in Q_i$, for all $i\in Ag$. This edge models an agent who paid attention to $p$, but not to $q$, and therefore only learns $p$ and that she paid attention to $p$.
}\label{fig2:comparison-right}
\end{minipage}
\hfill
\end{figure*}

\section{Edge-Conditioned Event Models}


The idea behind edge-conditioned event models is to make the edges of event models conditional on formulas. For standard event models, $(e,f)\in Q_a$ means that event $f$ is accessible from event $e$ by agent $a$, and whenever that is the case, we draw an $a$-edge from $e$ to $f$.
In edge-conditioned event models, what is accessible at an event has become conditioned by formulas: $(e{:}\varphi, f{:}\psi)\in Q_a$ means that $f$ is accessible from $e$ by $a$ under 
the condition that $\varphi$ is the case at the source $e$ and $\psi$ is the case at target $f$. As we will see, this simple modification 
has rather advantageous consequences, as it allows  us 
to represent event models significantly more 
succinctly.


\begin{definition}[Edge-conditioned event models]\label{def:edge-conditioned-event-model}
An \emph{edge-conditioned event model} for $\mathcal{L}$ is a tuple $\mathcal{C}=(E,Q,pre)$ where $E$ and $pre$ 
are standard (i.e.\ as in Def. \ref{def: standard event model}), and where $Q: Ag\rightarrow  \mathcal{P}(E\times \mathcal{L}\times E\times \mathcal{L})$ assigns to each agent a set of quadruples $(e,\varphi,f,\psi)$. 
For $(e,\varphi,f,\psi)\in Q_a$, we call $(e,\varphi,f, \psi)$ a \emph{conditioned edge}, where $\varphi$ is the \emph{source condition} (at $e$) and $\psi$ is the \emph{target condition} (at $f$).
We often abbreviate 
$(e,\varphi,f,\psi)$ as 
$(e{:}\varphi,f{:}\psi)$ 
to emphasize that it is an edge from $e$ to $f$, where the source $e$ has
condition $\varphi$, and the target $f$ 
condition $\psi$. 
The set of edge-conditioned event models for $\mathcal{L}$ is denoted by $\ecem(\mathcal{L})$.
Where
$e$ is the \emph{actual event}, $(\mathcal{C},e)$ is a \emph{pointed edge-conditioned event model}.
\end{definition}

\begin{definition}
[Edge-conditioned product update]
\label{def:edge-conditioned-product-update}
Let $\mathcal{M} = (W,R,V)$ be a Kripke model and $\mathcal{C} = (E,Q,pre)$ an edge-conditioned event model, both 
for  the same language $\mathcal{L}$.  
The \emph{product update} of $\mathcal{M}$ with $\mathcal{C}$ is $\mathcal{M} \otimes \mathcal{C} = (W',R',V')$ where $W'$ and $V'$ are standard (Def.~\ref{def: product update no post}), and 
$R'_a=\{((w,e),(v,f))\in (W')^2 \colon (w,v)\in R_a \text{ and } \exists \varphi,\psi \in \mathcal{L} \text{ such that } (e{:}\varphi,f{:}\psi)\in Q_a, (\mathcal{M},w)\vDash \varphi \text{ and } (\mathcal{M},v)\vDash \psi\}$.		
A pointed edge-conditioned event model $(\mathcal{C}, e)$ is called \emph{applicable} in a pointed Kripke model $(\mathcal{M},w)$
if 
$(\mathcal{M}, w )\vDash pre(e)$, and 
then the \emph{product update} of $(\mathcal{M},w)$ with $(\mathcal{C},e)$ is the pointed Kripke model $(\mathcal{M},w) \otimes (\mathcal{C},e)= (\mathcal{M} \otimes \mathcal{C}, (w,e))$.
\end{definition}

Edge-conditioned event models were first introduced by Bolander~[\citeyear{bolander2018seeing}], but only with conditions at the source of edges. Adding conditions also at the target is technically straightforward and matches the conventions for generalized arrow updates~\cite{kooi2011generalized}. Despite the technical simplicity, it turns out to have significant advantages, including that this new formalism generalizes and is more succinct than both generalized arrow updates and standard event models, and that we acheive an exponential succinctness result for the event model for propositional attention that would not hold with only source conditions (Theorem~\ref{thm:exponential-succinctness} below).

We define the language of \emph{DEL with edge-conditioned event models} $\mathcal{L}_\text{ECM}$ as the language given by the grammar of $\mathcal{L}_\text{EL}$ extended with the clause $[\mathcal{C}] \varphi$, where $\mathcal{C}$ is a pointed edge-conditioned event model.  
The semantics of $\mathcal{L}_\text{ECM}$ is defined as for $\mathcal{L}_\text{DEL}$ (Def.~\ref{def:satisfaction}), except that the product update in the semantics of $[\mathcal{C}]\varphi$ uses Def.~\ref{def:edge-conditioned-product-update}.

\begin{table} 
\begin{minipage}{\linewidth}
\caption{\label{tab:logic}Axiomatization of DEL with edge-conditions ($\mathcal{L}_\text{ECM}$).}   

\vspace{-2mm} 
\begin{tabular}{l} \toprule
All prop.\ tautologies and
$B_{a}(\varphi\rightarrow \psi)\rightarrow (B_{a}\varphi\rightarrow B_a\psi)$. \\
$[(\mathcal{C},e)] p \leftrightarrow (pre(e)\rightarrow p)$ \\
$[(\mathcal{C},e)] \neg \psi \leftrightarrow( pre(e) \rightarrow \neg [(\mathcal{C},e)]\psi)$ \\
$[(\mathcal{C},e)](\psi\wedge\chi)\leftrightarrow([(\mathcal{C},e)]\psi\wedge[(\mathcal{C},e)]\chi)$\\
$\displaystyle[(\mathcal{C},e)]B_a\psi \leftrightarrow (pre(e)\rightarrow \! \bigwedge_{\mathclap{(e:\chi,f:\chi')\in Q_a}}\hspace{6pt}(\chi\rightarrow B_a(\chi' \rightarrow [(\mathcal{C},f)]\psi)))$\\
From $\varphi$ and $\varphi\rightarrow\psi$, infer $\psi$. 
From $\varphi$ infer $B_a\varphi$.\\
From $\varphi\leftrightarrow \psi$, infer $\chi[\varphi\slash p]\leftrightarrow \chi[\psi\slash p] $ (substitution).
\\
\bottomrule
\end{tabular}
\end{minipage}
\end{table}
\begin{theorem}[Soundness and completeness]\label{thm: soundness and completeness}
Table~\ref{tab:logic} provides a sound and complete axiomatization of DEL with edge-conditioned event models.
\end{theorem}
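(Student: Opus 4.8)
<br>

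The plan is to follow the standard reduction-axiom strategy for dynamic epistemic logics. \textbf{Soundness} is the routine direction: each reduction axiom in Table~\ref{tab:logic} is verified to be valid by unfolding the semantics of $[(\mathcal{C},e)]$ from Definition~\ref{def:edge-conditioned-product-update} and Definition~\ref{def:satisfaction}. The only clause that requires care is the one for $[(\mathcal{C},e)]B_a\psi$: here one uses that a world $(v,f)$ is $R'_a$-accessible from $(w,e)$ iff $(w,v)\in R_a$, $(\mathcal{M},v)\vDash pre(f)$, and there is a conditioned edge $(e{:}\chi,f{:}\chi')\in Q_a$ with $(\mathcal{M},w)\vDash\chi$ and $(\mathcal{M},v)\vDash\chi'$; the precondition $pre(f)$ is absorbed into the inner box modality $[(\mathcal{C},f)]\psi$, which by its own semantics is vacuously true at $v$ when $pre(f)$ fails. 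The substitution rule is sound because $\vDash\varphi\leftrightarrow\psi$ implies the two formulas are interchangeable in any context, proved by an easy induction on formula structure. The validity of the $B_a$-normality axiom and the necessitation rules is completely standard.

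\textbf{Completeness} is obtained by the usual translation argument: define a translation $t\colon\mathcal{L}_\text{ECM}\to\mathcal{L}_\text{EL}$ that pushes box modalities inward using the reduction axioms until none remain, and show (i) $\vDash\varphi\leftrightarrow t(\varphi)$ using soundness of the axioms, (ii) the axiom system proves $\varphi\leftrightarrow t(\varphi)$ for every $\varphi$, and (iii) the axiom system restricted to $\mathcal{L}_\text{EL}$ is the standard complete system for multi-agent $K$ (so every valid $\mathcal{L}_\text{EL}$-formula is provable). Combining: if $\vDash\varphi$ then $\vDash t(\varphi)$ by (i), so $\vdash t(\varphi)$ by (iii), hence $\vdash\varphi$ by (ii). Steps (ii) and (iii) are standard; the subtlety is entirely in defining $t$ so that it terminates.

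The main obstacle is the \textbf{termination of the translation}, i.e. finding a well-founded complexity measure on $\mathcal{L}_\text{ECM}$-formulas that strictly decreases under each left-to-right application of a reduction axiom. The difficulty, compared to vanilla DEL, is the $B_a$-axiom: rewriting $[(\mathcal{C},e)]B_a\psi$ produces formulas $[(\mathcal{C},f)]\psi$ for every $a$-edge out of $e$, and the source/target conditions $\chi,\chi'$ are arbitrary formulas of $\mathcal{L}$ which themselves may contain box modalities over other event models. One must therefore set up a measure that accounts both for the nesting depth of dynamic modalities and for the size/structure of the event models (number of events, and the complexity of the edge-condition formulas), ordered lexicographically, and check that every reduction axiom decreases it — in particular that the conditions $\chi,\chi'$, being subformulas of the \emph{description} of $\mathcal{C}$ rather than of $[(\mathcal{C},e)]B_a\psi$ itself, do not cause the measure to grow. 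This is the standard but delicate bookkeeping; once the measure is in place, the rest of the completeness argument goes through mechanically. I would also remark that the inner modalities $[(\mathcal{C},f)]$ reuse the \emph{same} event model $\mathcal{C}$ with a different actual event, which is what makes a finite-event-set induction feasible.
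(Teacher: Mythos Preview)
Your proposal is correct and follows essentially the same approach as the paper: verify validity of the reduction axioms (with the only nontrivial case being the $B_a$-axiom, which the paper checks in full detail exactly as you outline), and appeal to the standard reduction strategy for completeness, citing \cite{ditmarsch2007dynamic}. Your discussion of termination is more explicit than the paper's, which simply defers to the cited reference; your observation that the inner modalities reuse the same $\mathcal{C}$ and that the conditions $\chi,\chi'$ live at a lower level of the language hierarchy is exactly the point that makes the usual complexity measure work.
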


\begin{proof}[Proof Sketch] 
The first line and last two lines of Table~\ref{tab:logic} provide a sound and complete axiomatization of the logic of the underlying epistemic language $\mathcal{L}_\text{EL}$~\cite{fagin1995reasoning}. Completeness of the logic of the full language $\mathcal{L}_\text{ECM}$ then follows by standard reduction arguments~\cite{ditmarsch2007dynamic}: we have reduction axioms (lines 2--5 of Table~\ref{tab:logic}) for translating any formula involving the $[\mathcal{C}]$ modality into a formula without it, hence reducing the completeness proof to completeness of the underlying epistemic logic. For soundness, we verify the validity of the reduction axiom for the belief modality, the validity of the others being standard.   
\end{proof}

We define $\mathcal{L}_{\text{PA}^+}$ as the language given by the grammar of 
$\mathcal{L}_\text{ECM}$ extended with the clause $\varphi ::= A_a p$, with $A_a p \in H$. 
It is the language that uses edge-conditioned event models and includes the attention atoms. 
Note that $At(\mathcal{L}_{\text{PA}^+}) = At(\mathcal{L}_\text{PA})= P \cup H$. 
We 
exemplify Definition \ref{def:edge-conditioned-event-model}
by defining an edge-conditioned event model for propositional attention that corresponds to the standard event model 
of Definition~\ref{e-varphi}.

\begin{definition}[Edge-conditioned event model for propositional attention] \label{def:edge-conditioned-prop-attention}
Let $\varphi=\ell(p_1)\wedge \dots \wedge \ell(p_n)$ with $p_i \in P$.
The \emph{edge-conditioned event model for propositional attention} representing the revelation of $\varphi$ is the pointed edge-conditioned event model		
$\mathcal{H}(\varphi)=((E,Q,id_E),\varphi)$ for $\mathcal{L}_{\text{PA}^+}$ defined by:

\smallskip
\begingroup
\setlength{\abovedisplayskip}{-7pt}
\setlength{\belowdisplayskip}{-5pt}
$E=\{\bigwedge_{p\in S} \ell(p) \colon S\subseteq At(\varphi)\}$,


\begin{multline*}
Q_a = \{ 	(\bigwedge_{\mathclap{p\in S}}\ell(p){:} \bigwedge_{\mathclap{p\in T}}A_ap \land \bigwedge_{\mathclap{p\in S\setminus T}}\neg A_ap, \bigwedge_{\mathclap{p\in T}}\ell(p){:} \bigwedge_{\mathclap{p\in T}}A_ap):\\[-2pt] T \subseteq S \subseteq At(\varphi) \}.
\end{multline*}
\endgroup
\end{definition}
Figure~\ref{fig2:comparison-right} shows $\mathcal{H}(p \land q)$, i.e.\ the same revelation as in Figure~\ref{fig2:comparison-left} (when $Ag = \{a,b\}$). The standard and the edge-conditioned event models for propositional attention correspond to each other by being
\emph{update equivalent}. 
\begin{definition}[Update equivalence~\cite{kooi2011generalized,eijck2012action}] 
\label{def:update equivalence}
Let $\mathcal{D}$ be a standard or edge-conditioned event model for $\mathcal{L}$ 
and $\mathcal{D}'$ a standard or edge-conditioned event model for $\mathcal{L}'$, where $At(\mathcal{L}) = At(\mathcal{L}')$.
We say that $\mathcal{D}$ is \emph{update equivalent to} $\mathcal{D}'$ if for all Kripke models $\mathcal{M}$ with atom set $At(\mathcal{L})$,
$\mathcal{M}\otimes\mathcal{D}$ and $\mathcal{M}\otimes\mathcal{D}'$ are bisimilar.\footnote{If $\mathcal{M}$ is a Kripke model for $\mathcal{L}$, it is also a Kripke model for any $\mathcal{L}'$ with $At(\mathcal{L}') = At(\mathcal{L})$, since its definition only depends on the set of atoms and agents, and the agent set is fixed. We also refer to such Kripke models as \emph{Kripke models with atom set} $At(\mathcal{L})$. The notion of bisimulation is standard in modal logic \cite{blac.ea:moda}. The definition is given in the Technical Appendix.}
\end{definition}
Update equivalence means semantic equivalence, as bisimilar models agree on all formulas~\cite{blac.ea:moda}. 

\begin{theorem}\label{prop:F-H-equivalence} 
For any conjunction of propositional literals $\varphi$, 
$\mathcal{F}(\varphi)$ and $\mathcal{H}(\varphi)$ are update equivalent. 
\end{theorem}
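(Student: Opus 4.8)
The plan is to establish a bisimulation between $\mathcal{M}\otimes\mathcal{F}(\varphi)$ and $\mathcal{M}\otimes\mathcal{H}(\varphi)$ for an arbitrary Kripke model $\mathcal{M}$ with atom set $P\cup H$. Write $\varphi = \ell(p_1)\wedge\dots\wedge\ell(p_n)$ and $\mathit{At}(\varphi)=\{p_1,\dots,p_n\}$. Recall that in $\mathcal{F}(\varphi)$ an event is a conjunction picking a set $S\subseteq\mathit{At}(\varphi)$ of revealed literals together with, for each agent $a$, a set $X_a\subseteq S$ of attention atoms (the positive ones $A_ap$ for $p\in X_a$, the negative ones $\neg A_ap$ for $p\in S\setminus X_a$); whereas in $\mathcal{H}(\varphi)$ an event is just a conjunction $\bigwedge_{p\in S}\ell(p)$ for some $S\subseteq\mathit{At}(\varphi)$. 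The first key observation is that in the product update, whether a world-event pair $(w,e)$ survives, and how worlds are related, is governed by which attention atoms hold \emph{at the world $w$} — the attention conjuncts baked into the $\mathcal{F}$-events are essentially redundant bookkeeping once we compute the product, because the $\mathcal{F}$-events have precondition equal to the full conjunction (so a pair $(w, e)$ with $e$ encoding $(S, (X_a)_a)$ survives only when $w\models A_ap$ for $p\in X_a$ and $w\models\neg A_ap$ for $p\in S\setminus X_a$, i.e.\ $X_a = \{p\in S : w\models A_ap\}$ is forced). This is exactly the information that the edge conditions of $\mathcal{H}(\varphi)$ read off the worlds directly.

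Concretely, I would define the relation $Z$ between $\mathcal{M}\otimes\mathcal{F}(\varphi)$ and $\mathcal{M}\otimes\mathcal{H}(\varphi)$ by: $(w,e)\mathbin{Z}(w,f)$ iff $w$ is the same world in both, $e$ encodes $(S,(X_a)_a)$, $f = \bigwedge_{p\in S}\ell(p)$ encodes the same $S$, and $X_a = \{p\in S : (\mathcal{M},w)\vDash A_ap\}$ for every $a$ (equivalently, $(w,e)$ is a surviving pair of the $\mathcal{F}$-update with $f$ the $\mathcal{H}$-event on the same $S$). One checks: (i) \emph{atom harmony} — both product updates copy the valuation of $\mathcal{M}$ restricted to $P\cup H$ onto world-event pairs (this is the $V'$ clause, identical in Def.~\ref{def: product update no post} and Def.~\ref{def:edge-conditioned-product-update}), so $Z$-related pairs satisfy the same atoms; (ii) $Z$ is total on both sides — every surviving $\mathcal{F}$-pair $(w,e)$ has, by the precondition analysis above, $X_a$ determined by $w$, giving a unique $Z$-partner $(w,\bigwedge_{p\in S}\ell(p))$ which survives in $\mathcal{H}$ since $pre$ there is also the full conjunction, reducing to "$w\models\ell(p)$ for all $\ell(p)\in\varphi$"; conversely every surviving $\mathcal{H}$-pair $(w,\bigwedge_{p\in S}\ell(p))$ lifts to the $\mathcal{F}$-event with the $w$-determined $X_a$'s.

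The heart of the argument is the \textbf{back-and-forth (zig-zag) conditions} on accessibility, where the attention semantics enters. Fix $a$ and a $Z$-related pair $(w,e)\mathbin Z(w,f)$ with $e\leftrightarrow(S,(X_a)_a)$ and $f\leftrightarrow S$. \emph{Forth:} suppose $((w,e),(v,e'))\in R'_a$ in the $\mathcal{F}$-update, so $(w,v)\in R_a$ and $(e,e')\in Q_a$, i.e.\ \textsc{Attentiveness} and \textsc{Inertia} hold; let $e'\leftrightarrow(S',(X'_a)_a)$. I claim the corresponding $\mathcal{H}$-target is $f' = \bigwedge_{p\in T}\ell(p)$ with $T = X_a\cap\{p : \ell(p)\in e'\}$ — actually, examining \textsc{Attentiveness}/\textsc{Inertia} together with survival at $v$, the revealed-literal set of $e'$ that is "active" is exactly $T = \{p\in S : w\models A_ap\}$ intersected appropriately; the conditioned edge of $\mathcal{H}$ we need is $(\bigwedge_{p\in S}\ell(p){:}\bigwedge_{p\in T}A_ap\wedge\bigwedge_{p\in S\setminus T}\neg A_ap,\ \bigwedge_{p\in T}\ell(p){:}\bigwedge_{p\in T}A_ap)$ with $T = \{p\in S : (\mathcal{M},w)\vDash A_ap\}$, whose source condition holds at $w$ by definition of $Z$ and whose target condition holds at $v$ because \textsc{Attentiveness} forces $A_ap,\ell(p)\in e'$ (hence $v\models A_ap\wedge\ell(p)$) for each $p\in X_a = T$, and \textsc{Inertia} forces $\ell(p)\notin e'$ for $p\notin T$ so that $f' = \bigwedge_{p\in T}\ell(p)$ is precisely the surviving $\mathcal{H}$-event bisimilar to $(v,e')$. \emph{Back:} symmetric — given a conditioned $\mathcal{H}$-edge out of $f$ satisfied at $(w,v)$ with target $f'\leftrightarrow T$, reconstruct the $\mathcal{F}$-event $e'$ on revealed set $S' \supseteq T$ (any $S'$ works, since the zig-zag only requires \emph{some} partner; take $S' = T$ with $X'_b$ determined by $v$) and verify $(e,e')\in Q_a$ via \textsc{Attentiveness} (the edge's source condition says $w\models A_ap$ exactly for $p\in T$, so $A_ap\in e$ implies $p\in T$, and the target condition guarantees $A_ap,\ell(p)\in e'$) and \textsc{Inertia} (for $A_ap\notin e$, i.e.\ $p\notin T$, we chose $S'=T$ so $\ell(p)\notin e'$).

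I expect the main obstacle to be bookkeeping precision in the forth/back clauses: matching the \emph{pair} $(\textsc{Attentiveness},\textsc{Inertia})$ governing $Q_a$ in $\mathcal{F}(\varphi)$ against the single conditioned edge of $\mathcal{H}(\varphi)$, and in particular being careful that the target event $f'$ on the $\mathcal{H}$ side must be the conjunction over exactly $T = \{p\in S : w\models A_ap\}$ (not over all of $S$), since \textsc{Inertia} kills the literals the agent does not attend to — getting this set right is what makes the target condition $\bigwedge_{p\in T}A_ap$ of the $\mathcal{H}$-edge both necessary and sufficient. Once the relation $Z$ is pinned down correctly, atom harmony and the two directions of the zig-zag are each short verifications, and $Z$ witnesses the required bisimulation, establishing update equivalence of $\mathcal{F}(\varphi)$ and $\mathcal{H}(\varphi)$. (Strictly, one also notes the multi-pointed $\mathcal{F}(\varphi)$ versus pointed $\mathcal{H}(\varphi)$ distinction dissolves because update equivalence as defined in Def.~\ref{def:update equivalence} quantifies over the non-pointed models $\mathcal{M}\otimes\mathcal{D}$.)
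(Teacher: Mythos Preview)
Your approach is essentially the same as the paper's: you define the bisimulation $Z$ that matches a world-event pair on the $\mathcal{F}$-side with the pair on the $\mathcal{H}$-side sharing the same world and the same set $S$ of revealed literals (the attention components $X_a$ being forced by survival of the precondition), and then verify atom harmony and the zig-zag conditions by playing \textsc{Attentiveness}/\textsc{Inertia} against the source/target conditions of the $\mathcal{H}$-edges. One small point you dismiss too quickly: because $\mathcal{F}(\varphi)$ is multi-pointed and $\mathcal{H}(\varphi)$ is pointed, the paper (using the extended Definition~\ref{def:update equivalence} from the appendix) also checks that applicability in any pointed $(\mathcal{M},w)$ coincides and that the resulting \emph{actual} worlds are $Z$-related---both are easy, but they do not ``dissolve'' by quoting the non-pointed clause alone.
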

\begin{proof}[Proof Sketch]	
Let $\varphi = \ell(p_1) \land \cdots \land \ell(p_n)$ and let $\mathcal{M}$ be any Kripke model. We define a bisimulation relation  
from $\mathcal{M}\otimes\mathcal{H}(\varphi) = (W,R,V)$ to $\mathcal{M}\otimes\mathcal{F}(\varphi) = (W',R',V')$ by $Z= \{ ((w,e),(w',e')) \in W \times W' \mid  w = w' \text{ and for all } i = 1,\dots,n: \ell(p_i) \in e \text{ iff } \ell(p_i) \in e' \}$. In other words, we match each event $\bigwedge_{p \in S} \ell(p)$ of $\mathcal{H}(\varphi)$ with all events of $\mathcal{F}(\varphi)$ that are of the form $\bigwedge_{p \in S} \ell(p) \land \cdots$. Then the proof proceeds by showing that $Z$ satisfies the [Atom], [Forth] and [Back] conditions of being a bisimulation relation. 
\end{proof}
This theorem implies that the event models of Figs.~\ref{fig2:comparison-left} and~\ref{fig2:comparison-right} are equivalent, but the latter is clearly simpler and easier to follow. As it turns out, it is also exponentially smaller (Thm.~\ref{thm:exponential-succinctness}).  
Moreover, while $\mathcal{F}(\varphi)$ 
is multi-pointed, 
$\mathcal{H}(\varphi)$
is only pointed. Hence, by using edge-conditioned event models, we achieve several advantages in terms of simplicity and succinctness. There are also conceptual advantages. By separating the preconditions from the edge-conditions, these models introduce a conceptual distinction between the informational content pertaining to the event itself, which belongs to the event's preconditions, and information determining which events are considered possible by an agent, which is now a condition on the accessibility relation. Information about agents' attention is of the latter kind, as it pertains to the agent's perspective on the event (see also \cite{watzl2017structuring}, Ch. 13). 
It is not a part of the information that is revealed, but rather a condition on the accessibility relations (i.e., an edge-condition). Standard event models instead merge these distinct information types into a single construct, namely the preconditions.

\begin{example}\label{ex: updated propositional attention} Continuing Example \ref{ex:static}, Ann has now reviewed the applicant's CV. She has learned that they published several papers in top-tier journals, but has not learned about their contribution to diversity, as she did not pay attention to it. The AI agent still has no information about what Ann paid attention to, and so thinks that Ann may have learned any part of the CV. This situation is depicted in Fig.~\ref{figure: update of propositional attention}, where e.g. $(\mathcal{M}\otimes\mathcal{H}(p\wedge q),(w,e))\vDash B_a p\wedge \neg B_a q \wedge \neg B_a \neg q$: Ann has only learned about the candidate's publications, overlooking their contributions to diversity. We also have $(\mathcal{M}\otimes\mathcal{H}(p\wedge q),(w,e))\vDash \neg B_b A_a p\wedge \neg B_b \neg A_a p \wedge \neg B_b A_a q\wedge \neg B_b \neg A_a q$: the AI agent still has no information about Ann's attention.
\begin{figure}
\begin{center}
\begin{tikzpicture}[auto]\tikzset{ 
		deepsquare/.style ={rectangle,draw=black, inner sep=2.5pt, very thin, dashed, minimum height=3pt, minimum width=1pt, text centered}, 
		world/.style={}, 
		actual/.style={},
		cloudy/.style={cloud, cloud puffs=15, cloud ignores aspect, inner sep=0.5pt, draw}
	}
	
	\node [actual] (!) {$\underline{\s \substack{p, q \\A_a p, \neg A_aq, \\ A_b p, A_bq}}$};
	\path (!) edge [-latex, loop, looseness=4,in=105,out=70] node [above] {$\s b$} (!);
	
	\node [world, right=of !, yshift=10pt] (1) {$\s p,q?$};
	\node [deepsquare, fit={(1)}](square1) {};
	\path (square1) edge [-latex, loop, looseness=4.6,in=110,out=70] (square1) node [right, xshift=2pt, yshift=15pt] {$\s a,b$};
	\path (!) edge [-latex] (square1) node [above, xshift=35pt, yshift=4pt] {$\s a$};
	
	\node [world, right=of 1, xshift=-5pt] (2) {$\s p,q$};
	\path (2) edge [-latex, looseness=6,in=110,out=60] (2) node [right, xshift=5pt, yshift=12pt] {$\s a,b$};

	\node [world, right=of 2, xshift=-5pt] (3) {$\s p?,q$};
	\node [deepsquare, fit={(3)}](square3) {};
	\path (square3) edge [-latex, looseness=4.4,in=110,out=70] (square3) node [right, yshift=15pt, xshift=2pt] {$\s a,b$};
	
	\node [world, right=of 3, xshift=-5pt] (4) {$\s p?,q?$};
	\node [deepsquare, fit={(4)}](square4) {};
	\path (square4) edge [-latex, looseness=4.6,in=110,out=70] (square4) node [right, yshift=15pt, xshift=2pt] {$\s a,b$};
	
	\node[left=of 1, xshift=29pt] (anchor1) {};
	\node[above=of 1, yshift=-20pt] (anchor2) {};
	\node[below=of 1, yshift=31pt] (anchor3) {};
	\node[right=of 4, xshift=-29pt] (anchor4) {};
	\node [deepsquare,fit={(anchor1)(anchor2)(anchor3)(anchor4)(1)(2)(3)(4)}](squareb) {};
	\node[fill=white] (square name 1) at (squareb.north) [yshift=-0pt]{$\s A_{Ag} p, A_{Ag}q$};

	\node [world, below=of 1, yshift=13pt] (1b) {$\s A_ap,\neg A_aq$};
	\path ([yshift=-3pt]1b.north) edge [-latex] (square1.south) node [right, yshift=8pt] {$\s a$};
	
	\node [world, below=of 2, right=of 1b, xshift=-26.5pt] (2b) {$\s A_ap,A_aq$};
	\path ([yshift=-3pt]2b.north) edge [-latex] (2.south) node [right, yshift=8pt] {$\s a$};
	
	\node [world, below=of 3, right=of 2b, xshift=-26.5pt] (3b) {$\s \neg A_ap,A_aq$};
	\path ([yshift=-3pt]3b.north) edge [-latex] (square3.south) node [right, yshift=8pt] {$\s a$};			
	
	\node [world, below=of 3, right=of 3b, xshift=-28pt] (4b) {$\s \neg A_ap,\neg A_aq$};
	\path ([yshift=-3pt]4b.north) edge [-latex] (square4.south) node [right, yshift=8pt] {$\s a$};			
	
	\node [deepsquare,fit={(1b)(2b)(3b)(4b)}](squareb) {};
	\node[fill=white] (square name 1) at (squareb.south) [yshift=-0pt]{$\s p, q, A_b p, A_b q$};
	\path (squareb) edge [-latex, looseness=5,in=182,out=177] (squareb) node [left, xshift=-102pt, yshift=0pt] {$\s b$};
	
	\path (!) edge [-latex] (squareb.north west) node [above, xshift=30pt, yshift=-10pt] {$\s b$};
	
\end{tikzpicture}
\end{center}

\vspace{-3mm}
\caption{The pointed Kripke model $(\mathcal{M}\otimes\mathcal{H}(p\land q),(w,e))$ for $\mathcal{L}_{\text{PA}^+}$, with $\mathcal{H}(p\land q)$ given in Fig.~\ref{fig2:comparison-right}. We use the same conventions as before and omit worlds that are inaccessible by all agents.
}\label{figure: update of propositional attention}
\end{figure}
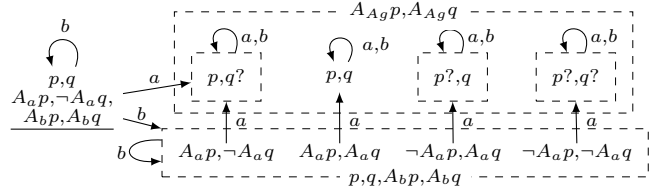 
\end{example}


\section{Expressivity and succinctness} 
Before introducing our logic of general attention, we examine key  
properties 
of edge-conditioned event models,
which may be of independent interest for DEL and its applications.  
\begin{definition}[Transformation of standard event models into edge-conditioned event models]
\label{def: translation SEM into ECEM}
By mutual recursion, we define mappings $T_1: \sem(\mathcal{L}_\text{DEL}) \to \ecem(\mathcal{L}_\text{ECM})$ and
$T_2: \mathcal{L}_\text{DEL} \to \mathcal{L}_\text{ECM}$. 
Define $T_1$ by $T_1(E,Q,pre) = (E,Q',pre')$ where for each $a \in Ag$,  
$Q'_a=\{(e{:}\top,f{:}\top)\colon (e,f)\in Q_a\}$, and for each $e \in E$, $pre'(e) = T_2(pre(e))$. Define $T_2$ by:
\begingroup
\setlength{\abovedisplayskip}{2pt}
\setlength{\belowdisplayskip}{2pt}	
\[
\begin{array}{ll} 
T_2(p) = p, \text{ for }  p \in P &
T_2(\neg \varphi) = \neg T_2 (\varphi) \\
T_2(\varphi \land \psi) = T_2(\varphi) \land T_2(\psi) &
T_2(B_a \varphi) = B_a T_2(\varphi) \\
T_2([(\mathcal{E},e)]\varphi) = [(T_1(\mathcal{E}),e) ] T_2(\varphi) 
\end{array}
\]
\endgroup
\end{definition}	
\begin{theorem}\label{prop: equivalence SEM into EDGY}
$\mathcal{E} \in \sem(\mathcal{L}_\text{DEL})$ 
is update equivalent to
$T_1(\mathcal{E})$.
\end{theorem}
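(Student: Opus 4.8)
The plan is to prove the theorem by a mutual induction that establishes simultaneously the following two claims:
(a) for every formula $\varphi \in \mathcal{L}_\text{DEL}$ and every pointed Kripke model $(\mathcal{M},w)$ with atom set $P$, we have $(\mathcal{M},w) \vDash \varphi$ iff $(\mathcal{M},w) \vDash T_2(\varphi)$; and
(b) for every $\mathcal{E} \in \sem(\mathcal{L}_\text{DEL})$ and every Kripke model $\mathcal{M}$ with atom set $P$, the Kripke models $\mathcal{M} \otimes \mathcal{E}$ and $\mathcal{M} \otimes T_1(\mathcal{E})$ are literally identical — same worlds, same accessibility relations, same valuation. Since identical models are bisimilar, (b) immediately yields that $\mathcal{E}$ is update equivalent to $T_1(\mathcal{E})$, which is the statement of the theorem. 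The mutual induction is needed because $T_1$ does not keep preconditions fixed: it replaces each $pre(e)$ by $T_2(pre(e))$. So (b) depends on (a) applied to preconditions, while (a) in the dynamic case depends on (b). The two claims are ordered by the (well-founded) subterm ordering on the combined syntax trees of formulas and event models: in (b) we only invoke (a) on the strictly smaller formulas $pre(e)$, and in the case $\varphi = [(\mathcal{E},e)]\psi$ of (a) we invoke (b) on the strictly smaller $\mathcal{E}$ and (a) on the strictly smaller $\psi$.

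For (b), fix $\mathcal{E} = (E,Q,pre)$ and write $T_1(\mathcal{E}) = (E,Q',pre')$ as in Definition~\ref{def: translation SEM into ECEM}, so $pre'(e) = T_2(pre(e))$ and $Q'_a = \{(e{:}\top, f{:}\top) : (e,f)\in Q_a\}$. Let $\mathcal{M} = (W,R,V)$. We compare $\mathcal{M} \otimes \mathcal{E}$ (Def.~\ref{def: product update no post}) with $\mathcal{M} \otimes T_1(\mathcal{E})$ (Def.~\ref{def:edge-conditioned-product-update}) component by component. Worlds: $(w,e)$ is a world of $\mathcal{M}\otimes\mathcal{E}$ iff $(\mathcal{M},w) \vDash pre(e)$, which by claim (a) applied to $pre(e)$ holds iff $(\mathcal{M},w) \vDash T_2(pre(e)) = pre'(e)$, i.e.\ iff $(w,e)$ is a world of $\mathcal{M}\otimes T_1(\mathcal{E})$. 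Valuations agree because both updates set $V'((w,e)) = V(w) \cap P$. Accessibility: for the standard update, $((w,e),(v,f)) \in R'_a$ iff $(w,v)\in R_a$ and $(e,f) \in Q_a$; for the edge-conditioned update, $((w,e),(v,f)) \in R'_a$ iff $(w,v)\in R_a$ and there exist $\varphi,\psi$ with $(e{:}\varphi,f{:}\psi) \in Q'_a$, $(\mathcal{M},w)\vDash\varphi$ and $(\mathcal{M},v)\vDash\psi$. By construction of $Q'_a$ the only candidate pair is $\varphi = \psi = \top$, and $\top$ holds everywhere, so this condition is equivalent to $(e,f)\in Q_a$. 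Hence the relations coincide, and $\mathcal{M}\otimes\mathcal{E} = \mathcal{M}\otimes T_1(\mathcal{E})$.

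For (a), induct on $\varphi$. If $\varphi$ is $\top$, an atom $p\in P$, or of the form $\neg\psi$, $\psi\land\chi$, or $B_a\psi$, then $T_2$ commutes with the construct and the claim is immediate from the induction hypothesis, using that the semantic clauses for these constructs are identical in $\mathcal{L}_\text{DEL}$ and $\mathcal{L}_\text{ECM}$. The only substantive case is $\varphi = [(\mathcal{E},e)]\psi$, where $T_2(\varphi) = [(T_1(\mathcal{E}),e)]T_2(\psi)$. First, $(\mathcal{E},e)$ is applicable in $(\mathcal{M},w)$ iff $(\mathcal{M},w)\vDash pre(e)$, which by the induction hypothesis (a) for $pre(e)$ holds iff $(\mathcal{M},w)\vDash T_2(pre(e)) = pre'(e)$, i.e.\ iff $(T_1(\mathcal{E}),e)$ is applicable in $(\mathcal{M},w)$. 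When they are applicable, claim (b) for $\mathcal{E}$ gives $\mathcal{M}\otimes\mathcal{E} = \mathcal{M}\otimes T_1(\mathcal{E})$, and the designated world is $(w,e)$ on both sides, so $(\mathcal{M},w)\otimes(\mathcal{E},e)$ and $(\mathcal{M},w)\otimes(T_1(\mathcal{E}),e)$ are the same pointed Kripke model; applying the induction hypothesis (a) for $\psi$ to it yields $(\mathcal{M},w)\otimes(\mathcal{E},e)\vDash\psi$ iff $(\mathcal{M},w)\otimes(T_1(\mathcal{E}),e)\vDash T_2(\psi)$. Combining the two observations through the semantic clause for $[\cdot]$ gives $(\mathcal{M},w)\vDash[(\mathcal{E},e)]\psi$ iff $(\mathcal{M},w)\vDash[(T_1(\mathcal{E}),e)]T_2(\psi)$, as required.

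The only real subtlety — and the step I would be most careful about — is the bookkeeping that makes the mutual induction well-founded: one must fix once and for all a well-founded measure (e.g.\ the subterm ordering on the nested syntax of event models and formulas) under which $pre(e)$ counts as smaller than $(\mathcal{E},e)$ and than $\mathcal{E}$, and $\psi$ counts as smaller than $[(\mathcal{E},e)]\psi$, and then check that every recursive appeal in the two claims strictly decreases it. Everything else is a routine unfolding of the two product-update definitions; in fact the argument proves something marginally stronger than update equivalence, namely that $T_1$ leaves every product update unchanged on the nose.
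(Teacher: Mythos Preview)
Your proof is correct and follows essentially the same approach as the paper: both show that $\mathcal{M}\otimes\mathcal{E}$ and $\mathcal{M}\otimes T_1(\mathcal{E})$ are literally identical (same worlds, edges, valuation), the key observation being that replacing $(e,f)$ by $(e{:}\top,f{:}\top)$ changes nothing, and both handle the mutual recursion between $T_1$ and $T_2$ by a well-founded induction. The only difference is packaging: the paper stratifies the induction by the \emph{level} of $\mathcal{E}$ in the DEL language hierarchy and proves a separate ``Claim~1'' (your~(a)) inside each level, whereas you run a single simultaneous induction on the subterm ordering of formulas and event models --- a slightly cleaner but equivalent organization.
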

\begin{proof}[Proof Sketch.] We show that for any Kripke model $\mathcal{M}$ and  $\mathcal{E} \in \sem(\mathcal{L}_\text{DEL})$, 
$\mathcal{M}\otimes\mathcal{E}$ is isomorphic to $\mathcal{M}\otimes T_1(\mathcal{E})$, i.e.\ they have the same worlds and accessibility relations. The proof is by induction on the level of $\mathcal{E}$ in the DEL language hiearchy~\cite{kooi2011generalized}, i.e. on the depth of nesting of event model modalities inside $\mathcal{E}$. The proof is straightforward as the translation $T_1$ only (recursively) replaces edges $(e,f)$ with edges $(e{:}\top,f{:}\top)$ having trivial source and target conditions.\footnote{A transformation of standard event models into edge-conditioned event models was first provided by \cite[p. 161]{li2023phdthesis}. Notice that their tranformation is only for event models that do not contain dynamic preconditions, whereas ours includes them too.}
\end{proof}
Theorem~\ref{prop: equivalence SEM into EDGY} shows that any standard event model can be equivalently  represented as an edge-conditioned one. Hence, $\mathcal{L}_\text{ECM}$ is at least as expressive as $\mathcal{L}_\text{DEL}$. It turns out they are equally expressive, as any edge-conditioned event model can also be transformed into an equivalent standard one, 
using a construction inspired by 
Kooi and Renne~%
[\citeyear[Thm 4.7]{kooi2011generalized}]:
\begin{definition}[Transformation of edge-conditioned event models into standard event models]
\label{def: transformation edgys to SEMs}
\label{def: edgies into standard}
For any edge-conditioned event model $(E,Q,pre)$ for $\mathcal{L}_\text{ECM}$ and any $e \in E$, let $\Phi(e)$ be the set of source and target conditions at $e$, i.e.\  $\Phi(e)=\{\varphi\in\mathcal{L}_\text{ECM}\colon (e{:}\varphi,f{:}\psi)\in Q_a\}\cup \{\psi\in\mathcal{L}_\text{ECM}\colon (f{:}\varphi,e{:}\psi)\in Q_a\}$. Set $\Phi'(e)=\Phi(e)\cup \{\neg\varphi\colon \varphi\in \Phi(e)\}$ and let $\mathsf{mc}(e)$ denote the set of maximally consistent subsets of $\Phi'(e)$.
By mutual recursion, we define mappings $T_1': \ecem(\mathcal{L}_\text{ECM}) \to \sem(\mathcal{L}_\text{DEL})$ 
and $T_2':\mathcal{L}_\text{ECM} \to \mathcal{L}_\text{DEL}$. The mapping $T_2'$ is as $T_2$ of Definition \ref{def: translation SEM into ECEM}, except we replace $T_i$ by $T_i'$ and $\mathcal{E}$ by $\mathcal{C}$. We define $T_1'$ by 
$T_1'(E,Q,pre) = (E',Q',pre')$ where:
\begingroup 
\setlength{\abovedisplayskip}{2pt}
\setlength{\belowdisplayskip}{2pt}
\[
\begin{array}{l}
E'= \{(e,\Gamma)\colon e\in E, \Gamma\in \mathsf{mc}(e)\}, \\
Q'_a  =  \{((e,\Gamma),(e',\Gamma'))\in E'\times E' \colon (e{:}\varphi, e'{:}\varphi')\in Q_a, \\
\qquad \quad \varphi\in \Gamma, \varphi'\in \Gamma'\}, \\
pre'((e,\Gamma)) = T_2'(pre(e)\wedge \bigwedge\Gamma).
\end{array}
\]
\endgroup
\end{definition}

\begin{theorem}\label{prop: equivalence EDGY to SEM}
$\mathcal{C} \in \ecem(\mathcal{L}_\text{ECM})$ is update equivalent to $T_1'(\mathcal{C})$.
\end{theorem}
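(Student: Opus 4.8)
The plan is to mirror the strategy of Theorem~\ref{prop: equivalence SEM into EDGY}, but now exhibiting an explicit bisimulation rather than an isomorphism, since $T_1'$ genuinely splits each event into several copies. I would prove, by simultaneous induction on the level $n$ of $\mathcal{C}$ (resp.\ of $\varphi$) in the DEL language hierarchy, the conjunction of: \textbf{(i)} for every $\mathcal{C}\in\ecem(\mathcal{L}_\text{ECM})$ of level $\le n$ and every Kripke model $\mathcal{M}$ with atom set $P$, $\mathcal{M}\otimes\mathcal{C}$ is bisimilar to $\mathcal{M}\otimes T_1'(\mathcal{C})$; and \textbf{(ii)} for every $\varphi\in\mathcal{L}_\text{ECM}$ of level $\le n$ and every pointed Kripke model $(\mathcal{M},w)$, $(\mathcal{M},w)\vDash\varphi$ iff $(\mathcal{M},w)\vDash T_2'(\varphi)$. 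Here the level of an edge-conditioned event model is taken as the maximum nesting depth of event-model modalities occurring in its preconditions \emph{and} in its source/target conditions. Statement (ii) at level $n$ follows from (i) at level $<n$ by a routine sub-induction on the structure of $\varphi$: the only nontrivial case, $[(\mathcal{C}',e)]\psi$, uses that update-equivalent event models yield bisimilar, hence modally equivalent, updated Kripke models, exactly as in the completeness-style reduction arguments for Theorem~\ref{thm: soundness and completeness}.

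For the core of the argument, statement (i), fix $\mathcal{M}=(W,R,V)$ and $\mathcal{C}=(E,Q,pre)$, write $T_1'(\mathcal{C})=(E',Q',pre')$, $\mathcal{M}\otimes\mathcal{C}=(W_1,R_1,V_1)$ and $\mathcal{M}\otimes T_1'(\mathcal{C})=(W_2,R_2,V_2)$. For $e\in E$ and $w\in W$ let $\Gamma_w^e=\{\chi\in\Phi'(e)\colon(\mathcal{M},w)\vDash\chi\}$; since $\Phi'(e)$ is closed under negation, $\Gamma_w^e\in\mathsf{mc}(e)$. Define $Z=\{((w,e),(w,(e,\Gamma)))\in W_1\times W_2\colon\Gamma=\Gamma_w^e\}$. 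I would then check that $Z$ is total and surjective and satisfies [Atom], [Forth], [Back]. [Atom] is immediate, as both product updates copy $V(w)$ unchanged on atoms. The recurring lemma is a two-way reading of membership in $W_2$: if $(\mathcal{M},w)\vDash pre(e)$ then $(\mathcal{M},w)\vDash pre(e)\wedge\bigwedge\Gamma_w^e$, whence by (ii) applied to this lower-level formula $(\mathcal{M},w)\vDash T_2'(pre(e)\wedge\bigwedge\Gamma_w^e)=pre'((e,\Gamma_w^e))$, so $(w,(e,\Gamma_w^e))\in W_2$; conversely, if $(w,(e,\Gamma))\in W_2$ then $(\mathcal{M},w)\vDash pre'((e,\Gamma))$, so by (ii) $(\mathcal{M},w)\vDash pre(e)\wedge\bigwedge\Gamma$, giving $(w,e)\in W_1$ and, since $\Gamma$ is maximal in the negation-closed set $\Phi'(e)$ and all its members hold at $w$, forcing $\Gamma=\Gamma_w^e$. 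This yields both totality and surjectivity of $Z$.

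For [Forth], suppose $((w,e),(w,\Gamma_w^e))\in Z$ and $(w,e)\,R_{1,a}\,(v,f)$; then $(w,v)\in R_a$ and there are $\varphi,\psi$ with $(e{:}\varphi,f{:}\psi)\in Q_a$, $(\mathcal{M},w)\vDash\varphi$, $(\mathcal{M},v)\vDash\psi$. Since $\varphi\in\Phi(e)\subseteq\Phi'(e)$ is true at $w$, $\varphi\in\Gamma_w^e$; since $\psi\in\Phi(f)\subseteq\Phi'(f)$ is true at $v$, $\psi\in\Gamma_v^f$. By the definition of $Q'_a$, $((e,\Gamma_w^e),(f,\Gamma_v^f))\in Q'_a$, and $(v,(f,\Gamma_v^f))\in W_2$ by the lemma; hence $(w,\Gamma_w^e)\,R_{2,a}\,(v,\Gamma_v^f)$ and $((v,f),(v,\Gamma_v^f))\in Z$. [Back] is symmetric: from $((e,\Gamma),(f,\Delta))\in Q'_a$ one gets $(e{:}\varphi,f{:}\varphi')\in Q_a$ with $\varphi\in\Gamma$ and $\varphi'\in\Delta$; the lemma (applied at $w$ and at $v$) gives $\Gamma=\Gamma_w^e$ and $\Delta=\Gamma_v^f$, so $(\mathcal{M},w)\vDash\varphi$, $(\mathcal{M},v)\vDash pre(f)$, and $(\mathcal{M},v)\vDash\bigwedge\Delta$, hence $(\mathcal{M},v)\vDash\varphi'$; therefore $(v,f)\in W_1$, $(w,e)\,R_{1,a}\,(v,f)$, and $((v,f),(v,\Delta))\in Z$.

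The main obstacle I anticipate is not any single hard step but the bookkeeping of the mutual induction: one must set up the level so that $pre(e)\wedge\bigwedge\Gamma$ genuinely has strictly smaller level than $\mathcal{C}$ (requiring that edge-conditions count toward the level), and one must justify carefully that a maximally consistent subset $\Delta$ of the negation-closed set $\Phi'(f)$ which is jointly satisfied at a world must coincide with the set of its members true there. Once these points are in place, the [Atom]/[Forth]/[Back] verification is a routine parallel of the proof of Theorem~\ref{prop:F-H-equivalence}, and update equivalence of $\mathcal{C}$ and $T_1'(\mathcal{C})$ follows.
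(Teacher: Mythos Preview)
Your proposal is correct and follows essentially the same route as the paper: induction on the level in the language hierarchy, the mutual semantic-preservation lemma for $T_2'$, and the key observation that each $(w,e)$ determines a unique $\Gamma_w^e\in\mathsf{mc}(e)$ satisfied at $w$. The one point worth noting is that your relation $Z$ is in fact the graph of a bijection---only the single copy $(e,\Gamma_w^e)$ of $e$ has its precondition satisfied at $w$---so the paper states and proves the slightly sharper conclusion that $\mathcal{M}\otimes\mathcal{C}$ and $\mathcal{M}\otimes T_1'(\mathcal{C})$ are \emph{isomorphic}, not merely bisimilar; your remark that ``$T_1'$ genuinely splits each event into several copies'' is true at the level of event models but evaporates after taking the product with $\mathcal{M}$.
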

\begin{proof}[Proof Sketch.]Same proof idea as for Theorem~\ref{prop: equivalence SEM into EDGY}, now using techniques from
	Kooi and Renne~[\citeyear[Thm 4.7]{kooi2011generalized}].
\end{proof}
We have now shown equal expressivity between standard event models and edge-conditioned ones, and hence also equal expressivity of $\mathcal{L}_\text{DEL}$ and $\mathcal{L}_\text{ECM}$. Next, we 
show that edge-conditioned event models are more succinct than standard ones (can be exponentially smaller, Thm.~\ref{thm:exponential-succinctness}, and never more than linearly larger, Thm.~\ref{thm:linear-growth-SEM}). We use $|\mathcal{E}|$ for the size of an event model
 $\mathcal{E}$ (standardly defined and straightforwardly generalized to edge-conditioned event models, \if\extendedversion 0 see the extended version on Arxiv\else see appendix\fi). 
\begin{theorem} 
\label{thm:linear-growth-SEM}
For any $\mathcal{E} \in \sem(\mathcal{L}_\text{DEL})$, $T_1(\mathcal{E})$ has size $O(|\mathcal{E}|)$.
\end{theorem}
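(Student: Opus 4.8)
The plan is to track how the transformation $T_1$ of Definition~\ref{def: translation SEM into ECEM} affects the size of an event model, and to argue that it never blows it up. Recall that for $\mathcal{E} = (E,Q,pre)$, the transformation $T_1(\mathcal{E}) = (E,Q',pre')$ keeps the same event set $E$, replaces each edge $(e,f) \in Q_a$ with a single conditioned edge $(e{:}\top, f{:}\top) \in Q'_a$, and sets $pre'(e) = T_2(pre(e))$. So there are exactly two places where size could change: the edge conditions (trivially $\top$, hence constant size) and the rewritten preconditions $T_2(pre(e))$. The key observation is that $T_2$ is a structural recursion that replaces each subformula by one of comparable size — the only nontrivial clause is $T_2([(\mathcal{E}',e')]\varphi) = [(T_1(\mathcal{E}'),e')]T_2(\varphi)$, which recursively applies $T_1$ to nested event models.

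First I would fix a precise notion of size: $|\mathcal{E}|$ counts events, edges, and the sizes of all preconditions (and, for edge-conditioned models, edge conditions), where a formula $[(\mathcal{E}',e')]\varphi$ has the size of $\mathcal{E}'$ plus the size of $\varphi$ plus a constant, as usual. Then I would prove the claim by induction on the level of $\mathcal{E}$ in the DEL language hierarchy (the same induction measure used in the proof sketch of Theorem~\ref{prop: equivalence SEM into EDGY}) — i.e.\ on the maximal depth of nesting of event-model modalities occurring inside $\mathcal{E}$. For the base case, $\mathcal{E}$ contains no nested event models, so each $pre(e) \in \mathcal{L}_\text{EL}$, and $T_2$ acts as the identity on $\mathcal{L}_\text{EL}$-formulas; hence $pre'(e) = pre(e)$, each new edge has the constant-size conditions $\top,\top$, and $|T_1(\mathcal{E})| = |\mathcal{E}| + O(\text{number of edges}) = O(|\mathcal{E}|)$. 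For the inductive step, every nested event model $\mathcal{E}'$ appearing in some $pre(e)$ has strictly smaller level, so by the induction hypothesis $|T_1(\mathcal{E}')| = O(|\mathcal{E}'|)$; a structural induction on formulas (piggybacked inside this step) then gives $|T_2(\psi)| = O(|\psi|)$ for every subformula $\psi$ of every $pre(e)$, since every clause of $T_2$ either leaves the size unchanged or replaces nested event models by ones of linearly bounded size. Summing over all events gives $\sum_e |pre'(e)| = O(\sum_e |pre(e)|)$, and adding the $O(|E| + |{\bigcup_a Q_a}|)$ contribution from events and (trivially conditioned) edges yields $|T_1(\mathcal{E})| = O(|\mathcal{E}|)$.

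The one subtlety worth flagging — and the closest thing to an obstacle — is making sure the two recursions ($T_1$ on event models, $T_2$ on formulas, which are mutually recursive) are handled by a well-founded measure so the induction does not become circular: this is exactly why the outer induction should be on the language-hierarchy level rather than on, say, formula size, since $T_2$ applied to a precondition can invoke $T_1$ on a model of the same syntactic complexity but lower hierarchy level. Once the measure is chosen correctly, the argument is entirely routine bookkeeping, and the linear (in fact, essentially size-preserving up to the additive cost of the trivial $\top$ conditions on edges) bound follows. I would conclude by remarking that the constant hidden in the $O(\cdot)$ is small and uniform — it does not depend on $\mathcal{E}$, only on the fixed choice of how $\top$ and the modality brackets are counted — so the statement holds as stated.
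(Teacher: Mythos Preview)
Your proposal is correct and follows essentially the same approach as the paper: outer induction on the language-hierarchy level of $\mathcal{E}$, with an inner structural induction on formulas to bound $|T_2(\gamma)|$ in the inductive step. The one place where the paper is tighter is exactly the subtlety you flag at the end: rather than carrying a big-$O$ bound through the induction and then asserting the hidden constant is uniform, the paper strengthens the induction hypothesis to the explicit inequality $|T_1(\mathcal{E})| \leq 3|\mathcal{E}|$ (via $|pre'(e)| \leq 3|pre(e)|$ and $size(Q_a') \leq 3|Q_a|$), which makes the inner Claim~1, $|T_2(\gamma)| \leq 3|\gamma|$, go through cleanly without any risk of constant-creep across levels.
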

\begin{proof}[Proof Sketch]
As mentioned in the proof sketch of Theorem~\ref{prop: equivalence SEM into EDGY}, 
$T_1(\mathcal{E})$ is achieved from $\mathcal{E}$ by (recursively) adding $\top$ as both source and target condition on each edge $(e,f)$, and this only gives a linear blowup in size.  
\end{proof}



\begin{theorem}[Exponential succinctness of edge-conditioned event models]
\label{thm:exponential-succinctness}
Let $p \in P$ and let $n = |Ag|$. Then:

\vspace{-1mm} 
\begin{enumerate}\setlength{\itemsep}{0pt}
\item $\mathcal{H}(p)$ is of size $O(n)$.
\item Any standard event model that is update equivalent to $\mathcal{H}(p)$ has at least $2^n$ events.
\end{enumerate}
\end{theorem}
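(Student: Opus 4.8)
The first part is a routine size computation: $\mathcal{H}(p)$ is $\mathcal{H}(\varphi)$ for $\varphi = \ell(p) = p$, so $At(\varphi) = \{p\}$, and the event set is $E = \{\bigwedge_{q\in S}\ell(q) : S \subseteq \{p\}\} = \{\top, p\}$, i.e.\ exactly two events. For each agent $a$, the edge set $Q_a$ ranges over pairs $T \subseteq S \subseteq \{p\}$, giving three conditioned edges per agent, each with source/target conditions of constant size. Hence $|\mathcal{H}(p)|$ is dominated by $\sum_{a\in Ag} |Q_a| = 3n$ plus the constant-size event and precondition data, so $|\mathcal{H}(p)| = O(n)$. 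I would just state this with the explicit enumeration.

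For the second part, the strategy is to exhibit a single Kripke model $\mathcal{M}$ on which the product update $\mathcal{M}\otimes\mathcal{H}(p)$ has $2^n$ worlds that are pairwise non-bisimilar, and then argue that any standard event model $\mathcal{E}$ update equivalent to $\mathcal{H}(p)$ must, when applied to that same $\mathcal{M}$, produce a bisimilar (hence also $2^n$-world-distinguishing) Kripke model — but $|\mathcal{M}\otimes\mathcal{E}| \le |\mathcal{M}|\cdot|\mathcal{E}|$, so with $|\mathcal{M}|$ bounded independently of $n$ we can force $|\mathcal{E}| \ge 2^n/|\mathcal{M}|$, or better, directly count distinguishable worlds. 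The cleanest version: take $\mathcal{M}$ to consist of a single world $w$ where $p$ holds together with all attention atoms needed, with full reflexive accessibility for every agent (so that $\mathcal{M}, w$ satisfies $pre(e)$ for the relevant events of both models and all edge conditions are "live"). Applying $\mathcal{H}(p)$: the point is $(w,p)$, and via the conditioned edges each agent $a$ either has an edge to the copy of $p$ or to the copy of $\top$ depending on whether $A_a p$ holds at $w$ — so by choosing $\mathcal{M}$ with two worlds differing only in the valuation of the $A_a p$'s I can make the set of reachable worlds encode which agents attend to $p$. To get the $2^n$ bound I instead want the updated model itself to contain $2^n$ non-bisimilar worlds; the natural way is: let $\mathcal{M}$ have worlds $w_X$ for $X \subseteq Ag$, where $w_X \vDash A_a p$ iff $a \in X$, all worlds satisfying $p$, with $R_a$ total on $W$. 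Then $\mathcal{M}\otimes\mathcal{H}(p)$ contains, for each $X$, the world $(w_X, p)$, and the pattern of $a$-successors of $(w_X,p)$ among $\{(w_Y,p),(w_Y,\top) : Y\}$ depends on $X$ — specifically $a$ has a $p$-target successor iff $a \in X$. Showing the $(w_X,p)$ are pairwise non-bisimilar: $(w_X,p)$ and $(w_{X'},p)$ differ because for $a$ in the symmetric difference, exactly one of them has an $a$-edge into a world satisfying $p\wedge A_ap$ while the other's $a$-edges all land in worlds satisfying $\neg p$ or $\neg A_ap$ — formalize via a distinguishing $\mathcal{L}_\text{EL}$-formula like $B_a(p \to A_a p)$ or $\neg B_a p$ combined with the valuation.

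The key step — and the main obstacle — is the non-bisimilarity argument and the passage to arbitrary standard event models. Because $\mathcal{H}(p)$ is update equivalent to $\mathcal{E}$, $\mathcal{M}\otimes\mathcal{E}$ is bisimilar to $\mathcal{M}\otimes\mathcal{H}(p)$; bisimilar pointed models satisfy the same formulas, and two worlds that are non-bisimilar in one model cannot be collapsed to bisimilar images, so $\mathcal{M}\otimes\mathcal{E}$ must also contain at least $2^n$ pairwise non-bisimilar worlds. Since $W(\mathcal{M}\otimes\mathcal{E}) \subseteq W(\mathcal{M}) \times E(\mathcal{E})$, we get $|W(\mathcal{M})|\cdot|E(\mathcal{E})| \ge 2^n$. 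The subtlety is that I need $|W(\mathcal{M})|$ not to itself be exponential in $n$ — but $|W(\mathcal{M})| = 2^n$ here, which only yields $|E(\mathcal{E})| \ge 1$. So the real work is choosing a smarter witness model $\mathcal{M}$ of size polynomial (ideally constant, e.g.\ two worlds) in $n$ whose update under $\mathcal{H}(p)$ still forces $2^n$ distinguishable events. The right construction: let $\mathcal{M}$ have worlds $u_0, u_1$ with $p, A_a p$ true at $u_1$ for all $a$, $p$ true and all $A_a p$ false at $u_0$, and $R_a = \{(u_0,u_0),(u_0,u_1),(u_1,u_1)\}$ for every $a$ — actually, to separate agents one needs the accessibility or valuation to break symmetry per agent, which a constant-size $\mathcal{M}$ cannot do. This tension is exactly the point of the theorem: the $2^n$ blow-up is unavoidable for standard event models because a standard event model has a fixed event set and the "which subset of agents attends to $p$" information must be realized by distinct events. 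So I would argue directly on $\mathcal{E}$: fix $\mathcal{E} = (E,Q,pre)$ update equivalent to $\mathcal{H}(p)$; for each $X \subseteq Ag$ build a Kripke model $\mathcal{M}_X$ (constant size, say one or two worlds) realizing "$A_a p$ holds iff $a\in X$" plus enough structure (e.g.\ a fresh $\neg p$-world accessible to all) so that in $\mathcal{M}_X \otimes \mathcal{H}(p)$ the point satisfies a formula $\chi_X$ that pins down $X$ (e.g.\ $\chi_X = \bigwedge_{a\in X}\neg B_a p \wedge \bigwedge_{a\notin X} B_a p$, since attending to $p$ means $a$ considers the $\top$-event possible and hence loses belief in $p$... adjust signs as the model dictates). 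Then $\mathcal{M}_X\otimes\mathcal{E} \vDash \chi_X$, and $\mathcal{M}_X\otimes\mathcal{E}$ has a point $(v_X, e_X)$ with $e_X \in E$; if $e_X = e_{X'}$ for $X \ne X'$ one derives a contradiction because by choosing the $\mathcal{M}_X$ to agree on the valuation of $p$ at the relevant worlds one forces $(v_X,e_X)$ and $(v_{X'},e_{X'})$ to satisfy the same $B_a p$-formulas — so the map $X \mapsto e_X$ is injective and $|E| \ge 2^n$. I expect the delicate part to be getting the witness models $\mathcal{M}_X$ uniform enough that $e_X$ genuinely depends only on $X$ via a contradiction, and I would lean on the reduction axiom for $B_a\psi$ from Table~\ref{tab:logic} to compute the beliefs in $\mathcal{M}_X\otimes\mathcal{E}$ cleanly.
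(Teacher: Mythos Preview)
Your Part~1 is correct and matches the paper's computation exactly: two events $\{p,\top\}$, three conditioned edges per agent of constant size, hence $|\mathcal{H}(p)| \in O(n)$.

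For Part~2, after correctly diagnosing why the single-big-model approach fails, you arrive at the same strategy the paper uses: a family of constant-size pointed models $\mathcal{M}_X$ indexed by $X \subseteq Ag$, and an injectivity argument for the map $X \mapsto e_X$ where $e_X$ is the actual event of $\mathcal{E}$ selected in $(\mathcal{M}_X,w)\otimes\mathcal{E}$. (Your signs are inverted: attending agents follow the loop at $p$ and \emph{keep} belief in $p$; non-attending agents take the edge to $\top$ and lose it. The paper's version of your $\chi_X$ is ``$(w_1,p)$ has an $a$-edge to a $\neg p$-world iff $a\notin X$''.)

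The genuine gap is your injectivity step. You write that if $e_X = e_{X'}$ then ``by choosing the $\mathcal{M}_X$ to agree on the valuation of $p$ at the relevant worlds one forces $(v_X,e_X)$ and $(v_{X'},e_{X'})$ to satisfy the same $B_a p$-formulas''. This is not enough: the $a$-successors of $(v_X,e_X)$ in $\mathcal{M}_X\otimes\mathcal{E}$ are the pairs $(w',f)$ with $(v_X,w')\in R_a$, $(e_X,f)\in Q_a$, and $(\mathcal{M}_X,w')\vDash pre(f)$. The preconditions $pre(f)$ are \emph{arbitrary} $\mathcal{L}_{\text{DEL}}$-formulas (possibly involving all attention atoms, belief modalities, and nested dynamic modalities), so agreement on the valuation of $p$ alone at $w'$ does not make $\{f : (\mathcal{M}_X,w')\vDash pre(f)\}$ independent of $X$. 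Without that independence, the same event $e_X$ could produce different successor patterns in different $\mathcal{M}_X$, and the contradiction dissolves.

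The paper's fix, which you are circling with ``uniform enough'', is to make the successor world a \emph{dead end}: take $\mathcal{M}_X = (\{w_1,w_2\},R,V)$ with $R_a = \{(w_1,w_2)\}$ for all $a$, $V(w_1) = \{p\}\cup\{A_a p : a\in X\}$, and $V(w_2)=\emptyset$. Since $w_2$ has no atoms true and no outgoing edges, the truth of \emph{any} formula at $(\mathcal{M}_X,w_2)$ is completely independent of $X$. Hence the set $F := \{f\in E : (\mathcal{M}_X,w_2)\vDash pre(f)\}$ is the same for every $X$, and ``$e_X$ has an $a$-edge into $F$'' becomes a property of $e_X$ in $\mathcal{E}$ alone. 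Combining this with the bisimulation to $\mathcal{M}_X\otimes\mathcal{H}(p)$ gives: $e_X$ has an $a$-edge into $F$ iff $a\notin X$. Distinct $X$ therefore force distinct outgoing-edge patterns, hence distinct $e_X$, hence $|E|\ge 2^n$. Your plan becomes a proof once you add this dead-end-sink observation.
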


\vspace{-4mm} 
\begin{proof}[Proof Sketch.]
Item 1 follows from inspecting Fig.~\ref{fig2:comparison-right}. 
 To prove 2, we consider any standard event model $\mathcal{E}$ update equivalent to $\mathcal{H}(p)$. For each subset $Ag' \subseteq Ag$, we define a pointed Kripke model $(\mathcal{M}_{Ag'},w_1) = ((W,R,V),w_1)$ with $W = \{w_1,w_2\}$, $R_a = \{(w_1,w_2)\}$ for all $a \in Ag$, $V(w_1) = \{p \} \cup \{A_a p: a \in Ag'\}$ and $V(w_2) = \emptyset$.   
Using Def.~\ref{def:edge-conditioned-product-update} and Def.~\ref{def:edge-conditioned-prop-attention}, we show that the actual world 
of $(\mathcal{M}_{Ag'},w_1) \otimes \mathcal{H}(p)$ has an $a$-edge to a $\neg p$-world iff $a \in Ag \setminus Ag'$. Since $\mathcal{E}$ is update equivalent to $\mathcal{H}(p)$, then also the actual world of $(\mathcal{M}_{Ag'},w_1) \otimes \mathcal{E}$ has an $a$-edge to a $\neg p$-world iff $a \in Ag \setminus Ag'$. From this it follows that $\mathcal{E}$ has an actual event $e_d^{Ag'}$ with an $a$-edge to an event applicable in $w_2$ iff $a \in Ag \setminus Ag'$. For $Ag' \neq Ag''$, the events $e_d^{Ag'}$ and $e_d^{Ag''}$ must hence have different outgoing edges and thus be distinct.  This implies that there are at least as many distinct events in $\mathcal{E}$ as there are subsets of $Ag$, i.e. at least $2^{|Ag|}$. 
  \end{proof}
This result shows that edge-conditioned event models can be exponentially smaller than their standard event model counterparts, and that exponential succinctness specifically holds for the event models for propositional attention (as comparing Figs.~\ref{fig2:comparison-left} and~\ref{fig2:comparison-right} also suggests).


\paragraph{Generalized arrow updates}
The \emph{generalized arrow updates} of Kooi and Renne~[\citeyear{kooi2011generalized}] are a distinct class of event models that are similar to our edge-conditioned event models in including source and target conditions. However, generalized arrow updates do not have preconditions, and so they do not straightforwardly generalize standard event models as our edge-conditioned event models do. They however still have the same expressivity as standard event models~\cite{kooi2011generalized}, and hence the same expressivity as edge-conditioned event models. A disadvantage of generalized arrow updates compared to edge-conditioned event models is that they can be less succinct than their standard event model counterparts~\cite[Thm.~3.14]{kooi2011generalized}. In the
\if\extendedversion 0 extended version of the paper on Arxiv\else appendix\fi, 
we show that edge-conditioned event models are also at least as succinct as generalized arrow updates, via  a result corresponding to Theorem~\ref{thm:linear-growth-SEM}. Therefore, edge-conditioned event models may be a good 
choice of event model formalism for DEL, as they are always at least as succinct as standard event models and generalized arrow updates, and sometimes exponentially more succinct. Furthermore, as they straightforwardly generalize standard event models, extensions such as with postconditions~\cite{ditmarsch2008semantic} become trivial for edge-conditioned event models, which is not true for generalized arrow updates.









\section{A Logic for General Attention}\label{sect:general}
We now generalize propositional attention to account for attention to, and revelation of, arbitrary formulas. 
The  \emph{language of general attention} $\mathcal{L}_\text{GA}$ is the language given by the grammar of $\mathcal{L}_\text{ECM}$ extended with the clause $\varphi ::= A_a \varphi$, where $a \in Ag$ and $A_a$ is a new modal operator.    
The formula $A_a \varphi$ reads ``agent $a$ pays attention to $\varphi$''. Note that, while in $\mathcal{L}_\text{PA}$ the formula $A_a p$ is an atom, in $\mathcal{L}_\text{GA}$ it is a modality applied to a propositional atom.  
Additionally, in $\mathcal{L}_\text{PA}$, the formula $A_a p$ reads ``agent $a$ is paying attention to \emph{whether} $p$'', whereas in $\mathcal{L}_\text{GA}$ it reads ``agent $a$ is paying attention \emph{to} $p$''. 


Moving to general attention allows to formalize many new scenarios, namely all those where agents attend to more complex stimuli than just conjunctions of literals. For example, we may have $A_a ((p\vee q)\rightarrow r)$, meaning that agent $a$ is paying attention to the conditional $(p\vee q)\rightarrow r$. Such a conditional may represent the statement of a mathematical theorem, and 
$A_a ((p\vee q)\rightarrow r)$ then says that agent $a$ is paying attention to it. 
Following \citeauthor{belardinelli2023attention}~[\citeyear{belardinelli2023attention}], we understand attention as being directed to truthful revelations.  
Attending to a theorem then means 
that, if the theorem is revealed, maybe as part of a larger revelation such as a research talk,
the agent will learn its truth value. 

Another application is one where agents may pay attention (or not) to the utterances of other agents.
Say that agent $a$ only pays attention to what agent $b$, but not agent $c$, says about $p$. 
In DEL, the truthful and public announcement of a formula $\varphi$ by an agent $i$ can be represented by the singleton event model where the actual event has precondition $B_i \varphi$ 
\cite{ditmarsch2023announced}. Such an announcement makes all agents know that $i$ believes $\varphi$. Then, to formalise the mentioned attention situation, 
we could use the formula $A_a B_b p\wedge A_a B_b \neg p \wedge \neg A_a B_c p \wedge \neg A_a B_c \neg p$: 
if agent $b$ truthfully announces the (believed) truth-value of $p$, then agent $a$ receives that announcement, but if agent $c$ does the same, 
$a$ receives nothing. 
Other scenarios can be modeled, such as attention to the attention of other agents. For example, the formula $A_a A_b p$ 
represents that $a$ pays attention to $b$ paying attention to $p$.
Besides added expressivity, it is also conceptually natural to treat attention as a modality, similarly to propositional attitudes such as belief, intention, and awareness.  
The language $\mathcal{L}_\text{GA}$ is interpreted in attention models:
\begin{definition}[Attention model]
\label{def: attention model}
An \emph{attention model} is a tuple $\mathcal{M} = (W,R,V,\mathcal{A})$ where $(W,R,V)$ is a Kripke model for $\mathcal{L}_\text{GA}$ 
and $\mathcal{A}: Ag \times W \to \mathcal{P}(\mathcal{L}_\text{GA})$ is an \emph{attention function}. 
For an \emph{actual world} $w$, $(\mathcal{M},w)$ is a \emph{pointed attention model}. 
\end{definition}
The notion of edge-conditioned product update (Def.~\ref{def:edge-conditioned-product-update}) immediately extends to attention models by defining $\mathcal{A}(a,(w,e)) = \mathcal{A}(a,w)$ for all $(w,e)$ of the updated model. 
This simply means that each world preserves its attention assignments, similarly to how we treat the valuation function in the product update.
The set of formulas that agent $a$ is paying attention to at world $w$ is the \emph{attention set} $\mathcal{A}(a,w)$, also denoted by $\mathcal{A}_a(w)$.
The truth of $\mathcal{L}_\text{GA}$ formulas is defined by the same clauses as for $\mathcal{L}_\text{ECM}$ with the following addition: 
$(\mathcal{M},w) \vDash A_a \varphi \text{ iff }  \varphi \in \mathcal{A}_a(w)$. 

This setting has clear similarities with the logic of general awareness \cite{fagin1988belief}. This does not mean that our framework reduces to a formalism for awareness, as the crucial aspect of attention separating it from awareness is that attention determines what agents learn, as we will see, which awareness does not.\footnote{Awareness is a static notion, introduced to restrict the formulas that agents may reason and have explicit beliefs about, as a solution to the logical omniscience problem  \cite{fagin1988belief}.}
Yet, the two frameworks are equivalent in terms of static language and models. We also do not place any restriction on agents' attention sets, similarly to what happens in the logic of general awareness. We may have that $\varphi\wedge \psi\in\mathcal{A}_a(w)$ but $\psi \wedge \varphi \not\in\mathcal{A}_a(w)$, or we may have that
$\varphi, \psi  \in\mathcal{A}_a(w)$ but $\varphi\wedge \psi\not\in\mathcal{A}_a(w)$. Whether this freedom is reasonable depends on the specific applications one has in mind, which may require to impose a range of closure properties on $\mathcal{A}_a$, called \emph{attention principles}. Consider the following examples and suggested applications:


\textbf{Conjunctive closure}: \emph{$\varphi\wedge\psi\in \mathcal{A}_a(w)$ iff $\varphi\in \mathcal{A}_a(w)$ and $\psi\in \mathcal{A}_a(w)$}.
This principle makes sense when representing `divided attention': every time an agent attends to $\varphi \wedge \psi$ it is as if she is dividing attention between $\varphi$ and $\psi$ (i.e.\ she attends to $\varphi$ and attend to $\psi$ separately) and vice versa.
This principle may not be natural in resource-limited settings, where it may be possible to attend to $\varphi$ and $\psi$ separately but not together.  


\textbf{Commutativity}:  \emph{$\varphi\wedge \psi\in \mathcal{A}_a(w)$ iff  $\psi\wedge\varphi\in \mathcal{A}_a(w)$}. 
We may treat the order of conjuncts 
as irrelevant, e.g.\ when abstracting from the temporal order of information presented.

\textbf{Sublanguage closure}: \emph{If $\varphi\in \mathcal{A}_a(w)$, and $\psi$ is a formula constructed from
atoms appearing in $\varphi$ (i.e.\ $At(\psi)\subseteq At(\varphi)$), then $\psi\in \mathcal{A}_a(w)$}. This principle may represent agents who are interested in a specific issue, and so their attention focuses on anything that talks about it.

\textbf{Subformula closure}: \emph{If $\varphi\in \mathcal{A}_a(w)$ and $\psi$ is a subformula of $\varphi$, then $\psi\in \mathcal{A}_a(w)$}. 
This principle is justified when modeling information that is such that agents cannot pay attention to it unless they attend to all its components. 


\textbf{Agent $a$ ignoring agent $b$}: \emph{$B_b\varphi \not\in\mathcal{A}_a(w)$ for all $\varphi$ (or all $\varphi$ that are about a certain issue)}. 
This can model agents who  
systematically do not pay attention to the utterances of other agents
, as in attention-driven social biases \cite{munton2023prejudice}.

\textbf{Agent $a$ attending to agent $b$}: \emph{$B_b\varphi \in\mathcal{A}_a(w)$ for all $\varphi$}. This models agents who systematically pay attention to the utterances of other agents.
We may also model agents who engage in social attention by adding to $\mathcal{A}_a(w)$ formulas involving nested belief or attention modalities, and e.g.\ study social learning \cite{rendell2010copy}.

\textbf{Attention introspection}: \emph{$(w,v) \in R_a$ implies $\mathcal{A}_a(w) = \mathcal{A}_a(v)$}. This principle models agents who have no doubts regarding what they pay attention to. It makes sense to assume it, e.g. when agents are deliberately focusing on something.

\smallskip
Notice that each property discussed above can be easily turned into an axiom schema, and each axiom schema can be used in an axiomatization of the modelled attention notion. As a few examples, the axiom schema $A_a \varphi \wedge A_a \psi \leftrightarrow A_a (\varphi \wedge \psi)$ represents conjunctive closure; $A_a (\varphi \wedge \psi)\leftrightarrow A_a (\psi\wedge \varphi)$ represents commutativity; 
the formulas $A_i\varphi \rightarrow B_iA_i\varphi$ and
$\neg A_i\varphi \rightarrow B_i\neg A_i\varphi$ represent attention introspection. Each of the considered principles corresponds to an axiom schema (or a combination of axioms).
 While many of these principles and their corresponding axioms are discussed in the awareness literature \cite{fagin1988belief}, in this settings they are interpreted differently, as attention is an intrinsically dynamic notion.
Going beyond them, the correspondence to awareness logic allows to study the relation between attention and awareness, rarely discussed in the literature (but see \cite{fritz2016standard,belardinelli2024phdthesis}).


\paragraph{Event models for general attention}
Suppose given a set 
$\Gamma\subseteq \mathcal{L}_\text{GA}$: the formulas that are revealed (or announced) by the occurring event. 
The intuition is now that every agent learns the subset of $\Gamma$ that they are paying attention to.
If $\psi \in \Gamma \cap \mathcal{A}_a(w)$, then agent $a$ learns $\psi$ at world $w$.  

\begin{definition}[Event model for general attention $\mathcal{R}(\Gamma)$]
\label{def: edge-conditioned-general-attention}
Let  $\Gamma\subseteq \mathcal{L}_\text{GA}$ be a set of \emph{revealed formulas}. 
The \emph{event model for general attention} representing the revelation of 
$\Gamma$ is the pointed edge-conditioned event model $\mathcal{R}(\Gamma) = ((E,Q,id_E),\bigwedge \Gamma)$ for $\mathcal{L}_{\text{GA}}$ 
defined by:

\smallskip
\begingroup
\setlength{\abovedisplayskip}{2pt}
\setlength{\belowdisplayskip}{2pt}
$E = \{\bigwedge S\colon S\subseteq \Gamma\}$,

\vspace{-12pt}
\begin{multline*}
Q_a=\{(\bigwedge S{:} \!\bigwedge_{\varphi\in T} \!\!A_a \varphi  \land \hspace{2pt}\bigwedge_{\mathclap{\varphi\in S\setminus T}} \hspace{2pt}\neg A_a\varphi,\bigwedge T{:} \!\!\bigwedge_{\varphi\in T} \!\!A_a \varphi) \colon \\[-0.7mm]
T\subseteq S \subseteq \Gamma \}.
\end{multline*}
\endgroup
\end{definition}

This event model contains, for each subset $S$ of the revealed $\Gamma$, an event $\bigwedge S$. This represents the subset that an agent may learn by paying attention.
The intuition behind the tuple $(\bigwedge S{:} \bigwedge_{\varphi\in T}A_a \varphi  \land \bigwedge_{\varphi\in S\setminus T} \neg A_a\varphi,\bigwedge T{:} \bigwedge_{\varphi\in T}A_a \varphi) $ is that if $\bigwedge S$ is revealed at an event and agent $a$ pays attention only to the subset $T$ of
$S$, then at that event agent $a$ believes that only $\bigwedge T$ was revealed (as the event accessible from $\bigwedge S$ has precondition $\bigwedge T$), and $a$ also learns that she was paying attention to $T$ (as the event accessible from $\bigwedge S$ has (target) condition 
$\bigwedge_{\varphi\in T} A_a\varphi$). This is consistent with the intuition underlying propositional attention, but in a much more general setting. 
Lastly, notice that in the event model for general attention, $\bigwedge \Gamma$ is the actual event, representing the truthful revelation of $\Gamma$. 

Notice that the attention introspection property is immediately preserved by this event model, as the target condition $A_a\varphi$, for all $\varphi\in T$, ensures that an agent knows what she attends to. The other attention principles are also preserved, as the dynamics do not modify agents' attention set. 
%
%

\begin{example}\label{ex: updated general attention}
\label{ex: general attention} 
Consider an enrichment of Example \ref{ex: updated propositional attention} and Fig.~\ref{figure: update of propositional attention}, where the AI now also pays  attention to Ann's utterances regarding the CV. In particular, the AI agent is focusing on whether Ann has information regarding $p$ and $q$, as that would mean that she paid attention to all aspects of the CV. More formally, we are considering a pointed attention model $(\mathcal{M}',w')$ for $\mathcal{L}_\text{GA}$ that is entirely like $(\mathcal{M}\otimes\mathcal{H}(p\land q))$ from Fig.~\ref{figure: update of propositional attention}, except that we have $A_bB_ap\wedge A_b\neg B_ap \wedge A_bB_aq\wedge A_b\neg B_aq$ true at all $b$-accessible worlds (see \if\extendedversion 0 extended version for corresponding figure\else the appendix for the corresponding figure\fi).
Continuing the example, Ann has now submitted an online report to the AI agent with fillable text fields for research qualifications, teaching qualifications, contributions to diversity, etc. Ann writes $p$ for research qualifications (cf.\ Fig.~\ref{figure: propositional attention}), but nothing for diversity, corresponding to the revelation of $\Gamma = \{ B_a p, \neg B_a q, \neg B_a \neg q\}$.    
From this, 
the AI agent correctly infers that Ann has not paid attention to diversity issues. 
This situation is represented in
Fig.~\ref{figure: updated general attention}, where $(\mathcal{M}'\otimes\mathcal{R}(\Gamma),(w',e))\vDash A_ap\wedge \neg A_a q\wedge B_b A_ap\wedge B_b \neg A_a q$: the AI agent correctly believes that Ann has not paid attention to all parts of the CV.
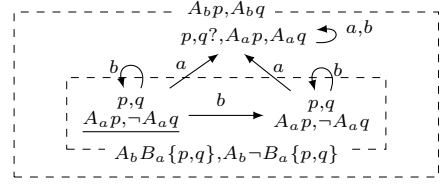
\begin{figure}
\begin{center}
\begin{tikzpicture}\tikzset{ 
deepsquare/.style ={rectangle,draw=black, inner sep=2.5pt, very thin, dashed, minimum height=3pt, minimum width=1pt, text centered}, 
world/.style={}, 
actual/.style={},
cloudy/.style={cloud, cloud puffs=15, cloud ignores aspect, inner sep=0.5pt, draw}
}

\node [actual] (!) {$\underline{\s \substack{p,q \\A_a p, \neg A_aq}}$};
\path (!) edge [-latex, loop, looseness=4.5,in=105,out=70] node [left, xshift=-2pt] {$\s b$} (!);

\node [world, right=of !, xshift=-35pt, yshift=30pt] (1) {$\s p,q?, A_a p, A_aq$};
\path (1) edge [-latex, looseness=5,in=5,out=355] (1) node [right, xshift=35pt, yshift=3pt] {$\s a,b$};
\path (!) edge [-latex] (1) node [above, xshift=19pt, yshift=14pt] {$\s a$};

\node [world, right=of !] (1b) {$\s \substack{p,q\\ A_ap,\neg A_aq}$};
\path (1b) edge [-latex, loop, looseness=5,in=105,out=70] node [right] {$\s b$} (1b);

\path (1b) edge [-latex] (1.south) node [right, xshift=-22pt, yshift=18pt] {$\s a$};

\node[above=of !, yshift=-34pt] (anchor1) {};
\node [deepsquare,fit={(!)(1b)(anchor1)}
](squareb) {};
\node[fill=white] (square name 1) at (squareb.south) [yshift=-2pt]{$\s A_b B_a \{p,q\}, A_b \neg B_a \{p,q\}$};

\path (!) edge [-latex] (1b) node [above, xshift=34pt, yshift=-0pt] {$\s b$};

\node[below=of !, yshift=25pt] (anchor1) {};
\node[left=of !, xshift=34pt] (anchor2) {};
\node[right=of 1b, xshift=-34pt] (anchor3) {};
\node [deepsquare,fit={($(!) +(4,0mm)$)(1)($(1b) +(-40mm,0)$)(anchor1)(anchor2)(anchor3)}](squareall) {};
\node[fill=white] (square name 1) at (squareall.north) [yshift=1pt]{$\s A_b p, A_bq$};

\end{tikzpicture}
\end{center}

\caption{Attention model $(\mathcal{M}'\otimes\mathcal{R}(\{B_ap, \neg B_aq\}),(w',e))$ for $\mathcal{L}_{\text{GA}}$, where $(\mathcal{M}',w')$ is given in Example \ref{ex: updated general attention}. 
We use the same conventions as before and additionally we let $B_a \{p,q\} := B_ap\wedge B_aq$. While earlier $A_ap$ occurring at $w$ represented that $A_a p \in V(w)$, here it represents that $p \in \mathcal{A}_a(w)$ (similarly for $\neg A_a p$).  
}\label{figure: updated general attention}
\end{figure} 
\end{example}

This example illustrates how the logic of general attention may be used, for example by an AI agent, to reason about and discover the attentional biases of other agents. While the example is relatively basic, the underlying logical framework is very general, and can handle much more complex scenarios. 

\section*{Concluding Remarks}
In this paper, we first proposed a generalization of edge-conditioned event models that we used to model propositional attention. We showed that these models can be exponentially more succinct than standard event model, and never more than linearly larger. Then, we adopted them to capture attention to arbitrary formulas. In future work, we would like to further investigate the properties of our general attention framework and extend it to incorporate additional features of attention, such as capacity constraints. Limiting the attention resources of agents prompts questions about exactly which formulas the agent will learn. For example, consider an agent that has the capacity to attend to $n$ formulas at most. If $\Gamma$ reveals more than $n$ formulas, which formulas will the agent prioritize? To address this issue, we need an ordering of the formulas that determines which ones are attended first and thus learned. We will explore this topic in our next paper.

A standard limitation of classical DEL, inherited by our framework, is that agents cannot recover from false beliefs: If an agent believes $\neg p$ and a public announcement of $p$ occurs, the agent will come to believe every formula. The standard response is that these cases require to move to a richer framework, such as DEL based on plausibility models~\cite{baltag2008qualitative}. Adapting our logic of general attention to that framework is another interesting direction for future work.



	\section*{Acknowledgments} 
Gaia Belardinelli is funded by Independent Research Fund Denmark (grant no.\ 4255-00020B).
Thomas Bolander is supported by the Independent Research Fund Denmark (grant no.\ 10.46540/4258-00060B). Sebastian Watzl is funded by the European Union's Horizon 2020 research and innovation program under grant agreement no. 101003208 (ERC Consolidator Grant 2020).

\appendix 
\section*{APPENDIX}

In this Technical Appendix, we prove the theorems stated in the main text.

\section{Basic Definitions}

In this section, we provide the detailed definitions of the standard and well-known concepts from the literature that were mentioned or required in the main text, but where a detailed definition was not provided.  
\label{sec:basic_definitions}
\subsection{Isomorphisms and bisimulations}


The following definitions of isomorphism and bisimulation are standard in modal logic
\cite{blac.ea:moda}. 


\begin{definition}[Isomorphism]
	\label{def:isomorphism}
    Let $\mathcal{M}=(W,R,V)$ be a Kripke model for $\mathcal{L}$ and $\mathcal{M}'=(W',R',V')$ a Kripke model for $\mathcal{L}'$ where $At(\mathcal{L}) = At(\mathcal{L}')$. 
	An \emph{isomorphism} between $\mathcal{M}$ and  $\mathcal{M}'$ 
	is a bijection $f:W \rightarrow W'$ such that \begin{enumerate}
	\item $p\in V(w)$ iff $p\in V'(f(w))$, for all $w\in W$ and $p\in At(\mathcal{L})$;
	\item $(w,v)\in R_a$ iff $(f(w),f(v))\in R_a'$, for all $w,v\in W$ and $a\in Ag$.
\end{enumerate}
When an isomorphism exists between two models, we say that they are \emph{isomorphic}.
\end{definition}
\begin{definition}[Bisimulation]\label{def: bisimulation}
	Let $\mathcal{M}=(W,R,V)$ be a Kripke model for $\mathcal{L}$ and $\mathcal{M}'=(W',R',V')$ a Kripke model for $\mathcal{L}'$ where $At(\mathcal{L}) = At(\mathcal{L}')$. A \emph{bisimulation} between $\mathcal{M}$ and $\mathcal{M}'$ 
    is a non-empty binary relation $Z\subseteq W \times W'$ such that for all $(w,w')\in Z$ and $a\in Ag$:

\noindent \begin{enumerate}
	\item{[Atom]}: $p\in V(w)$ iff $p\in V'(w')$, for all $p\in At(\mathcal{L})$;
\item{[Forth]}: If $(w,v)\in R_a$ then there exists $v'\in W'$ such that $(w',v')\in R'_a$ and $(v,v')\in Z$;
\item{[Back]}: If $(w',v')\in R'_a$ then there exists $v\in W$ such that $(w,v)\in R_a$ and $(v,v')\in Z$;
\end{enumerate}
A bisimulation between pointed Kripke models $(\mathcal{M},w)$ and $(\mathcal{M}',w')$ further has to satisfy that $(w,w')\in Z$. When a bisimulation exists between two models, we say that they are \emph{bisimilar}. 
\end{definition} 	

In the main text, we only defined update equivalence for non-pointed models. However, that definition immediately generalizes to (multi-)pointed models:
\begin{definition}[Update equivalence~\cite{kooi2011generalized,eijck2012action}] 
	\label{def:update equivalence}
	Let $\mathcal{D}$ be a standard or edge-conditioned event model for $\mathcal{L}$ 
	and $\mathcal{D}'$ a standard or edge-conditioned event model for $\mathcal{L}'$, where $At(\mathcal{L}) = At(\mathcal{L}')$.
	We say that $\mathcal{D}$ is \emph{update equivalent to} $\mathcal{D}'$ if for all Kripke models $\mathcal{M}$ with atom set $At(\mathcal{L})$,
	$\mathcal{M}\otimes\mathcal{E}$ and $\mathcal{M}\otimes\mathcal{E}'$ are bisimilar. 
	
	Where $\mathcal{D}$ and $\mathcal{D}'$ are (multi-)pointed event models, $\mathcal{D}$ is update equivalent to $\mathcal{D}'$, if for all pointed Kripke models $(\mathcal{M},w)$ with atom set $At(\mathcal{L})$:
\begin{itemize}
	\item$\mathcal{D}$ is applicable in $(\mathcal{M},w)$ iff $\mathcal{D}'$ is applicable in $(\mathcal{M},w)$,
\item $(\mathcal{M},w) \otimes \mathcal{D}$ and $(\mathcal{M},w) \otimes \mathcal{D}’$ are bisimilar.
\end{itemize}
\end{definition}

\subsection{Generalized arrow updates}
The definitions of generalized arrow updates below are from the work of Kooi and Rennes~[\citeyear{kooi2011generalized}]. We have adapted the original formulations slightly to fit our notational conventions. 
\begin{definition}[Generalized arrow update%
	]
	A \emph{generalized arrow update} $\mathcal{U}$ for $\mathcal{L}$ is a pair $(O, \mathsf{a})$ consisting of a finite nonempty set $O$ of outcomes and an arrow function $\mathsf{a}:\mathcal{A}\times O\rightarrow \mathcal{L} \times O\times \mathcal{L}$, with notation $\mathsf{a}_a(o)=\mathsf{a}(a,o)$. The tuple $(\varphi, o',\varphi')\in \mathsf{a}_a(o)$ is an \emph{$a$-arrow} with  \emph{source condition} $\varphi$, \emph{target} $o'$, and \emph{target condition} $\varphi'$. The set of \emph{generalized arrow updates} for $\mathcal{L}$ is denoted $\gau(\mathcal{L})$. 
	Where  $\mathcal{U}$ is a generalized arrow update and $o\in O$ is the \emph{actual outcome}, we call $(\mathcal{U},o)$ a \emph{pointed generalized arrow update}. 
\end{definition}

\begin{definition}[Product update with generalized arrow updates%
	]
	Let $\mathcal{M}=(W,R,V)$ be a Kripke model and $\mathcal{U}=(O,\mathsf{a})$ a generalized arrow update, both 
	for the same language $\mathcal{L}$. 
	The \emph{(product) update} of $\mathcal{M}$ with $\mathcal{U}$ is the Kripke model $\mathcal{M}\otimes\mathcal{U}=(W',R',V')$ where

	
	\smallskip 
	$W'=W\times O$
	
	\vspace{-5mm}
	\begin{multline*}
	R’_a=\{((w,o),(w’,o’))\in W’\times W’\colon (w,w’)\in R_a, \\ (\varphi , o’, \varphi’)\in \mathsf{a}_a(o),  (\mathcal{M},w)\vDash \varphi, (\mathcal{M},w’)\vDash \varphi’\},
	\end{multline*}

	$V'(w,o)= \{ p \in At(\mathcal{L})\colon p\in V(w)\}$.\footnote{For any function $f$ that takes a pair $(x,y)$ as argument, we allow writing $f(x,y)$ for $f((x,y))$, so $V'(w,o)$ is short for $V'((w,o))$.}

	\smallskip
	The product update of a pointed Kripke model $(\mathcal{M},w)$ with a pointed generalized arrow update $(\mathcal{U},o)$ is the pointed Kripke model $(\mathcal{M}\otimes\mathcal{U},(w,o))$.
\end{definition}
We define the \emph{language of generalized arrow updates} $\mathcal{L}_\text{GAU}$ as the language given by the grammar of $\mathcal{L}_\text{EL}$ extended with the clause $\varphi ::= [\mathcal{U}] \varphi$, where $\mathcal{U}$ is a pointed generalized arrow update. The semantics is defined as for $\mathcal{L}_\text{DEL}$ (Def.~\ref{def:satisfaction}), except that the product update in the semantics of $[\mathcal{U}] \varphi$ uses the generalized arrow update definition above, and we take $\mathcal{U}$ to be applicable in \emph{any} pointed Kripke model $(\mathcal{M},w)$.

\subsection{Sizes of formulas and models} 
\label{section:sizes} 
We define the sizes of formulas and event models. These are fairly standard definitions, see e.g. \cite{kooi2011generalized,bolander2020del,bolander2023parameterized}. Below we generalize to edge-conditioned event models. 

		
		The \emph{length} (or \emph{size}) of a formula $\varphi$, notation $|\varphi|$, is defined as the length of its representation as a string (so the number of symbols it consists of). 
		The \emph{size} of a set $S$, notation $|S|$, is its cardinality. The \emph{size of a standard event model} $\mathcal{E}=(E,Q,pre)$, notation $| \mathcal{E} |$, is defined by
	$$|\mathcal{E}| = |E| + \sum_{a \in Ag} | Q_a | + \sum_{e \in E} |pre(e)|$$
	%

	The \emph{size of an edge-conditioned event model} $\mathcal{C}=(E,Q,pre)$, notation $| \mathcal{C} |$, is defined by
	$$|\mathcal{C}| = | E| +   \sum_{a \in Ag} \bigl(|Q_a|\ + \!\!\!\!\sum_{(e:\varphi, f:\psi)\in Q_a} (|\varphi| + |\psi|) \bigr) + \sum_{e \in E} |pre(e)|$$	
	The \emph{size of a generalized arrow update} $\mathcal{U}=(O,\mathsf{a})$, notation $| \mathcal{U} |$ is defined by 
	$$|\mathcal{U}| = | O| +   \sum_{a \in Ag}\sum_{o\in O} (|\mathsf{a}_a(o)|+\!\!\!\!\sum_{(\varphi, o,\psi)\in \mathsf{a}_a(o)} (|\varphi| + |\psi|))$$
	
   For pointed event models, we need to also include the size of pointing out the actual events. However, this is just one bit per event, so it can be incorporated by replacing $|E|$ with $2|E|$ above, a constant factor. It can hence be ignored in the asymptotic analyses, and we will for simplicity do so in the following.


\subsection{Language hierarchies of DEL} 
\label{section:language hierarchies}
We now define language hierarchies of DEL, similarly to~\citeauthor{kooi2011generalized}~[\citeyear{kooi2011generalized}]. We call the modalities $[\mathcal{E}]$ of $\mathcal{L}_\text{DEL}$, $[\mathcal{C}]$ of $\mathcal{L}_\text{ECM}$ and $[\mathcal{U}]$ of $\mathcal{L}_\text{GAU}$ the \emph{dynamic modalities}. When we want to specify a dynamic modality that could be any of these, we use the notation $[\mathcal{D}]$, i.e.\ we use $\mathcal{D}$ as a placeholder for $\mathcal{E}$, $\mathcal{C}$ or $\mathcal{U}$.
For a standard event model $\mathcal{E} = (E,Q,pre)$, its set of \emph{conditions}, denoted $cond(\mathcal{E})$, is given by $cond(\mathcal{E}) = \{pre(e): e \in E \}$. For an edge-conditioned event model $\mathcal{C} = (E,Q,pre)$ the corresponding set also includes the source and target conditions, that is, $cond(\mathcal{C}) = \{ pre(e): e \in E \} \cup \{ \varphi: (e{:}\varphi,f{:}\psi) \in Q_a \} \cup \{ \psi: (e{:}\varphi,f{:}\psi) \in Q_a \}$. For a generalized arrow update $\mathcal{U} = (O,\mathsf{a})$, the set of conditions is $cond(\mathcal{U}) = \{ \varphi: (\varphi,o,\psi) \in \mathsf{a}_a(o) \} \cup \{ \psi: (\varphi,o,\psi) \in \mathsf{a}_a(o) \}$. 

Let $\mathcal{L}$ be one of the languages $\mathcal{L}_\text{DEL}$, $\mathcal{L}_\text{ECM}$ or $\mathcal{L}_\text{GAU}$. 
We let $\mathcal{L}^0$ denote the language $\mathcal{L}$ without the dynamic modality, i.e.\ without the modal formulas $[\mathcal{D}] \varphi$. Formulas of $\mathcal{L}^0$ are called \emph{static formulas}. We now inductively define languages $\mathcal{L}^i$, $i > 0$, as follows. The language $\mathcal{L}^{i}$ consists of all the formulas $\varphi \in \mathcal{L}$ satisfying that for any subformula $[\mathcal{D}] \psi$ of $\varphi$, $cond(\mathcal{D}) \subseteq \mathcal{L}^{i-1}$. For instance, we have that $\mathcal{L}^{1}_\text{DEL}$ includes the dynamic modality $[\mathcal{E}]\varphi$, but only in cases when the preconditions in $\mathcal{E}$ are static formulas. We have $\mathcal{L} = \cup_{i \geq 0} \mathcal{L}^i$ due to the recursive construction of the language $\mathcal{L}$.

When $\varphi \in \mathcal{L}^0$, we say that $\varphi$ is of \emph{level $0$} in the language hierarchy.   
For $i > 0$, when $\varphi \in \mathcal{L}^i \setminus \mathcal{L}^{i-1}$, we say that $\varphi$ is of \emph{level} $i$ in the language hierarchy. Similarly, we say that an event model or generalized arrow update $\mathcal{D}$ is of \emph{level} $i$ in the language hierarchy when $cond(\mathcal{D}) \subseteq \mathcal{L}^i$ and 
$cond(\mathcal{D}) \not\subseteq \mathcal{L}^{i-1}$.

\section{Soundness and Completeness of DEL with Edge-Conditions}

	\begin{proof}[\textbf{Proof of Theorem 
		\ref{thm: soundness and completeness}}] (\emph{The axiomatization in Table \ref{tab:logic} is sound and complete.})
		Completeness follows by standard reduction arguments \cite{ditmarsch2007dynamic}, realising that we have reduction axioms for the dynamic modality applied to any of the other types of formulas. We show the soundness of the axiom for belief update. 
		$$[(\mathcal{C},e)]B_a\psi \leftrightarrow \bigl(pre(e)\rightarrow\hspace{6pt} \bigwedge_{\mathclap{(e{:}\chi,f{:}\chi')\in Q_a}}\hspace{6pt}(\chi\rightarrow B_a(\chi' \rightarrow [(\mathcal{C},f)]\psi))\bigr)$$
		
		Let $(\mathcal{M},w)=((W,R,V),w)$ be a pointed Kripke model for $\mathcal{L}_{\text{ECM}}$ and let $(\mathcal{C},e)=((E,Q,pre),e)$ be a pointed edge-conditioned model for $\mathcal{L}_{\text{ECM}}$.
		
		($\Rightarrow$) 
		We want to show that by assuming $(\mathcal{M},w)\vDash [(\mathcal{C},e)]B_a\psi$, it follows that $(\mathcal{M},w)\vDash pre(e)\rightarrow \bigwedge_{(e{:}\chi,f{:}\chi')\in Q_a}(\chi\rightarrow B_a(\chi' \rightarrow [(\mathcal{C},f)]\psi))$. So assume that $(\mathcal{M},w)\vDash [(\mathcal{C},e)]B_a\psi$ and that $(\mathcal{M},w)\vDash pre(e)$. 
		Then 
		$(\mathcal{M}\otimes \mathcal{C},(w,e))=((W',R',V'), (w,e))$ is the edge-conditioned product update of $(\mathcal{M},w)$ and $(\mathcal{C},e)$.
		By $(\mathcal{M},w)\vDash [(\mathcal{C},e)]B_a\psi$ and $(\mathcal{M},w)\vDash pre(e)$ it follows that $(\mathcal{M}\otimes \mathcal{C},(w,e))\vDash B_a\psi$, by semantics of the dynamic modality. Then $(\mathcal{M}\otimes \mathcal{C},(u,g))\vDash \psi$ for all $((w,e),(u,g))\in R'_a$, by semantics of the belief modality.

		As we need to show that $(\mathcal{M},w)\vDash\bigwedge_{(e{:}\chi,f{:}\chi')\in Q_a}(\chi\rightarrow B_a(\chi' \rightarrow [(\mathcal{C},f)]\psi))$, consider an edge-condition $(e{:}\chi,f{:}\chi')\in Q_a$ with arbitrary $f,\chi,\chi'$, and assume that $(\mathcal{M},w)\vDash \chi$. As we need to show that $(\mathcal{M},w)\vDash B_a(\chi' \rightarrow [(\mathcal{C},f)]\psi))$, consider an arbitrary $(w,v)\in R_a$ and assume that $(\mathcal{M},v)\vDash  \chi'$. To prove the desired, it remains to show that $(\mathcal{M},v)\vDash[(\mathcal{C},f)]\psi$.

		Now we have two cases: either $(\mathcal{M},v)\vDash pre(f)$ or not.
		If $(\mathcal{M},v)\not\vDash pre(f)$, then it trivially follows that $(\mathcal{M},v)\vDash [(\mathcal{C},f)]\psi$, by semantics of the dynamic modality. If instead $(\mathcal{M},v)\vDash pre(f)$, then, by definition of edge-conditioned product update, $(v,f)\in W'$. As we earlier assumed that $(e{:}\chi,f{:}\chi')\in Q_a$, $(\mathcal{M},w)\vDash \chi$, $(\mathcal{M},v)\vDash  \chi'$, and $(w,v)\in R_a$, then, by definition of edge-conditioned product update, we have that $((w,e),(v,f))\in R'_a$. Now recall that $(\mathcal{M}\otimes \mathcal{C},(u,g))\vDash \psi$ for all $((w,e),(u,g))\in R'_a$. Then, also for $(v,f)$ it holds that $(\mathcal{M}\otimes \mathcal{C},(v,f))\vDash \psi$, and, by semantics of dynamic modality, $(\mathcal{M},v)\vDash [(\mathcal{C},f)]\psi$. Hence, in both cases, $(\mathcal{M},v)\vDash [(\mathcal{C},f)]\psi$, as required.
		

		($\Leftarrow$) We want to show that by assuming that $(\mathcal{M},w)\vDash pre(e)\rightarrow \bigwedge_{(e{:}\chi,f{:}\chi')\in Q_a}(\chi \rightarrow B_a(\chi'\rightarrow [(\mathcal{C},f)]\psi))$, it follows that $(\mathcal{M},w)\vDash [(\mathcal{C},e)]B_a\psi$. So let $(\mathcal{M},w)\vDash pre(e)\rightarrow \bigwedge_{(e{:}\chi,f{:}\chi')\in Q_a}(\chi \rightarrow B_a(\chi'\rightarrow [(\mathcal{C},f)]\psi))$. Now we have two cases: either  $(\mathcal{M},w)\vDash pre(e)$ or not. We want to show that in both cases we can derive that $(\mathcal{M},w)\vDash [(\mathcal{C},e)]B_a\psi$. If $(\mathcal{M},w)\not\vDash pre(e)$, then trivially $(\mathcal{M},w)\vDash [(\mathcal{C},e)]B_a\psi$. So suppose that $(\mathcal{M},w)\vDash pre(e)$. By our initial assumption and modus ponens, it follows that $(\mathcal{M},w)\vDash \bigwedge_{(e{:}\chi,f{:}\chi')\in Q_a}(\chi \rightarrow B_a(\chi'\rightarrow [(\mathcal{C},f)]\psi))$, that is, $(\mathcal{M},w)\vDash \chi \rightarrow B_a(\chi'\rightarrow [(\mathcal{C},f)]\psi)$ for all $(e{:}\chi,f{:}\chi')\in Q_a$. As we want to show that $(\mathcal{M},w)\vDash [(\mathcal{C},e)]B_a\psi$, consider the model $(\mathcal{M}\otimes\mathcal{C},(w,e))=((W',R',V'),(w,e))$ and take an arbitrary $((w,e),(v,f))\in R'_a$. We need to show that $(\mathcal{M}\otimes\mathcal{C},(v,f))\vDash \psi$.
		By definition of edge-conditioned product update, $((w,e),(v,f))\in R'_a$ implies that $(w,v)\in R_a$ and that there exists an edge-condition $(e{:}\chi,f{:}\chi')\in Q_a$ with $(\mathcal{M},w)\vDash \chi$ and $(\mathcal{M},v)\vDash \chi'$. Then, by $(\mathcal{M},w)\vDash \chi$, $(\mathcal{M},w)\vDash \chi \rightarrow B_a(\chi'\rightarrow [(\mathcal{C},f)]\psi)$ and modus ponens, we know that $(\mathcal{M},w)\vDash  B_a(\chi'\rightarrow [(\mathcal{C},f)]\psi)$, that is, for all $(w,v')\in R_a$, $(\mathcal{M},v')\vDash  \chi'\rightarrow [(\mathcal{C},f)]\psi$. As $(w,v)\in R_a$, it follows that $(\mathcal{M},v)\vDash  \chi'\rightarrow [(\mathcal{C},f)]\psi$, and since $(\mathcal{M},v)\vDash \chi'$, it follows that $(\mathcal{M},v)\vDash [(\mathcal{C},f)]\psi$. Recall that $((w,e),(v,f))\in R'_a$, so $(v,f)\in W'$ and  $(\mathcal{M},v)\vDash pre(f)$, by definition of product update. Then, by semantics of the dynamic modality, $(\mathcal{M},v)\vDash [(\mathcal{C},f)]\psi$ implies that  $(\mathcal{M}\otimes\mathcal{C},(v,f))\vDash \psi$, as required. 
		%
	\end{proof}
	
	\section{Comparisons of Edge-Conditioned Event Models and Standard Event Models}


\begin{proof}[\textbf{Proof of Theorem~\ref{prop:F-H-equivalence}}](\emph{For any conjunction of propositional literals $\varphi$, 
		$\mathcal{F}(\varphi)$ and $\mathcal{H}(\varphi)$ are update equivalent.})
	Let $\varphi=\ell(p_1)\wedge \dots \wedge \ell(p_n)$. Let $\mathcal{H}(\varphi)=((E,Q,pre), \varphi)$ be the edge-conditioned event model for propositional attention representing the revelation of $\varphi$, and $\mathcal{F}(\varphi)=((E',Q',pre'),E_d)$ be the standard event model for propositional attention representing the same revelation. 
	Note that $\mathcal{H}(\varphi)$ is an edge-conditioned event model for $\mathcal{L}_{\text{PA}^+}$ and $\mathcal{F}(\varphi)$ is a standard event model for $\mathcal{L}_{\text{PA}}$, and that $At(\mathcal{L}_{\text{PA}^+})=At(\mathcal{L}_{\text{PA}})$. Let $(\mathcal{M},w)=((W,R,V),w)$ be any pointed Kripke model with atom set $At(\mathcal{L}_\text{PA})$. 
	
As $\mathcal{H}(\varphi)$ and $\mathcal{F}(\varphi)$ are pointed and multi-pointed event models, respectively, Definition \ref{def:update equivalence} of update equivalence for multi-pointed models tells us that to show that $\mathcal{H}(\varphi)$ and $\mathcal{F}(\varphi)$ are update equivalent,
	we need to show that (1) $\mathcal{H}(\varphi)$ is applicable in $(\mathcal{M},w)$ iff $\mathcal{F}(\varphi)$ is, and (2) $(\mathcal{M},w)\otimes \mathcal{H}(\varphi)$ and $(\mathcal{M},w)\otimes \mathcal{F}(\varphi)$ are bisimilar. We start by showing (1). Notice that the actual event in $\mathcal{H}(\varphi)$ is $\varphi$ and all actual events in $\mathcal{F}(\varphi)$ contain $\varphi$. Then, we have two cases: either $(\mathcal{M},w)\vDash \varphi$ or not. If not, then both event models are not applicable in $(\mathcal{M},w)$, and the desired holds. Consider now the case in which $(\mathcal{M},w)\vDash \varphi$. Then $\mathcal{H}(\varphi)$ is applicable in $(\mathcal{M},w)$, and we only need to show that also $\mathcal{F}(\varphi)$ is. As for all $a\in Ag$, there is a unique $T_a\subseteq At(\varphi)$ such that $(\mathcal{M},w)\models \bigwedge_{a\in Ag}(\bigwedge_{p_i\in T_a}A_a p_i\wedge \bigwedge_{p_i\in At(\varphi)\setminus T_a} \neg A_a p_i)$, then there is a unique event $e \in E_d$ containing $\varphi$ and $\bigwedge_{a\in Ag}(\bigwedge_{p_i\in T_a}A_a p_i\wedge \bigwedge_{p_i\in At(\varphi)\setminus T_a} \neg A_a p_i)$ that is such that $(\mathcal{M},w)\vDash e$. By definition of applicability of multi-pointed event models, it follows that $\mathcal{F}(\varphi)$ is applicable in $(\mathcal{M},w)$, as required.
	
	We now show (2), namely that $(\mathcal{M},w)\otimes \mathcal{H}(\varphi)$ and $(\mathcal{M},w)\otimes \mathcal{F}(\varphi)$ are bisimilar.  
Notationally, let $(\mathcal{M},w)\otimes \mathcal{H}(\varphi)=
	((W',R',V'),(w,\varphi))$ and $(\mathcal{M},w)\otimes \mathcal{F}(\varphi)=
	((W'',R'',V''),(w,e))$. Additionally, for a set $X\subseteq At(\varphi)$ and agent $a\in Ag$, we define the following abbreviations: 
	\begin{itemize}
		\item $X:=\bigwedge_{p_i\in X} p_i$; 
		\item $\ell(X):=\bigwedge_{p_i\in X}\ell(p_i)$, where $\ell(p_i)$ is the literal from $\varphi$ built on $p_i$;
		\item $A_a X:=\bigwedge_{p_i\in X} A_a p_i$;
	\end{itemize}
	Recall that for any conjunction of literals $\psi = \bigwedge_{1 \leq i \leq n} \ell_i$ and any literal $\ell$, if $\ell = \ell_i$ for some $i$ we write $\ell \in \psi$. More generally, if $\chi$ is a subformula of $\psi$ we write $\chi \in \psi$.
	
	Consider the relation $Z\subseteq W'\times W''$, defined by $Z=$
	\begin{multline*}
		\hspace{-10pt}	
		\{((w,\bigwedge_{\mathclap{p_i\in S}}\ell (p_i) ),(w,\bigwedge_{\mathclap{p_i\in S}} \ell(p_i) \wedge \hspace{2pt}\bigwedge_{\mathclap{a\in Ag}\hspace{2pt}}\hspace{2pt} (\hspace{2pt}\bigwedge_{\hspace{2pt}\mathclap{p_i\in T_a}}A_a p_i \hspace{2pt}\wedge\hspace{2pt} \bigwedge_{\mathclap{p_i\in S\setminus T_a}}\neg A_a p_i)))\colon \\S\subseteq At(\varphi) \text{ and for all } a\in Ag, T_a\subseteq S\}
	\end{multline*}
	
	Using the abbreviations, the set can be written as:
	\begin{multline*}
		\hspace{-10pt}		Z=\{((w, \ell(S) ),(w, \ell(S) \wedge \bigwedge_{a\in Ag} (A_a T_a \wedge \neg A_a (S\setminus T_a)))\colon \\S\subseteq At(\varphi) \text{ and for all } a\in Ag, T_a\subseteq S\}
	\end{multline*}
	We first show that $Z$ is a bisimulation, i.e. that it satisfies the three requirements of bisimulations (see Definition \ref{def: bisimulation}). After that, we show that $((w,\varphi),(w,e))\in Z$.
	
	Let $S\subseteq At(\varphi)$ and, for all $a\in Ag$, let $T_a\subseteq S$. Consider $((w,\ell(S)),(w,\ell(S)\wedge \bigwedge_{a\in Ag}(A_a T_a\wedge \neg A_aS\setminus T_a)))\in Z$.
	
	[Atom]: Let $p\in At(\mathcal{L}_\text{PA})$. Then $p\in V'((w,\ell(S)))$ iff (by def. of product update) $p\in V''((w,\ell(S)\wedge \bigwedge_{a\in Ag}(A_a T_a\wedge \neg A_a S\setminus T_a)))$, which is the required.
	
	[Forth]: Let $b\in Ag$ and $((w,\ell(S)),(v,\ell(S')))\in R'_b$. We want to show that there exists a world $(v',\ell(S'')\wedge \bigwedge_{a\in Ag}(A_a T_a'\wedge \neg A_a S'' \setminus T_a'))\in W''$, such that: \begin{enumerate}
		\item[(1)] $((w,\ell(S)\wedge \bigwedge_{a\in Ag}(A_a T_a\wedge \neg A_aS\setminus T_a),(v',\ell(S'')\wedge \bigwedge_{a\in Ag}(A_a T_a'\wedge \neg A_aS''\setminus T_a')))\in R''_b$; 
		\item[(2)] $((v,\ell(S')),(v',\ell(S'')\wedge \bigwedge_{a\in Ag}(A_a T_a'\wedge \neg A_aS''\setminus T_a')))\in Z$. 
	\end{enumerate}

	As $((w,\ell(S)),(v,\ell(S')))\in R'_b$, then $(v,\ell(S'))\in W'$, and by definition of product update $\ell(S')\in E$, with $S'\subseteq At(\varphi)$ by definition of edge-conditioned event model for propositional attention. As $S'\subseteq At(\varphi)$, then the standard event model for propositional attention $\mathcal{F}(\varphi)$ must contain, for all $T'_a\subseteq S'$, $a\in Ag$, an event $\ell(S')\wedge \bigwedge_{a\in Ag}(A_a T'_a\wedge \neg A_aS'\setminus T'_a)$.
	So define $X_a=\{q\in S' \colon (\mathcal{M},v)\vDash A_a q\}$, for all $a\in Ag$. Since $X_a\subseteq S'$, for all $a\in Ag$, then there is a unique event $f$ in $E'$ such that $$f=\ell(S')\wedge \bigwedge_{a\in Ag}(A_a X_a\wedge \neg A_aS'\setminus X_a).$$ 
  As $(v,\ell(S'))\in W'$, then by product update definition, $(\mathcal{M},v)\vDash \ell(S')$. Hence, $(\mathcal{M},v)\vDash f$ by construction of $f$, and since $pre(f)=f$, then $(v,f)\in W''$ by definition of product update. 
	
	We now want to show that $(w,\ell(S)\wedge \bigwedge_{a\in Ag}(A_a T_a\wedge \neg A_aS\setminus T_a)),(v,f))\in R''_b$, i.e. property (1).
As $((w,\ell(S)),(v,\ell(S')))\in R'_b$, then by def. of edge-conditioned product update, $(w,v)\in R_b$. Then, to reach the desired result, we only need to show that $(\bigwedge \ell(S) \bigwedge_{a\in Ag}(A_a T_a\wedge \neg A_aS\setminus T_a),f)\in Q'_b$, as then by definition of standard product update, we would have that $(w,\ell(S)\wedge \bigwedge_{a\in Ag}(A_a T_a\wedge \neg A_aS\setminus T_a)),(v,f))\in R''_b$.
	
	By $((w,\ell(S)),(v,\ell(S')))\in R'_b$ and definition of edge-conditioned product update, there exists $\chi,\chi'\in \mathcal{L}_\text{PA}$ such that $(\ell(S){:}\chi,\ell(S'){:}\chi')\in Q_b$. By definition of the edge-conditioned event models for propositional attention, we must have that $\chi=A_b S' \wedge \neg A_b (S\setminus S')$ and $\chi'=A_b S'$, i.e., 
$(\bigwedge S{:}A_b S' \wedge \neg A_b (S\setminus S'), \bigwedge S'{:}A_b S')\in Q_b$ with $(\mathcal{M},w)\vDash A_b S' \wedge \neg A_b (S\setminus S'))$ and $(\mathcal{M},v)\vDash A_b S'$. 
	
	As $(w,\ell(S)\wedge \bigwedge_{a\in Ag}(A_a T_a\wedge \neg A_a(S\setminus T_a)))\in W''$ then by definition of product update $(\mathcal{M},w)\vDash \ell(S)\wedge \bigwedge_{a\in Ag}(A_a T_a\wedge \neg A_aS\setminus T_a)$. As we just saw that $(\mathcal{M},w)\vDash A_b S' \wedge \neg A_b (S\setminus S')$, it must be the case that $T_b = S'$, which implies that $A_b S' \wedge \neg A_b (S\setminus S')$ is a subformula of $\ell(S)\wedge \bigwedge_{a\in Ag}(A_a T_a\wedge \neg A_a(S\setminus T_a))$, i.e. both $A_b S'$ and $\neg A_b (S\setminus S')$ are its subformulas. 

	By definition of product update, $(w,\ell(S)\wedge \bigwedge_{a\in Ag}(A_a T_a\wedge \neg A_a(S\setminus T_a)))\in W''$ furthermore implies that $\ell(S)\wedge \bigwedge_{a\in Ag}(A_a T_a\wedge \neg A_a(S\setminus T_a))\in E'$. 
	We now want to know under which conditions, for an event $g\in E'$, we can imply that $(\ell (S)\wedge \bigwedge_{a\in Ag}(A_a T_a\wedge \neg A_a(S\setminus T_a)),g)\in Q'_b$. Since $A_b S'\in \ell(S)\wedge \bigwedge_{a\in Ag}(A_a T_a\wedge \neg A_a(S\setminus T_a))$, then by \textsc{Attentiveness}, we must have
$\ell(S')\in g$ and $A_b S' \in g$, and since $\neg A_b (S\setminus S')\in \bigwedge S\wedge \bigwedge_{a\in Ag}(A_a T_a\wedge \neg A_a(S\setminus T_a))$, then by \textsc{Inertia} we must have that $\ell(S\setminus S')\not\in g$. Moreover, 
since $A_bAt(\varphi)\setminus S\not\in \ell(S)\wedge \bigwedge_{a\in Ag}(A_a T_a\wedge \neg A_a(S\setminus T_a))$, then $\ell(At(\varphi)\setminus S)\not\in g$. Hence, if $\ell(S')\in g, A_b S' \in g$ and $\ell(S\setminus S')\not\in g, \ell(At(\varphi)\setminus S)\not\in g$ then $(\ell (S)\wedge \bigwedge_{a\in Ag}(A_a T_a\wedge \neg A_a(S\setminus T_a)),g)\in Q'_b$. Observe that this is the case for $f$: Recall that  $(\mathcal{M},v)\vDash A_b S'$, and so by $(v,f)\in W'$ we know that $A_b S'\in f$. Moreover, by construction of $f$, $\ell(S')\in f , \ell(S\setminus S')\not\in f$ and $\ell(At(\varphi)\setminus S)\not\in f$. Hence, $(\ell (S)\wedge \bigwedge_{a\in Ag}(A_a T_a\wedge \neg A_a(S\setminus T_a)),f)\in Q'_b$, as required.

	It remains to show that $((v,\ell(S')),(v,f))\in Z$, i.e.\ property (2).
	Notice that to show that the pair $((v,\ell(S')),(v,\ell(S')\wedge \bigwedge_{a\in Ag}(A_a X_a\wedge \neg A_aS'\setminus X_a)))$ belongs to $Z$, we only need to show that $S'\subseteq At(\varphi)$ and $X_a\subseteq S'$, for all $a\in Ag$. It holds that $S'\subseteq At(\varphi)$ as $(w,\ell(S'))\in W'$ and so $\ell(S')\in E$, by definition of product update, and $S'\subseteq At(\varphi)$ by definition of edge-conditioned event model for propositional attention. It holds that $X_a\subseteq S'$, for all $a \in Ag$ by construction of $f$.
	
	[Back]: Let $b\in Ag$ and $((w,\ell (S)\wedge \bigwedge_{a\in Ag}(A_a T_a\wedge \neg A_a(S\setminus T_a))),(v,\ell (S')\wedge \bigwedge_{a\in Ag}(A_a T'_a\wedge \neg A_a(S'\setminus T'_a))))\in R''_b$. We want to show that there exists a world $(v',\ell(S''))\in W'$, such that: \begin{enumerate}
		\item[(1b)] $((w,\ell(S)),(v',\ell(S'')))\in R'_b$; 
		\item[(2b)] $((v',\ell(S'')),(v,\ell(S')\wedge \bigwedge_{a\in Ag}(A_a T_a'\wedge \neg A_aS'\setminus T_a')))\in Z$. 
	\end{enumerate}
	
	As $((w,\ell (S)\wedge \bigwedge_{a\in Ag}(A_a T_a\wedge \neg A_a(S\setminus T_a))),(v,\ell (S')\wedge \bigwedge_{a\in Ag}(A_a T'_a\wedge \neg A_a(S'\setminus T'_a))))\in R''_b$ then $(v,\ell (S')\wedge \bigwedge_{a\in Ag}(A_a T'_a\wedge \neg A_a(S'\setminus T'_a))\in W''$ and by definition of product update $\ell (S')\wedge \bigwedge_{a\in Ag}(A_a T'_a\wedge \neg A_a(S'\setminus T'_a)\in E'$, with $S'\subseteq At(\varphi)$, by def. of standard event model for propositional attention. Since $S'\subseteq At(\varphi)$, then the edge-conditioned event model for propositional attention $\mathcal{H}(\varphi)$ must contain $\ell(S')$ as an event. By $(v,\ell (S')\wedge \bigwedge_{a\in Ag}(A_a T'_a\wedge \neg A_a(S'\setminus T'_a))\in W''$ we know that $(\mathcal{M},v)\vDash \ell (S')\wedge \bigwedge_{a\in Ag}(A_a T'_a\wedge \neg A_a(S'\setminus T'_a))$, and in particular that $(\mathcal{M},v)\vDash \ell (S')$, by definition of product update. This implies that $(v,\ell (S'))\in W'$. 
	
	We want to show that $((w,\ell (S)),(v,\ell (S')))\in R_b'$, i.e., property (1b). By our initial assumption that  $((w,\ell (S)\wedge \bigwedge_{a\in Ag}(A_a T_a\wedge \neg A_a(S\setminus T_a))),(v,\ell (S')\wedge \bigwedge_{a\in Ag}(A_a T'_a\wedge \neg A_a(S'\setminus T'_a))))\in R''_b$ and definition of product update, it follows that $(w,v)\in R_b$. Hence, what remains to show is that there is a conditioned edge $(\ell (S){:}A_b S'\wedge A_b(S\setminus S'), \ell(S'){:}A_b S')\in Q_b$ such that $(\mathcal{M},w)\vDash A_b S'\wedge A_b(S\setminus S')$ and $(\mathcal{M},v)\vDash A_b S'$, as then we would know, by definition of edge-conditioned product update, that $((w,\ell (S)),(v,\ell (S')))\in R_b'$.
	
	By definition of edge-conditioned event model for propositional attention, we know that for all subsets $T$ of $S$, there is a conditioned edge $(\ell (S){:}A_b T\wedge A_b(S\setminus T), \ell(T){:}A_b T)\in Q_b$. As $\ell (S')\wedge \bigwedge_{a\in Ag}(A_a T'_a\wedge \neg A_a(S'\setminus T'_a)\in E'$, then by definition of standard event model for propositional attention, $S'\subseteq S$. Hence, $(\ell (S){:}A_b S'\wedge A_b(S\setminus S'), \ell(S'){:}A_b S')\in Q_b$. 	Now we only need to show that $(\mathcal{M},w)\vDash A_b S'\wedge A_b(S\setminus S')$ and $(\mathcal{M},v)\vDash A_b S'$.

Since $(w,\ell (S)\wedge \bigwedge_{a\in Ag}(A_a T_a\wedge \neg A_a(S\setminus T_a)))\in W''$ then $\ell (S)\wedge \bigwedge_{a\in Ag}(A_a T_a\wedge \neg A_a(S\setminus T_a))\in E'$. 
	By $((w,\ell (S)\wedge \bigwedge_{a\in Ag}(A_a T_a\wedge \neg A_a(S\setminus T_a))),(v,\ell (S')\wedge \bigwedge_{a\in Ag}(A_a T'_a\wedge \neg A_a(S'\setminus T'_a))))\in R''_b$ and definition of standard event model for propositional attention, it must be the case that $(A_b S'\wedge \neg A_b(S\setminus S')$ is a subformula of $\ell (S)\wedge \bigwedge_{a\in Ag}(A_a T_a\wedge \neg A_a(S\setminus T_a))$.
	Then, by reasoning as in the proof of [Forth], we know that for all $g\in E'$ such that $(\ell (S)\wedge \bigwedge_{a\in Ag}(A_a T_a\wedge \neg A_a(S\setminus T_a)),g)\in Q'_b$, $\ell(S')\wedge A_b S' \in g$ by \textsc{Attentiveness}, and $\ell(S\setminus S')\not\in g, \ell(At(\varphi)\setminus S)\not\in g$ by \textsc{Inertia}. By assumption we have that $((w,\ell (S)\wedge \bigwedge_{a\in Ag}(A_a T_a\wedge \neg A_a(S\setminus T_a))),(v,\ell (S')\wedge \bigwedge_{a\in Ag}(A_a T'_a\wedge \neg A_a(S'\setminus T'_a))))\in R''_b$, which by definition of standard product update implies that $(\ell (S)\wedge \bigwedge_{a\in Ag}(A_a T_a\wedge \neg A_a(S\setminus T_a)),\ell (S')\wedge \bigwedge_{a\in Ag}(A_a T'_a\wedge \neg A_a(S'\setminus T'_a)))\in Q'_b$. So we must have $A_bS'\in \ell (S')\wedge \bigwedge_{a\in Ag}(A_a T'_a\wedge \neg A_a(S'\setminus T'_a))$. 
	Then, as we know that $(\mathcal{M},v)\vDash \ell (S')\wedge \bigwedge_{a\in Ag}(A_a T'_a\wedge \neg A_a(S'\setminus T'_a))$, we also know that $(\mathcal{M},v)\vDash A_b S'$. 
	Moreover, by $(w,\ell (S)\wedge \bigwedge_{a\in Ag}(A_a T_a\wedge \neg A_a(S\setminus T_a)))\in W''$ and definition of product update, $(\mathcal{M},w)\vDash\ell (S)\wedge \bigwedge_{a\in Ag}(A_a T_a\wedge \neg A_a(S\setminus T_a))$, and in particular $(\mathcal{M},w)\vDash A_b S'\wedge \neg A_b(S\setminus S')$, as desired.

Lastly, we show that $((w,\varphi),(w,e))\in Z$. Recall that $\varphi\in e$, and that for all $a\in Ag$, there is a unique $T_a\subseteq At(\varphi)$ such that $(\mathcal{M},w)\models \bigwedge_{a\in Ag}(\bigwedge_{p_i\in T_a}A_a p_i\wedge \bigwedge_{p_i\in At(\varphi)\setminus T_a} \neg A_a p_i)$, which implies that $\bigwedge_{a\in Ag}(\bigwedge_{p_i\in T_a}A_a p_i\wedge \bigwedge_{p_i\in At(\varphi)\setminus T_a} \neg A_a p_i)\in e$ by definition of product update. Hence, it must be the case that $e=\varphi\wedge \bigwedge_{p_i\in S}\ell(p_i)\bigwedge_{a\in Ag}(\bigwedge_{p_i\in T_a}A_a p_i\wedge \bigwedge_{p_i\in At(\varphi)\setminus T_a} \neg A_a p_i)$. 

As it holds that for all the literals in $\varphi$, they are constructed from atoms belonging to $At(\varphi)$ and for all $a\in Ag, T_a\subseteq At(\varphi)$, then clearly $((w,\varphi),(w,e))\in Z$, by definition of $Z$.
\end{proof}

\begin{proof}[\textbf{Proof of Theorem
	\ref{prop: equivalence SEM into EDGY}}] (\emph{$\mathcal{E} \in \sem(\mathcal{L}_\text{DEL})$ 
	is update equivalent to
	$T_1(\mathcal{E})$.})
	  Let $\mathcal{E} = (E,Q,pre)$ be a standard event model for $\mathcal{L}_\text{DEL}$. 	
	 We show that for any Kripke model $\mathcal{M}$ with atom set $At(\mathcal{L}_\text{DEL}) = P$, $\mathcal{M}\otimes\mathcal{E}$ is isomorphic to $\mathcal{M}\otimes T_1(\mathcal{E})$. As isomorphism is a special case of bisimulation, it then follows that $\mathcal{M}\otimes\mathcal{E}$ is bisimilar to $\mathcal{M}\otimes T_1(\mathcal{E})$, i.e. $\mathcal{E}$ is update equivalent to $T_1(\mathcal{E})$. 
		
		The proof is induction on the level of $\mathcal{E}$ in the language hierarchy (cf.\ Section \ref{section:language hierarchies} above). The base case is 
		$cond(\mathcal{E})\subseteq \mathcal{L}_\text{DEL}^0$, i.e. 
		all preconditions in $\mathcal{E}$ are static formulas. Inspecting Definition~\ref{def: translation SEM into ECEM}, we immediately see that for static formulas $\varphi$, we have $T_2(\varphi) = \varphi$. By definition of $T_1$, we have $T_1(\mathcal{E}) = (E,Q',pre')$ where for each $a \in Ag$, $Q'_a = \{(e{:}\top, f{:}\top): (e,f) \in Q_a \}$, and for each $e \in E$, $pre'(e) = T_2(pre(e))$. Since $pre(e)$ is a static formula, we have $T_2(pre(e)) = pre(e)$, and thus $pre' = pre$. In other words, $T_1(\mathcal{E}) = (E,Q',pre)$.  
		
		Let $\mathcal{M}=(W,R,V)$ be any Kripke model with atom set $P$. 
		Notationally, let $\mathcal{M}\otimes \mathcal{E}=(W',R',V')$ and $\mathcal{M}\otimes T_1(\mathcal{E})=(W'',R'',V'')$. We need to show that $\mathcal{M}\otimes\mathcal{E}$ is isomorphic to $\mathcal{M}\otimes T_1(\mathcal{E})$, i.e. we need to show that there is a bijection $f:W' \rightarrow W''$ that satisfies the two properties of isomorphism (Def.~\ref{def:isomorphism} above).  
%
		To that goal, first notice that $W'=W''$. This is because $(u,g)\in W'$ iff (by def.\ of standard product update) $(u,g)\in W\times E$ and  $(\mathcal{M},u)\vDash pre(g)$ iff (by def. of edge-conditioned product update) $(u,g)\in W''$.
		
		Now define $f$ as the identity function on $W'$, i.e. $f(w,e)=(w,e)$ for all $(w,e)\in W'$. Clearly, $f$ is a bijection, and since the valuation is defined in the same way in the product update for standard event models and  the product update for edge-conditioned event models, then we also have that $p\in V'(w,e)$ 
		iff $p\in V''(f(w,e))$
		for all $p \in P$. Hence, it only remains to show that property 2 of Def.~\ref{def:isomorphism} 
		holds, i.e.\ that $f$ preserves the accessibility relations. 
		
		Notice that the following sequence of iff's is the case, for all $a\in Ag$: $((w,e),(v,f))\in R_a'$ iff (by definition of standard product update) $(w,v)\in R_a$ and $(e,f)\in Q_a$ iff (by definition of transformation $T_1$) $(w,v)\in R_a$ and $(e{:}\top,f{:}\top)\in Q'_a$ iff (by definition of edge-conditioned product update) $((w,e),(v,f))\in R_a''$. This completes the proof of the base case.

		For the induction step, suppose $\mathcal{E}$ is of level $i+1$ in the language hierarchy, i.e.\ $cond(\mathcal{E})\subseteq \mathcal{L}_\text{DEL}^{i+1}$. The induction hypothesis is the following assumption:
		For all Kripke models $\mathcal{M}$ and all  $\mathcal{E} \in \sem(\mathcal{L}_\text{DEL})$ 
		with $cond(\mathcal{E})\subseteq \mathcal{L}_\text{DEL}^{i}$, 
		$\mathcal{M} \otimes \mathcal{E}$ and $\mathcal{M} \otimes T_1(\mathcal{E})$ are isomorphic $(\dagger)$. 
		By definition of $T_1$, we have as before that for each $e \in E$, $pre'(e) = T_2(pre(e))$. 
		We again let $\mathcal{M} = (W,R,V)$ be a Kripke model, and use the same notation as above for $\mathcal{M} \otimes \mathcal{E}$ and $\mathcal{M} \otimes T_1(\mathcal{E})$. 
		We first show the following claim.

		\medskip \noindent
		\emph{Claim 1}. Given $w \in W$ and $\gamma \in \mathcal{L}^{i+1}_\text{DEL}$, then $(\mathcal{M},w) \vDash \gamma$ iff $(\mathcal{M},w) \vDash T_2(\gamma)$. 
		
		\medskip
		\noindent \emph{Proof of Claim 1.} 
		The proof is by induction on the structure of $\gamma$: 
		\begin{itemize}
		   \item $\gamma = p$. We have $T_2(p) = p$, and the conclusion follows.
		   \item $\gamma = \neg \varphi$. Since $T_2(\neg \varphi) = \neg T_2(\varphi)$, the induction hypothesis immediately gives the result.
           \item $\gamma = \varphi \land \psi$. Since $T_2(\varphi \land \psi) = T_2(\varphi) \land T_2(\psi)$, the induction hypothesis immediately gives the result.
           \item $\gamma = B_a \varphi$. We have $T_2(B_a \varphi) = B_a T_2(\varphi)$, and again the induction hypothesis gives the result: $(\mathcal{M},w) \vDash B_a \varphi$ iff for all $v$ with $(w,v) \in R_a$, $(\mathcal{M},v) \vDash \varphi$ iff (by induction hypothesis) for all $v$ with $(w,v) \in R_a$, $(\mathcal{M},v) \vDash T_2(\varphi)$ iff $(\mathcal{M},w) \vDash B_a T_2(\varphi)$.
           \item $\gamma = [(\mathcal{E}',e')]\varphi$, case $(\mathcal{M},w) \nvDash pre(e')$. In this case $(\mathcal{M},w) \vDash \gamma$ by definition of the semantics of the dynamic modality (since $(\mathcal{E}',e')$ is not applicable in $(\mathcal{M},w)$). By definition of $T_2$, we have $T_2([(\mathcal{E}',e')]\varphi) = [(T_1(\mathcal{E}'),e')]T_2(\varphi)$. By definition of applicability for edge-conditioned event models, $(T_1(\mathcal{E}'),e')$ is also not applicable in $(\mathcal{M},w)$, and hence $(\mathcal{M},w) \vDash [(T_1(\mathcal{E}'),e')]T_2(\varphi)$, by the semantics of the dynamic modality.

		   \item $\gamma = [(\mathcal{E}',e')]\varphi$, case $(\mathcal{M},w) \vDash pre(e')$.  
		 As above, by definition of $T_2$, we have $T_2([(\mathcal{E}',e')]\varphi) = [(T_1(\mathcal{E}'),e')]T_2(\varphi)$. 
		   Since $\gamma \in \mathcal{L}^{i+1}_\text{DEL}$, we have that $cond(\mathcal{E}') \subseteq \mathcal{L}^i_\text{DEL}$. By $(\dagger)$, we then get that $\mathcal{M} \otimes \mathcal{E}'$ and $\mathcal{M} \otimes T_1(\mathcal{E}')$ are isomorphic. 
		   Thus: $(\mathcal{M},w) \vDash \gamma$ iff $(\mathcal{M},w) \vDash 
 		   [(\mathcal{E}',e')]\varphi$ iff 
		   $(\mathcal{M} \otimes \mathcal{E}', (w,e')) \vDash \varphi$ iff (by induction hypothesis)  $(\mathcal{M} \otimes \mathcal{E}', (w,e')) \vDash T_2(\varphi)$ iff (since $\mathcal{M} \otimes \mathcal{E}'$ and $\mathcal{M} \otimes T_1(\mathcal{E}')$ are isomorphic) 
		   $(\mathcal{M} \otimes T_1(\mathcal{E}'), (w,e')) \vDash T_2(\varphi)$ iff $(\mathcal{M},w) \vDash [(T_1(\mathcal{E}'),e')]T_2(\varphi)$ iff $(\mathcal{M},w) \vDash T_2(\gamma)$.
		\end{itemize}      
		This concludes the proof of the claim.

		\medskip
		By Claim 1, it in particular follows that for all $w \in W$ and $e \in E$, $(\mathcal{M},w) \vDash pre(e)$ iff $(\mathcal{M},w) \vDash T_2(pre(e))$, i.e. $(\mathcal{M},w) \vDash pre(e)$ iff $(\mathcal{M},w) \vDash pre'(e)$. 
		Hence $W' = \{ (u,g) \in W \times E \mid (\mathcal{M},u) \vDash pre(g) \} = 
		\{ (u,g) \in W \times E \mid (\mathcal{M},u) \vDash pre'(g) \} = W''$. Thus, as in the base case, we can define $f$ as the identity function on $W'$, and we again get that $p \in V'(w,e)$ iff $p \in V''(f(w,e))$ for all $p \in P$. It then again only remains to show that $f$ preserves the accessibility relations, which is proved exactly as in the base case. 
	\end{proof}

\begin{proof}[\textbf{Proof of Theorem \ref{prop: equivalence EDGY to SEM}}](%
	\emph{$\mathcal{C} \in \ecem(\mathcal{L}_\text{ECM})$ is update equivalent to $T_1'(\mathcal{C})$.})
	The proof follows the structure of the proof of Theorem~\ref{prop: equivalence SEM into EDGY}. 
		We first recall the definition of the transformation $T'_1$ (Def.~\ref{def: transformation edgys to SEMs}): 
	 For any edge-conditioned event model $(E,Q,pre)$ for $\mathcal{L}_\text{ECM}$ and any $e \in E$, let $\Phi(e)$ be the set of source and target conditions at $e$, i.e.  $\Phi(e)=\{\varphi\in\mathcal{L}_\text{ECM}\colon (e{:}\varphi,f{:}\psi)\in Q_a\}\cup \{\psi\in\mathcal{L}_\text{ECM}\colon (f{:}\varphi,e{:}\psi)\in Q_a\}$. Set $\Phi'(e)=\Phi(e)\cup \{\neg\varphi\colon \varphi\in \Phi(e)\}$ and let $\mathsf{mc}(e)$ denote the set of maximally consistent subsets of $\Phi'(e)$, i.e. $\mathsf{mc}(e)$ is such that for all $\Gamma\in \mathsf{mc}(e)$ the following holds: 1) $\Gamma\subseteq \Phi'(e)$; 2) $ \not\vDash\Gamma\rightarrow \bot$; 3) for all $\Gamma'\subseteq \Phi'(e)$ with $\Gamma\subset \Gamma'$, $\vDash\Gamma'\rightarrow  \bot$.
	By mutual recursion we define mappings $T_1': \ecem(\mathcal{L}_\text{ECM}) \to \sem(\mathcal{L}_\text{DEL})$ and 
	$T_2':\mathcal{L}_\text{ECM} \to \mathcal{L}_\text{DEL}$. The mapping $T_2'$ is given by:
	\[
		\begin{array}{ll} 
		 T_2'(p) = p, \text{ for } p \in P 
		 &
		 T_2'(\neg \varphi) = \neg T_2' (\varphi) \\
		 T_2'(\varphi \land \psi) = T_2'(\varphi) \land T_2'(\psi) &
		 T_2'(B_a \varphi) = B_a T_2'(\varphi) \\
		 T_2'([(\mathcal{E},e)]\varphi) = [(T_1'(\mathcal{E}),e) ] T_2'(\varphi) 
		\end{array}
	   \]
    The mapping $T_1'$ is defined by 
	$T_1'(E,Q,pre) = (E',Q',pre')$ where:
	\[
	\begin{array}{l}
		E'= \{(e,\Gamma)\colon e\in E, \Gamma\in \mathsf{mc}(e)\}, \\
		Q'_a  =  \{((e,\Gamma),(e',\Gamma'))\in E'\times E' \colon (e{:}\varphi, e'{:}\varphi')\in Q_a, \\
		\qquad \quad \varphi\in \Gamma, \varphi'\in \Gamma'\}, \\
		pre'(e,\Gamma) = T_2'(pre(e)\wedge \bigwedge\Gamma).
	\end{array}
	\]

    Let $\mathcal{C}=(E,Q,pre) \in \ecem(\mathcal{L}_\text{ECM})$ 
	and $T'_1(\mathcal{C})=(E',Q',pre')$. Note that $At(\mathcal{L}_\text{ECM}) = P$.
	We need to show that for any Kripke model $\mathcal{M}=(W,R,V)$ with atom set $P$, 
	 $\mathcal{M}\otimes\mathcal{C}=(W',R',V')$ is isomorphic to $\mathcal{M}\otimes{T'_1(\mathcal{C})}=(W'',R'',V'')$. This is by induction on the level of $\mathcal{C}$ in the hieararchy of languages, with the base case being $cond(\mathcal{C}) \subseteq \mathcal{L}_\text{ECM}^0$.

The proof of the base case proceeds similarly to the proof of Theorem 4.7 in Kooi and Rennes~[\citeyear{kooi2011arrow}]. First we prove the following claim.

\medskip \noindent
\emph{Claim 1} For every $(w,e)\in W'$, there is a unique $\Gamma_{(w,e)}\in \mathsf{mc}(e)$ such that $(\mathcal{M},w)\vDash \bigwedge\Gamma_{(w,e)}$.

\medskip \noindent 
\emph{Proof of Claim 1}. 
Consider an arbitrary $(w,e)\in W'$ and let $\langle \varphi_i \rangle^n_{i=0}$ be an enumeration of all formulas in $\Phi(e)$. For all $i\in \mathbb{N}$ with $i \le n$, define the formula $\psi^{(w,e)}_i$ by 

$\psi^{(w,e)}_i=\begin{cases}\varphi_i \quad\quad\hspace{4pt}\text{ if } (\mathcal{M},w)\vDash \varphi_i \\
	\neg\varphi_i\quad\quad\text{if } (\mathcal{M},w)\not\vDash \varphi_i.
\end{cases}$

Define $\Gamma_{(w,e)}=\{\psi_i^{(w,e)}\colon i\in \mathbb{N}, i\le n\}$. It follows by construction that $\Gamma_{(w,e)}$ is a maximally consistent set of source and target conditions of $e$, i.e. $\Gamma_{(w,e)}\in\mathsf{mc}(e)$. Moreover, $(\mathcal{M},w)\vDash \bigwedge\Gamma_{(w,e)}$ by construction. Now consider some $\Gamma\in\mathsf{mc}(e)$ such that $\Gamma\not=\Gamma_{(w,e)}$. Then it follows by maximal consistency of $\Gamma$ that there is a $j\in \mathbb{N}$ with $j\le n$ such that either $\varphi_j\in \Gamma$ and $\neg\varphi_j\in \Gamma_{(w,e)}$, or  $\neg\varphi_j\in \Gamma$ and $\varphi_j\in \Gamma_{(w,e)}$, but in each case it holds that $(\mathcal{M},w)\not\vDash \bigwedge\Gamma$. Hence, for any $(w,e)\in W'$, $\Gamma_{(w,e)}$ is the unique set in $\mathsf{mc}(e)$ that satisfies $(\mathcal{M},w)\vDash \bigwedge\Gamma_{(w,e)}$. This concludes the proof of the claim. 

\medskip
To show isomorphism, we need to show that there exists a bijection between $W'$ and $W''$ that satisfies properties 1 and 2 of Definition~\ref{def:isomorphism} above. Let $f:W'\rightarrow W''$ be defined by $f(w,e)=(w,(e,\Gamma_{(w,e)}))$, i.e.\ where $\Gamma_{(w,e)}$ is the unique set in $\mathsf{mc}(e)$ such that $(\mathcal{M},w)\vDash \bigwedge\Gamma_{(w,e)}$. To show that $f$ is an isomorphism, we first need to show that $f$ is a bijection, namely that it is injective and surjective.

Injectivity: We want to show that if $(w,e)\not=(w',e')$ then $f(w,e)\not=f(w',e')$. We prove the contrapositive. So suppose $f(w,e)=f(w',e')$. Then we get that $(w,(e,\Gamma_{(w,e)}))= f(w,e) = f(w',e') = (w',(e',\Gamma_{(w',e')}))$, immediately implying $w = w'$ and $e = e'$.
Surjectivity: We want to show that for all $(w,(e,\Gamma))\in W''$ there is a $(w,e)\in W'$ such that $f(w,e)=(w,(e,\Gamma))$. 
Let $(w,(e,\Gamma))\in W''$. 
Then $(\mathcal{M},w)\vDash pre'(e,\Gamma)$ by definition of product update, and hence $(\mathcal{M},w)\vDash T_2' (pre(e)\wedge \bigwedge\Gamma)$ by definition of $T_1'(\mathcal{C})$. Since $cond(\mathcal{C}) \subseteq \mathcal{L}_\text{ECM}^0$, $pre(e)\wedge \bigwedge\Gamma$ is a static formula, and we then have $T_2'(pre(e)\wedge \bigwedge\Gamma)=pre(e)\wedge \bigwedge\Gamma$. Hence, $(\mathcal{M},w)\vDash pre(e)\wedge \bigwedge\Gamma$. As $e\in E$ and $(\mathcal{M},w)\vDash pre(e)$, then $(w,e)\in W'$. It follows that $f(w,e) = (w,(e,\Gamma_{(w,e)}))$, so now we only need to prove that $\Gamma=\Gamma_{(w,e)}$. That follows immediately from Claim 1, since $(\mathcal{M},w)\vDash \bigwedge\Gamma$.

Now we know that $f$ is a bijection. Left is to prove that $f$ satisfies properties 1 and 2 of Definition~\ref{def:isomorphism} above.
For the first property, consider an arbitrary $(w,e)\in W'$ and  $p\in P$. Then $p\in V'(w,e)$ iff (by definition of edge-conditioned product update) $p\in V(w)$ iff (by definition of standard product update) $p\in V''(w,(e,\Gamma_{(w,e)}))$. Hence, $p\in V'(w,e)$ iff  $p\in V''(f(w,e))$. 

For the second property, we want to show that $((w,e),(v,g))\in R'_a$ iff $(f(w,e),f(v,g))\in R''_a$, for all $(w,e),(v,g)\in W'$ and $a\in Ag$. So let $((w,e),(v,g))\in R'_a$ for arbitrary $(w,e),(v,g)\in W'$ and $a\in Ag$. By definition of edge-conditioned product update, $(w,v)\in R_a$ and there exists $\varphi,\psi$ such that $(e{:}\varphi, g{:}\psi)\in Q_a$ and $(\mathcal{M},w)\vDash \varphi$ and $(\mathcal{M},v)\vDash \psi$. By definition of $T_1'$, since $(e{:}\varphi, g{:}\psi)\in Q_a$, then $((e,\Gamma),(g,\Gamma'))\in Q'_a$, for all events $(e,\Gamma),(g,\Gamma') \in E'$ that are such that $\varphi\in\Gamma$ and $\psi\in\Gamma'$. As $\varphi$ is a source condition of $e$ and $(\mathcal{M},w) \vDash \varphi$, we get  
$\varphi\in \Gamma_{(w,e)}$. Similarly, as 
$\psi$ is a target condition of $g$ and $(\mathcal{M},v) \vDash \psi$, we get 
$\psi\in\Gamma_{(v,g)}$. Hence  $((e,\Gamma_{(w,e)}),(g,\Gamma_{(v,g)}))\in Q'_a$. 
By definition of standard product update and by $(w,v)\in R_a$, this implies that $((w,(e,\Gamma_{(w,e)})),(v,(g,\Gamma_{(v,g)})))\in R''_a$, i.e.\ 
 $(f(w,e),f(v,g))\in R''_a$, as required. 

%

For the other direction, let $(f(w,e),f(v,g))\in R''_a$, i.e. $((w,(e,\Gamma_{(w,e)})),(v,(g,\Gamma_{(v,g)})))\in R''_a$. By definition of product update, $(w,v)\in R_a$ and $((e,\Gamma_{(w,e)}),(g,\Gamma_{(v,g)}))\in Q'_a$. By definition of $T_1'$, $(e{:}\varphi, g{:}\psi)\in Q_a$ with $\varphi\in \Gamma_{(w,e)}$ and $\psi\in \Gamma_{(v,g)}$. By Claim 1, $(\mathcal{M},w)\vDash \bigwedge \Gamma_{(w,e)}$ and$(\mathcal{M},v)\vDash \bigwedge \Gamma_{(v,g)}$, and so we know that $(\mathcal{M},w)\vDash \varphi$ and $(\mathcal{M},v)\vDash \psi$. Since $(w,v)\in R_a$ and since there are $\varphi$ and $\psi$ such that $(e{:}\varphi,f{:}\psi)\in Q_a$ and $(\mathcal{M},w)\vDash \varphi$ and  $(\mathcal{M},v)\vDash \psi$, then by definition of product update for edge-conditioned event models, $((w,e),(v,g))\in R'_a$, as required.

Now we have covered the base case. The induction step follows the same strategy as in the proof of Theorem~\ref{prop: equivalence SEM into EDGY}. We consider $\mathcal{C}$ of level $i+1$, and the induction hypothesis is then: For all Kripke models $\mathcal{M}$ and all edge-conditioned event models $\mathcal{C}$ of level $i$, $\mathcal{M}\otimes\mathcal{C}$ and 
$\mathcal{M}\otimes T'_1(\mathcal{C})$ are isomorphic ($\ddagger$). 

\medskip
\noindent \emph{Claim 2} For all $w \in W$ and $\gamma \in \mathcal{L}_\emph{ECM}^{i+1}$, $(\mathcal{M},w) \vDash \gamma$ iff $(\mathcal{M},w) \vDash T_2'(\gamma)$. 

\medskip
\noindent \emph{Proof of Claim 2}. 
Note that: 1) ($\ddagger$) is the same condition as ($\dagger$) in the proof of Theorem~\ref{prop: equivalence SEM into EDGY}, except we have replaced $T_1$ by $T_1'$ and $\mathcal{E}$ by $\mathcal{C}$; 2) $T_2'$ is the same translation as $T_2$, except we have replaced $T_i$ by $T_i'$ and $\mathcal{E}$ by $\mathcal{C}$. In other words, the proof of Claim 1 in Theorem~\ref{prop: equivalence SEM into EDGY} immediately translates into this setting, when replacing $T_i$ by $T_i'$, $\mathcal{E}$ by $\mathcal{C}$, and $(\dagger)$ by ($\ddagger$). This concludes the proof of the claim. 

Now note that the only time in the proof of the base case that we used the base case assumption $cond(\mathcal{C}) \subseteq \mathcal{L}_\text{ECM}^0$ was to conclude from $(\mathcal{M},w) \vDash T_2'(pre(e) \land \bigwedge \Gamma)$ to 
$(\mathcal{M},w) \vDash pre(e) \land \bigwedge \Gamma$. Since $\mathcal{C}$ in the induction step has level $i+1$, then $cond(\mathcal{C}) \subseteq \mathcal{L}_\text{ECM}^{i+1}$. This implies $pre(e) \in \mathcal{L}_\text{ECM}^{i+1}$ as well as $\Gamma \subseteq \mathcal{L}_\text{ECM}^{i+1}$. Hence also $pre(e) \land \bigwedge \Gamma \in \mathcal{L}_\text{ECM}^{i+1}$, and Claim 2 then gives us that we can conclude from $(\mathcal{M},w) \vDash T_2'(pre(e) \land \bigwedge \Gamma)$ to 
$(\mathcal{M},w) \vDash pre(e) \land \bigwedge \Gamma$. In other words, the same proof as for the base case goes through with this extra line of argumentation.  
\end{proof}

%
\begin{proof}[\textbf{Proof of Theorem~\ref{thm:linear-growth-SEM}}](\emph{For any $\mathcal{E} \in \sem(\mathcal{L}_\text{DEL})$, $T_1(\mathcal{E})$ has size $O(|\mathcal{E}|)$.}%
	) Let $\mathcal{E}=(E,Q,pre) \in \sem(\mathcal{L}_\text{DEL})$  and $T_1(\mathcal{E})=(E,Q',pre')$. 
	The proof is by induction on the level of $\mathcal{E}$ in the language hierarchy. To make the induction step go through, we prove a slightly stronger claim. Define 
	$size(Q_a') = \Sigma_{a\in Ag}( |Q_a'| + \Sigma_{(e:\varphi, f:\psi)\in Q_a'} (|\varphi| + |\psi|))$. 
	Note that then $|\mathcal{E}| = |E| + \Sigma_{a \in Ag} |Q_a| + \Sigma_{e\in E} |pre(e)|$ and $|T_1(\mathcal{E})| = |E| + \Sigma_{a \in Ag} size(Q_a') + \Sigma_{e\in E} |pre'(e)|$, using the definitions of sizes in Section~\ref{section:sizes}. We are now going to prove that $size(Q_a') \leq 3|Q_a|$ and $|pre'(e)| \leq 3|pre(e)|$ for all $a \in Ag$ and $e \in E$, which then implies $|T_1(\mathcal{E})| \leq 3|\mathcal{E}|$, and hence $|T_1(\mathcal{E})| \in O(|\mathcal{E}|)$.

	
	We first consider the base case $cond(\mathcal{E}) \subseteq \mathcal{L}_\text{DEL}^0$. As in the proof of Theorem~\ref{prop: equivalence SEM into EDGY}, 
 we can then conclude $pre' = pre$, and hence $|pre'(e)| = |pre(e)|$ for all $e \in E$.	
	Furthermore, by definition of $T_1$, 
	$(e,f)\in Q_a$ iff $(e{:}\top,f{:}\top)\in Q'_a$. This implies $|Q_a| = |Q'_a|$ for all $a \in Ag$.
	Since all elements of $Q_a'$ are of the form $(e{:}\top,f{:}\top)$, all source conditions $\varphi$ and all target conditions $\psi$ of $Q_a'$ satisfy $\varphi = \psi = \top$, and hence $|\varphi| = |\psi| =1$. Thus 
	 $size(Q_a') = |Q_a'| + \Sigma_{(e{:}\varphi,f{:}\psi)\in Q_a'}(|\varphi|+|\psi|) = |Q_a| + 2|Q_a| \leq  3|Q_a|$, as required. 
    This concludes the base case. 

	For the induction step, suppose $cond(\mathcal{E}) \subseteq \mathcal{L}_\text{DEL}^{i+1}$, and assume the result holds when $cond(\mathcal{E}) \subseteq \mathcal{L}_\text{DEL}^{i}$. We have $T_1(\mathcal{E})=(E,Q',pre')$. As $E$ and $Q'$ are as in the base case, it suffices to prove that 
	$ |pre'(e)| \leq 3 |pre(e)|$ for all $e \in E$.
   Let $e \in E$ be given. 
	 Since $cond(\mathcal{E}) \subseteq \mathcal{L}_\text{DEL}^{i+1}$, we have $pre(e) \in \mathcal{L}_\text{DEL}^{i+1}$. 
	 By definition of $T_1$, $pre'(e) = T_2(pre(e))$. 
	 So we need to prove that $|T_2(pre(e))| \leq 3|pre(e)|$. We will prove the following more general claim.

	 \medskip
	 \noindent
	 \emph{Claim 1}. For any $\gamma \in \mathcal{L}_\text{DEL}^{i+1}$, $|T_2(\gamma)| \leq 3|\gamma|$.

	 \medskip
	 \noindent \emph{Proof of Claim 1}. The proof is by induction on the structure of $\gamma$. As in the proof of Theorem~\ref{prop: equivalence SEM into EDGY}, all cases except $\gamma = [(\mathcal{E'},e')] \varphi$ are trivial. So suppose that  $\gamma = [(\mathcal{E'},e')] \varphi$. Since $\gamma \in \mathcal{L}_\text{DEL}^{i+1}$, we have $cond(\mathcal{E'}) \subseteq \mathcal{L}_\text{DEL}^{i}$. By the induction hypothesis of our overall proof, we then get that $|T_1(\mathcal{E}')| \leq 3|\mathcal{E'}|$. We can now conclude that $|T_2(\gamma)| = |T_2([(\mathcal{E'},e')] \varphi)| = |[T_1(\mathcal{E'}),e']T_2(\varphi)| = |
	 [T_1(\mathcal{E'}),e']| + |T_2(\varphi)| \leq  
	 3|
	 [\mathcal{E'},e']| + 3|\varphi| = 3|[\mathcal{E'},e']\varphi| = 3|\gamma|$, where the inequality uses both the induction hypothesis of this claim as well as that $|T_1(\mathcal{E}')| \leq 3|\mathcal{E'}|$. This completes the proof of the claim.
	 
    From the claim we immediately get $|T_2(pre(e))| \leq 3|pre(e)|$, since $pre(e) \in \mathcal{L}_\text{DEL}^{i+1}$, and we are hence done.
\end{proof}

\begin{proof}[\textbf{Proof of Theorem~\ref{thm:exponential-succinctness}}]  (\emph{Exponential succinctness of edge-conditioned event models}.)
	Let 
	$\mathcal{H}(p)  = ((E,Q,id_E),p)$ be the edge-conditioned event model for propositional attention representing the revelation of $p$, over some set of agents $Ag$.  
	 The set of events is $E= \{p, \top\}$, so the event model only has a constant number of events, independent of the size of $Ag$. Consulting Definition~\ref{def:edge-conditioned-prop-attention}, we see that each agent $a \in Ag$ has one edge per choice of sets $T,S$ with $T \subseteq S \subseteq \{ p \}$, i.e., for $(T,S) = (\{p\},\{p\})$, $(T,S) = (\emptyset,\{p\})$, and $(T,S) = (\emptyset,\emptyset)$. The corresponding edges in $Q_a$ are $(p{:}A_a p, p{:} A_a p)$, $(p{:} \neg A_a p ,\top{:} \top)$ and $(\top{:} \top, \top{:} \top)$. This amounts to, for each agent $a \in Ag$, a loop at each of the events $p$ and $\top$ as well as an edge from $p$ to $\top$. The full model is illustrated in Fig.~\ref{figure:H(p)} of this document, using the same conventions as Fig.~\ref{fig2:comparison-right} (note that, as expected, the event model of Fig.~\ref{figure:H(p)} is a submodel of the event model of Fig.~\ref{fig2:comparison-right}: the submodel consisting of the events and edges not mentioning $q$). 
	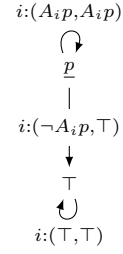
\begin{figure}     
	\begin{center}
	\begin{tikzpicture}
					\tikzset{scale=1, 
						world/.style={}, 
						actual/.style={}
					}
					\node [actual,fill=white] (!) at (0,0) {$\s \underline{p}$};
					\path (!) edge [-latex, looseness=6,in=70,out=110] (!) node [above, xshift=0pt, yshift=15pt] {$\s i:(A_i p, A_i p)$};

					\node [world] (T) at (0,-1.5) {$\s \top$};
										
					\path (T) edge [-latex, looseness=6,in=-110,out=-70] node [below,xshift=0pt] {$\s i:(\top, \top)$} (T);
					
					\path (!) edge[-latex,bend right=0] node[fill=white,above,yshift=-7pt,align=left,xshift=0pt]{$\s i:(\neg A_i p, \top)$}
					(T);
			
			\end{tikzpicture}
	\end{center}
   \caption{The event model $\mathcal{H}(p)$.} \label{figure:H(p)}
\end{figure} 
Each edge has constant size, also counting the source and target conditions, and as there are 3 such edges per agent, the total size of $\mathcal{H}(p)$ is $O(|Ag|)$. To make the argument a bit more formally precise, we can compute the size of $\mathcal{H}(p)$ using the relevant formula from Section~\ref{section:sizes}. First, since $|E| = 2$ (there are two events) and $|Q_a| = 3$ (each agent has 3 edges) and $|p| = |T| = 1$ (the event preconditions have length 1), we get:    
\[
\setlength{\arraycolsep}{0.1em}
\begin{array}{rl} 
|\mathcal{H}(p)| &= | E| + \!\!  \displaystyle\sum_{a \in Ag} \bigl(|Q_a|\ + \!\!\!\!\!\!\!\displaystyle\sum_{(e:\varphi, f:\psi)\in Q_a} \!\!\!\!(|\varphi| + |\psi|) \bigr) + \displaystyle\sum_{e \in E} |pre(e)| \\
&= 2 + \displaystyle\sum_{a \in Ag} \bigl(3   + \!\!\!\!\!\!\!\displaystyle\sum_{(e:\varphi, f:\psi)\in Q_a} \!\!\!\!(|\varphi| + |\psi|) \bigr) + 2 
\end{array}
\]
As all edge conditions have length $\leq 4$ (the longest is $\neg A_a p$), we further get:
\[
\setlength{\arraycolsep}{0.1em}
\begin{array}{rl} 
|\mathcal{H}(p)| &= 4 + \displaystyle\sum_{a \in Ag} \bigl(3   + \!\!\!\!\!\!\!\displaystyle\sum_{(e:\varphi, f:\psi)\in Q_a} \!\!\!\!(|\varphi| + |\psi|) \bigr)  \\
&\leq 4 + \displaystyle\sum_{a \in Ag} (3 +2 \cdot 4) =  4 + 11 |Ag| \in O(|Ag|).
\end{array}
\]
This proves 1. 

   	To prove 2, suppose $\mathcal{E}' = ((E',Q',pre'),E_d)$ is any pointed or multi-pointed standard event model update equivalent to $\mathcal{H}(p)$. We need to prove that $\mathcal{E}'$ has at least $2^{|Ag|}$ events. To do this, we will create $2^{|Ag|}$ distinct pointed Kripke models and show that $\mathcal{E}'$ must include a distinct event for each of these Kripke models. We will create one such Kripke model for each subset $Ag' \subseteq Ag$. To keep the notation simple, we for now fix an arbitrary subset $Ag' \subseteq Ag$, so that we do not need to index everything by $Ag'$ in the following. We then define the relevant pointed Kripke model $(\mathcal{M},w_1) = ((W,R,V),w_1)$ by letting $W = \{w_1,w_2\}$, $R_a = \{(w_1,w_2) \},$ for all $a \in Ag$, and $V(w_1) = \{p\} \cup \{ A_a p: a \in Ag' \}$, $V(w_2) = \emptyset$.
	We let $(\mathcal{M},w_1) \otimes \mathcal{H}(p) = ((W',R',V'),(w_1,p))$ (as $p \in V(w_1)$ and $pre(p) = p$, $\mathcal{H}(p)$ is applicable in $(\mathcal{M},w_1)$, and $(w_1,p)$ becomes the actual world of the updated model). 
    In the following, we call a world $w$ of a Kripke model a \emph{$\neg p$-world} if $p$ is false in $w$. 
	For a pointed or multi-pointed event model $\mathcal{E}$, we allow ourselves to also use $\mathcal{E}$ to denote the event model without the actual event(s). It will always be clear from the context whether $\mathcal{E}$ refers to the event model without or without the actual event(s) specified.  
	
	
 
   \medskip
   \noindent \emph{Claim 1}. 
   In $\mathcal{M} \otimes \mathcal{H}(p)$, the world $(w_1,p)$ has an $a$-edge to a $\neg p$-world iff $a \in Ag \setminus Ag'$. 
 
   \medskip
   \noindent \emph{Proof of Claim 1}.
   We first prove the left-to-right direction. So suppose $((w_1,p),(v,e)) \in R'_a$ and  $\mathcal{M} \otimes \mathcal{H}(p), (v,e) \vDash \neg p$ for some agent $a \in Ag$ and some world $(v,e) \in W'$. We need to show that $a \in Ag \setminus Ag'$. Since $(v,e)$ satisfies $\neg p$, we must have $e = \top$, as the only other event of $\mathcal{H}(p)$ has precondition $p$ (see Figure~\ref{figure:H(p)}). Thus $((w_1,p),(v,\top)) \in R'_a$.  
   From this, the definition of edge-conditioned product update (Def.~\ref{def:edge-conditioned-product-update}) gives us that there exists $\varphi,\psi$ such that $(p{:}\varphi,\top{:}\psi) \in Q_a$ and $(\mathcal{M},w_1) \models \varphi$ (recalling that $Q_a$ is the accessibility relation for $a$ of $\mathcal{H}(p)$). From the definition of $Q_a$ above, we can conclude that $\varphi = \neg A_a p$ and $\psi = \top$. Thus we get $(\mathcal{M},w_1) \models \neg A_a p$. Since $V(w_1) = \{p\} \cup \{ A_a p: a \in Ag' \}$, we must then have $a \not\in Ag'$. This proves the left-to-right direction. For the right-to-left direction, suppose $a \in Ag \setminus Ag'$. Then $(\mathcal{M},w_1) \models \neg A_a p$. We now have $(w_1,w_2) \in R_a$, $(p{:}\neg A_a p, \top{:}\top) \in Q_a$, $(\mathcal{M},w_1) \models \neg A_a p$ and $(\mathcal{M},w_2) \models \top$, which by the definition of edge-conditioned product update implies $((w_1,p),(w_2,\top)) \in R_a'$. As $V(w_2) = \emptyset$, $w_2$ is a $\neg p$-world, and then so is $(w_2,\top)$. 
   Thus $(w_1,p)$ has an $a$-edge to a $\neg p$-world, as required. This completes the right-to-left direction and hence the full proof of the claim.

   \smallskip
    Since $\mathcal{E}'$ is update equivalent to $\mathcal{H}(p)$ and $\mathcal{H}(p)$ is applicable in $(\mathcal{M},w_1)$, also $\mathcal{E}'$ must be applicable in $(\mathcal{M},w_1)$ (see Def.~\ref{def:update equivalence}). Thus $(\mathcal{M},w_1) \otimes \mathcal{E}' = ((W'',R'',V''),(w_1,e_d))$ for some $e_d \in E_d$. 
	By Def.~\ref{def:update equivalence}, we further get that  
	$(\mathcal{M},w_1) \otimes \mathcal{H}(p)$ is bisimilar to $(\mathcal{M},w_1) \otimes \mathcal{E}'$, and hence there exists a bisimulation relation $Z \subseteq W' \times W''$ satisfying the conditions of Def.~\ref{def: bisimulation}.
By Def.~\ref{def: bisimulation}, $((w_1,p),(w_1,e_d)) \in Z$
	(the actual worlds of the respective models are related by $Z$).   
		
	   \medskip
   \noindent \emph{Claim 2}. 
    The world $(w_1,p)$ has an $a$-edge to a $\neg p$-world iff $(w_1,e_d)$ has. 

	   \medskip
	      \noindent \emph{Proof of Claim 2}.
	We first prove the left-to-right direction. So suppose $((w_1,p),(w',e')) \in R_a'$ with $(w',e')$ satisfying $\neg p$. From [Forth] of the definition of bisimulation (Def.~\ref{def: bisimulation}), there exists $(w'',e'')$ such that $((w_1,e_d),(w'',e'')) \in R_a''$ and $((w',e'),(w'',e'')) \in Z$. From [Atom] applied to $((w',e'),(w'',e'')) \in Z$, we then get that also $(w'',e'')$ satisfies $\neg p$. This completes the proof of the left-to-right direction. The other direction is symmetric, using [Back] instead of [Forth]. This completes the proof of the claim.            

	\medskip 
	\noindent \emph{Claim 3}. The world $(w_1,e_d)$ has an $a$-edge to a $\neg p$-world iff $e_d$ has an $a$-edge to an event $f$ with $(\mathcal{M},w_2) \models pre(f)$.

	\medskip
	\noindent \emph{Proof of Claim 3}. Left-to-right: Suppose $(w_1,e_d)$ has an $a$-edge to a $\neg p$-world $(w',e')$. Then by definition of the standard product update, there is an $a$-edge from $w_1$ to $w'$ and an $a$-edge from $e_d$ to $e'$, and $(\mathcal{M},w') \models pre(e')$. Since $(w',e')$ is a $\neg p$-world, we must have $w' = w_2$. Letting $f = e'$, we thus have an $a$-edge from $e_d$ to $f$ with $(\mathcal{M},w_2) \models pre(f)$. This proves the left-to-right direction. For right-to-left, suppose $e_d$ has an $a$-edge to an event $f$ with $(\mathcal{M},w_2) \models pre(f)$. Note that by definition of $\mathcal{M}$, there is also an $a$-edge from $w_1$ to $w_2$. By definition of standard product update, we then get that $\mathcal{M} \otimes \mathcal{E}'$ has an $a$-edge from $(w_1,e_d)$ to $(w_2,f)$, and as $w_2$ and hence $(w_2,f)$ are $\neg p$-worlds, we are done. This completes the proof of the claim.

  \medskip
     Putting Claims 1, 2 and 3 together, we now have that $e_d$ has an $a$-edge to an event $f$ with $(\mathcal{M},w_2) \models pre(f)$ iff $a \in Ag \setminus Ag'$. Until now we kept the subset $Ag'$ fixed for simplicity, but we will now start to vary it. As the chosen $e_d$ might depend on the choice of $Ag'$, we will now refer to it as $e_d^{Ag'}$. What we have then proved is that for any subset $Ag' \subseteq Ag$, the event $e_d^{Ag'}$ has an $a$-edge to an
	 event $f$ with $(\mathcal{M},w_2) \models pre(f)$ iff $a \in Ag \setminus Ag'$. 	 
	 This shows that for any distinct subsets $Ag', Ag'' \subseteq Ag$, the events $e_d^{Ag'}$ and $e_d^{Ag''}$ must be distinct events of $\mathcal{E}'$, since they have distinct sets of outgoing edges (here it is important to note that $w_2$ satisfies no atoms and has no outgoing edges, so whether $(\mathcal{M},w_2) \models pre(f)$ holds or not is completely independent of the choice of $Ag'$). Thus there must be at least as many events in $\mathcal{E}'$ as there are distinct subsets of $Ag$, i.e., at least $2^{|Ag|}$ events. This proves 2. 
\end{proof}
Note that the proof above even proves that the number of \emph{actual} events of any standard event model $\mathcal{E}$ that is update equivalent to $\mathcal{H}(p)$ is exponential in the number of agents. However, this is also exactly what we see in the specific standard event model $\mathcal{F}(p)$ that is known to be update equivalent to $\mathcal{H}(p)$, see Fig.~\ref{fig2:comparison-left}.

	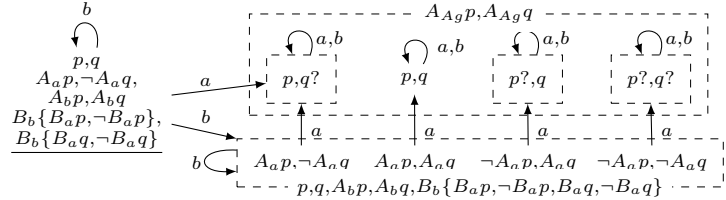
\begin{figure*}
	\begin{center}
		\begin{tikzpicture}[auto]\tikzset{ 
				deepsquare/.style ={rectangle,draw=black, inner sep=2.5pt, very thin, dashed, minimum height=3pt, minimum width=1pt, text centered}, 
				world/.style={}, 
				actual/.style={},
				cloudy/.style={cloud, cloud puffs=15, cloud ignores aspect, inner sep=0.5pt, draw}
			}
			
			\node [actual] (!) {$\underline{\s \substack{p, q \\A_a p, \neg A_aq, \\ A_b p, A_bq \\ B_b \{B_a p, \neg B_a p\},\\B_b\{B_a q, \neg B_a q\}}}$};
			\path (!) edge [-latex, loop, looseness=4,in=100,out=80] node [above] {$\s b$} (!);
			
			\node [world, right=of !, xshift=10pt, yshift=10pt] (1) {$\s p,q?$};
			\node [deepsquare, fit={(1)}](square1) {};
			\path (square1) edge [-latex, loop, looseness=4.6,in=110,out=70] (square1) node [right, xshift=2pt, yshift=15pt] {$\s a,b$};
			\path (!) edge [-latex] (square1) node [above, xshift=45pt, yshift=4pt] {$\s a$};
			
			\node [world, right=of 1, xshift=-5pt] (2) {$\s p,q$};
			\path (2) edge [-latex, looseness=6,in=110,out=60] (2) node [right, xshift=5pt, yshift=12pt] {$\s a,b$};

			\node [world, right=of 2, xshift=-5pt] (3) {$\s p?,q$};
			\node [deepsquare, fit={(3)}](square3) {};
			\path (square3) edge [-latex, looseness=4.6,in=110,out=70] (square3) node [right, yshift=15pt, xshift=2pt] {$\s a,b$};
			
			\node [world, right=of 3, xshift=-5pt] (4) {$\s p?,q?$};
			\node [deepsquare, fit={(4)}](square4) {};
			\path (square4) edge [-latex, looseness=4.6,in=110,out=70] (square4) node [right, yshift=15pt, xshift=2pt] {$\s a,b$};
			
			\node[left=of 1, xshift=29pt] (anchor1) {};
			\node[above=of 1, yshift=-20pt] (anchor2) {};
			\node[below=of 1, yshift=31pt] (anchor3) {};
			\node[right=of 4, xshift=-29pt] (anchor4) {};
			\node [deepsquare,fit={(anchor1)(anchor2)(anchor3)(anchor4)(1)(2)(3)(4)}](squareb) {};
			\node[fill=white] (square name 1) at (squareb.north) [yshift=-0pt]{$\s A_{Ag} p, A_{Ag}q$};

			\node [world, below=of 1, yshift=10pt] (1b) {$\s A_ap,\neg A_aq$};
			\path (1b.north) edge [-latex] (square1.south) node [right, yshift=5pt] {$\s a$};
			
			\node [world, below=of 2, right=of 1b, xshift=-26.5pt] (2b) {$\s A_ap,A_aq$};
			\path (2b.north) edge [-latex] (2.south) node [right, yshift=5pt] {$\s a$};
			
			\node [world, below=of 3, right=of 2b, xshift=-26.5pt] (3b) {$\s \neg A_ap,A_aq$};
			\path (3b.north) edge [-latex] (square3.south) node [right, yshift=5pt] {$\s a$};			
			
			\node [world, below=of 3, right=of 3b, xshift=-28pt] (4b) {$\s \neg A_ap,\neg A_aq$};
			\path (4b.north) edge [-latex] (square4.south) node [right, yshift=5pt] {$\s a$};			
			
			\node [deepsquare,fit={(1b)(2b)(3b)(4b)}](squareb) {};
			\node[fill=white] (square name 1) at (squareb.south) [yshift=-0pt]{$\s p, q, A_b p, A_b q, B_b \{B_a p, \neg B_a p,B_a q, \neg B_a q\}$};
			\path (squareb) edge [-latex, looseness=5,in=182,out=177] (squareb) node [left, xshift=-102pt, yshift=0pt] {$\s b$};
			
			\path (!) edge [-latex] (squareb.north west) node [above, xshift=45pt, yshift=-10pt] {$\s b$};
			
		\end{tikzpicture}
	\end{center}
	
	\caption{The attention model $(\mathcal{M}',w')$ for $\mathcal{L}_{\text{GA}}$. We use the same conventions as in the figures in the main text.
	}\label{figure: first general attention}
\end{figure*} 

\section{Comparisons of edge-conditioned event models with generalized arrow updates}

%


We now compare generalized arrow updates to edge-conditioned event models, following the same pattern as in the comparison of standard event models to edge-conditioned event models. The constructions below follow the same structure and ideas as above, so we will be more brief. 

\begin{definition}[Transformation of generalized arrow updates into edge-conditioned event models]
	By mutual recursion we define mappings $T''_1: \gau(\mathcal{L}_\text{GAU}) \to \ecem(\mathcal{L}_\text{ECM})$ and
	$T''_2: \mathcal{L}_\text{GAU} \to \mathcal{L}_\text{ECM}$. 
	Define $T''_1$ by $T''_1(O,\mathsf{a}) = (E,Q,pre)$ where:
	\[
	  \begin{array}{l} 
      E = O \\
      Q_a =\{(o{:}T_2''(\varphi),o'{:}T''_2(\varphi'))\colon (\varphi,o',\varphi’)\in \mathsf{a}_a(o)\},  a \in Ag \\
	  	  pre(o) = \top, \text{ for all } o \in E 
	  \end{array}
	\]
Define $T''_2$ by:
	\[ 
	  \begin{array}{l} 
	   T''_2(p) = p, \text{ for }  p \in P \\[0.1em]
	   T''_2(\neg \varphi) = \neg T''_2 (\varphi) \\[0.1em]
	   T''_2(\varphi \land \psi) = T''_2(\varphi) \land T_2(\psi) \\[0.1em]
	   T''_2(B_a \varphi) = B_a T''_2(\varphi) \\[0.1em] 
	   T''_2([(\mathcal{U},o)]\varphi) = [(T''_1(\mathcal{U}),o) ] T''_2(\varphi) 
	  \end{array}
	 \]
\end{definition}
In Def.~\ref{def:update equivalence}, update equivalence is defined where $\mathcal{E}$ and $\mathcal{E}'$ can either be standard or edge-conditioned event models. That definition trivially extends to allow the dynamic models to be generalized arrow updates.
\begin{theorem}
	$\mathcal{U} \in \gau(\mathcal{L}_\text{GAU})$
	is update equivalent to $T_1''(\mathcal{U})$.
\end{theorem}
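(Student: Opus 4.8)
The plan is to prove something slightly stronger than update equivalence, exactly as in the proof of Theorem~\ref{prop: equivalence SEM into EDGY}: for every Kripke model $\mathcal{M}$ with atom set $P$, the models $\mathcal{M}\otimes\mathcal{U}$ and $\mathcal{M}\otimes T_1''(\mathcal{U})$ are \emph{isomorphic}, and since isomorphism is a special case of bisimulation, update equivalence follows. The argument proceeds by induction on the level of $\mathcal{U}$ in the language hierarchy. First I would observe that the underlying world sets literally coincide: the generalized arrow update product has $W' = W\times O$ with no precondition filtering, and since $T_1''(\mathcal{U}) = (E,Q,pre)$ has $E = O$ and $pre(o) = \top$ for every outcome $o$, the edge-conditioned product also has world set $W\times O$; both inherit the original valuations unchanged, so valuations agree. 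Hence one takes the candidate isomorphism $f$ to be the identity on $W\times O$, and the only remaining task is to check that $f$ preserves the accessibility relations.

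For the accessibility relations, I would unfold both definitions: $((w,o),(v,o'))\in R'_a$ in $\mathcal{M}\otimes\mathcal{U}$ iff $(w,v)\in R_a$ and there is an arrow $(\varphi,o',\varphi')\in\mathsf{a}_a(o)$ with $(\mathcal{M},w)\vDash\varphi$ and $(\mathcal{M},v)\vDash\varphi'$; whereas $((w,o),(v,o'))\in R''_a$ in $\mathcal{M}\otimes T_1''(\mathcal{U})$ iff $(w,v)\in R_a$ and there is a conditioned edge $(o{:}\chi,o'{:}\chi')\in Q_a$ with $(\mathcal{M},w)\vDash\chi$ and $(\mathcal{M},v)\vDash\chi'$. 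By the definition of $Q_a$ in $T_1''$, the conditioned edges from $o$ to $o'$ are precisely the pairs $(o{:}T_2''(\varphi),o'{:}T_2''(\varphi'))$ for $(\varphi,o',\varphi')\in\mathsf{a}_a(o)$, so the two conditions match provided $(\mathcal{M},u)\vDash\varphi$ iff $(\mathcal{M},u)\vDash T_2''(\varphi)$ for the relevant worlds $u$ and conditions $\varphi$. This is the analogue of ``Claim~1'' in the proof of Theorem~\ref{prop: equivalence SEM into EDGY}: prove by induction on the structure of the formula that $(\mathcal{M},w)\vDash\gamma$ iff $(\mathcal{M},w)\vDash T_2''(\gamma)$ for $\gamma\in\mathcal{L}_\text{GAU}^{i+1}$, the only non-trivial case being $\gamma = [(\mathcal{U}',o'')]\psi$, which is handled using the outer induction hypothesis (that $\mathcal{M}\otimes\mathcal{U}'$ and $\mathcal{M}\otimes T_1''(\mathcal{U}')$ are isomorphic for $\mathcal{U}'$ of strictly lower level) together with the structural induction hypothesis on $\psi$.

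The one place this differs from Theorem~\ref{prop: equivalence SEM into EDGY} — and the only real bookkeeping obstacle — is tracking where $T_2''$ is applied: a generalized arrow update carries no preconditions, so $T_1''$ inserts the trivial preconditions $\top$ (which need no translation) but \emph{does} translate all source and target conditions via $T_2''$, whereas in the standard-to-edge-conditioned transformation it was the preconditions that were translated and the edge conditions that were trivially $\top$. Thus the formula-preservation claim is needed here for the edge conditions rather than the preconditions, but the induction is structurally identical. A pleasant simplification relative to Theorem~\ref{prop: equivalence SEM into EDGY} is that applicability never arises: generalized arrow updates are applicable in every pointed Kripke model, and so is any pointed edge-conditioned event model whose actual event has precondition $\top$, so the case split on whether a precondition holds disappears entirely. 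Overall I expect the proof to be routine given Theorem~\ref{prop: equivalence SEM into EDGY}; the main care required is simply in stating the language-hierarchy induction and the formula-preservation claim cleanly.
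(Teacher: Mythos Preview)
Your proposal is correct and follows essentially the same approach as the paper: prove isomorphism (hence bisimulation) by induction on the level of $\mathcal{U}$ in the language hierarchy, take the identity on $W\times O$ as the isomorphism using $E=O$ and $pre\equiv\top$, and reduce preservation of accessibility relations to the formula-preservation claim $(\mathcal{M},w)\vDash\gamma$ iff $(\mathcal{M},w)\vDash T_2''(\gamma)$, whose only non-trivial case is the dynamic modality handled by the outer induction hypothesis. Your remarks on where $T_2''$ is applied (edge conditions rather than preconditions) and on the triviality of applicability are accurate and match the paper's treatment.
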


\begin{proof}
    Let $\mathcal{U}=(O,\mathsf{a}) \in  \gau(\mathcal{L}_\text{GAU})$. Let $T_1''(\mathcal{U})=(E,Q,pre) \in \ecem(\mathcal{L}_\text{ECM})$ be its transformation into an edge-conditioned event model. Note that $At(\mathcal{L}_\text{GAU}) = At(\mathcal{L}_\text{ECM}) = P$.
	Let $\mathcal{M}=(W,R,V)$ be a Kripke model with atom set $P$.
	We want to show that there exists an isomorphism between $\mathcal{M}\otimes \mathcal{U}=(W',R',V')$ and $\mathcal{M}\otimes T_1''(\mathcal{U})=(W'',R'',V'')$. As always, we do induction on the level of $\mathcal{U}$ in the language hierarchy. 
	
	For the base case, we assume $cond(\mathcal{U}) \subseteq \mathcal{L}_\text{GAU}^0$.
	First notice that $W'= W''$ by the following identities: $W'=W\times O=W\times E=W''$, where the last identity holds because $pre(o)=\top$ for all $o\in E$, by definition of transformation $T_1''$.
	To see that there is an isomorphism, let $f: W'\rightarrow W''$ be defined by $f(w,o)=(w,o)$. We now want to show that properties 1 and 2 of Definition~\ref{def:isomorphism} above hold.  
	 Clearly, it holds that for all $(w,o)\in W'$ and $p\in P$, $p\in V'(w,o)$ iff $p\in V''(f(w,o))$. For the second property, let 
	$((w,o),(w',o'))\in R'_a$. We want to show that $((w,o),(w,o'))\in R_a''$. By definition of product update with generalized arrow updates, if 
	$((w,o),(w',o'))\in R'_a$ then $(\varphi , o', \varphi')\in \mathsf{a}_a(o), (\mathcal{M},w)\vDash \varphi,$ and $(\mathcal{M},w')\vDash \varphi'$. By definition of transformation $T_1''$, $(\varphi , o', \varphi')\in \mathsf{a}_a(o)$ implies that $(o{:}T_2''(\varphi),o'{:}T_2''(\varphi'))\in Q_a$. Since $cond(\mathcal{U}) \subseteq \mathcal{L}_\text{GAU}^0$, $\varphi$ and $\varphi'$ are static formulas and hence $T_2''(\varphi)=\varphi$ and $T_2''(\varphi')=\varphi'$. This means that $(o{:}\varphi,o'{:}\varphi')\in Q_a$, and hence $((w,o),(w',o'))\in R_a''$, since $(\mathcal{M},w) \vDash \phi$ and $(\mathcal{M},w') \vDash \phi'$. This proves that $((w,o),(w',o'))\in R'_a$ imples $((w,o),(w,o'))\in R_a''$. For the other direction, assume $((w,o),(w,o'))\in R_a''$. Then the exists $\varphi$ and $\varphi'$ such that $(o{:}\varphi,o'{:}\varphi')\in Q_a$ and $(\mathcal{M},w)\vDash \varphi$ and $(\mathcal{M},w')\vDash \varphi'$. By definition of $T_2''$, we can then conclude that the exists $\psi$ and $\psi'$ such that $\varphi=T_2''(\psi)$, $\varphi'=T_2''(\psi')$ and $(\psi,o',\psi') \in \mathsf{a}_a(o)$. Since $cond(\mathcal{U}) \subseteq \mathcal{L}_\text{GAU}^0$, we get $\varphi=T_2''(\psi) = \psi$ and $\varphi'=T_2''(\psi') = \psi'$. This means that $(\phi,o',\phi') \in \mathsf{a}_a(o)$. Since we now have $(\mathcal{M},w) \models \varphi$, $(\mathcal{M},w') \models \varphi'$ and $(\phi,o',\phi') \in \mathsf{a}_a(o)$, the definition of product update for generalized arrow updates give us that $((w,o),(w',o'))\in R'_a$. This completes the base case.
 	
    For the induction step, as before we have to deal with the case where $T_2''(\varphi) \neq \varphi$ for the relevant conditions, and appeal to the induction hypothesis. So let $\mathcal{U}$ be given with $cond(\mathcal{U}) \subseteq \mathcal{L}_\text{GAU}^{i+1}$ and assume that the result holds when $cond(\mathcal{U}) \subseteq \mathcal{L}_\text{GAU}^{i}$. As in the earlier proofs, the induction hypothesis gives us that the following claim holds.

	\medskip\noindent
	\emph{Claim 1}. For all $\gamma \in \mathcal{L}_\text{GAU}^{i+1}$, $(\mathcal{M},w) \models \gamma$ iff $(\mathcal{M},w) \models T_2(\gamma)$. 
	
	\medskip
	The only non-trivial case in the proof of this claim is when $\gamma = [(\mathcal{U}',o')] \varphi$, and that case is handled by the induction hypothesis, since then $cond(\mathcal{U}') \subseteq \mathcal{L}_\text{GAU}^{i}$. 
	
	Given Claim 1, we can now repeat the proof of the base case for the induction step, since we only apply $T_2''$ to formulas $\gamma$ in $cond(\mathcal{U}) \subseteq \mathcal{L}_\text{GAU}^{i+1}$, and for these we can always replace $T_2''(\gamma)$ with $\gamma$, since they are true in the same worlds. 
	This completes the proof.

\end{proof}

\begin{theorem}[Linear growth] 
	For any $\mathcal{U} \in \gau(\mathcal{L}_\text{GAU})$, $T''_1(\mathcal{U})$ has size $O(|\mathcal{U}|)$.
	\end{theorem}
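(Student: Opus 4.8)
The plan is to follow the proof of Theorem~\ref{thm:linear-growth-SEM} essentially verbatim, arguing by induction on the level of $\mathcal{U}$ in the language hierarchy and bounding, along the way, the size blow-up of the condition translation $T''_2$. First I would unfold the definitions: writing $\mathcal{U} = (O,\mathsf{a})$ and $T''_1(\mathcal{U}) = (E,Q,pre)$, we have $E = O$, $pre(o) = \top$ for every $o \in O$, and each arrow $(\varphi,o',\varphi') \in \mathsf{a}_a(o)$ produces exactly one conditioned edge $(o{:}T''_2(\varphi), o'{:}T''_2(\varphi')) \in Q_a$, so $|Q_a| = \sum_{o\in O}|\mathsf{a}_a(o)|$. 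Plugging this into the size formula of Section~\ref{section:sizes} and using $|\top| = 1$ yields the identity $|T''_1(\mathcal{U})| = 2|O| + \sum_{a\in Ag}\sum_{o\in O}\bigl(|\mathsf{a}_a(o)| + \sum_{(\varphi,o',\varphi')\in\mathsf{a}_a(o)}(|T''_2(\varphi)| + |T''_2(\varphi')|)\bigr)$, where $2|O|$ accounts for the events ($|E| = |O|$) and the trivial preconditions ($\sum_{o\in O}|pre(o)| = |O|$).

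The key auxiliary statement, to be proved together with the theorem, is that $|T''_2(\gamma)| \le 2|\gamma|$ for every $\gamma \in \mathcal{L}_\text{GAU}$. Granting it, $|T''_2(\varphi)| \le 2|\varphi|$ for every arrow condition $\varphi$ of $\mathcal{U}$, and a termwise comparison of the identity above with $2|\mathcal{U}|$ --- the term $2|O|$ matches; the term $\sum_{a,o}|\mathsf{a}_a(o)|$ is dominated by $2\sum_{a,o}|\mathsf{a}_a(o)|$; and each $|T''_2(\varphi)| + |T''_2(\varphi')|$ is dominated by $2(|\varphi| + |\varphi'|)$ --- gives $|T''_1(\mathcal{U})| \le 2|\mathcal{U}|$, which is $O(|\mathcal{U}|)$. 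So the whole task reduces to establishing the pair of bounds $|T''_1(\mathcal{U})| \le 2|\mathcal{U}|$ and $|T''_2(\gamma)| \le 2|\gamma|$, which I would do by a single induction on the language-hierarchy level.

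In the base case $cond(\mathcal{U}) \subseteq \mathcal{L}_\text{GAU}^0$, all conditions of $\mathcal{U}$ are static, so $T''_2$ is the identity on them, the identity above collapses to $|T''_1(\mathcal{U})| = |\mathcal{U}| + |O| \le 2|\mathcal{U}|$ (using $|O| \le |\mathcal{U}|$), and $|T''_2(\gamma)| = |\gamma|$ for static $\gamma$. For the step, with $\mathcal{U}$ of level $i+1$ and assuming $|T''_1(\mathcal{U}')| \le 2|\mathcal{U}'|$ for all generalized arrow updates of level at most $i$, I would first prove $|T''_2(\gamma)| \le 2|\gamma|$ for all $\gamma \in \mathcal{L}_\text{GAU}^{i+1}$ by structural induction on $\gamma$: the atomic, negation, conjunction and belief cases are routine (the factor $2$ absorbs the single connective symbol each introduces), and the case $\gamma = [(\mathcal{U}',o')]\psi$ has $cond(\mathcal{U}') \subseteq \mathcal{L}_\text{GAU}^i$, so the level-induction hypothesis applies and gives $|T''_2(\gamma)| = |[(T''_1(\mathcal{U}'),o')]T''_2(\psi)| = |[(T''_1(\mathcal{U}'),o')]| + |T''_2(\psi)| \le 2|\mathcal{U}'| + 2|\psi| = 2|\gamma|$. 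Feeding this bound --- which now holds for all conditions of $\mathcal{U}$, since these lie in $\mathcal{L}_\text{GAU}^{i+1}$ --- back into the identity for $|T''_1(\mathcal{U})|$ closes the induction.

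I do not expect a genuine obstacle: the argument is structurally identical to that of Theorem~\ref{thm:linear-growth-SEM}, the only difference being that $T''_1$ translates the arrow conditions of $\mathcal{U}$ (generalized arrow updates having no preconditions) and stamps every event with $\top$, which is exactly what produces the harmless extra $|O|$ term. The single point requiring care is the bookkeeping of the nested induction --- an outer induction on the hierarchy level, inside which the size bound on $T''_2$ is obtained by an inner structural induction --- together with fixing one multiplicative constant (here $2$) that at once dominates those extra copies of $\top$ and survives the recursive call in the dynamic-modality case.
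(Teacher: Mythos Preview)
Your proposal is correct and follows essentially the same approach as the paper: induction on the language-hierarchy level, showing $|T''_1(\mathcal{U})| \le 2|\mathcal{U}|$ via the auxiliary bound $|T''_2(\gamma)| \le 2|\gamma|$, with the extra $|O|$ term coming from the $\top$ preconditions. You are somewhat more explicit than the paper about formulating the auxiliary claim and the nested structural induction, but the argument and the constant are the same.
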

\begin{proof}
  The proof is very similar to the proof of 
  Theorem~\ref{thm:linear-growth-SEM} above, so we only sketch the argument. Let $\mathcal{U}=(O,\mathsf{a})$ and $T_1''(\mathcal{U}) = \mathcal{C} = (E,Q,pre)$. 
  
  We need to inspect how much larger the translated model $\mathcal{C}$ can become. 
  Note that we have, from the size definition in Section~\ref{section:sizes}: 
  $$|\mathcal{C}| = | E| +   \sum_{a \in Ag} \bigl(|Q_a|\ + \!\!\!\!\sum_{(e:\varphi, f:\psi)\in Q_a} (|\varphi| + |\psi|) \bigr) + \sum_{e \in E} |pre(e)|$$	
	$$|\mathcal{U}| = | O| +   \sum_{a \in Ag, o \in O} |\mathsf{a}_a(o)|\ +\!\!\sum_{a \in Ag, o\in O,(\varphi, o,\psi)\in \mathsf{a}_a(o)} \!\!(|\varphi| + |\psi|)$$
 By definition of the translation $T_1''$, we have $E=O$ and hence $|E| = |O|$. We also have that $Q_a$ contains exactly one edge per choice of $(\varphi,o',\varphi') \in \mathsf{a}_a(o)$, and hence $|Q_a| = \Sigma_{o \in O} |\mathsf{a}_a(o)|$. In the base case of our induction proof (the case $cond(\mathcal{U}) \subseteq \mathcal{L}_\text{GAU}^0$), we have that $T_2''(\gamma) = \gamma$ for all the source and target conditions, and hence also $\Sigma_{a \in Ag,(e:\varphi, f:\psi)\in Q_a} (|\varphi| + |\psi|) = \Sigma_{a \in Ag, o \in O, (\varphi, o,\psi) \in \mathsf{a}_a(o)} (|\varphi| + |\psi|)$. Thus, the two models only differ in size by the addition of $\Sigma_{e\in E} |pre(e)|$ to the size of $\mathcal{C}$. Since $pre(e) = \top$ for all $e \in E$, we have $|pre(e)| = 1$ for all $e \in E$. Hence in the translation, we only add $|E| = |O|$ to the size of the input $\mathcal{U} = (O,\mathsf{a})$, and hence $|T_1''(\mathcal{U})| \leq  2|\mathcal{U}|$.

 For the induction step, we no longer have $T_2''(\gamma) = \gamma$ for the relevant conditions, but as in the proof of Theorem~\ref{thm:linear-growth-SEM} above, we can appeal to the induction hypothesis. The relevant source and target conditions only contain dynamic modalities $[(\mathcal{U'},o')]\varphi$ for which the induction hypothesis holds. The induction hypothesis gives that for such dynamic modalities that occur as subformulas in source and target conditions, we have $|T_1''(\mathcal{U}')| \leq 2 |\mathcal{U}'|$. This means that the size of the source and target conditions in the translation at most doubles in size, and we get the required result.   
\end{proof}

	


%

\bibliographystyle{named}
\bibliography{litt_std}

\begin{thebibliography}{}

\bibitem[\protect\citeauthoryear{Baltag and
  Smets}{2008}]{baltag2008qualitative}
Alexandru Baltag and Sonja Smets.
\newblock A qualitative theory of dynamic interactive belief revision.
\newblock In Giacomo Bonanno, Wiebe van~der Hoek, and Michael Wooldridge,
  editors, {\em Logic and the Foundations of Game and Decision Theory (LOFT7)},
  volume~3 of {\em Texts in Logic and Games}, pages 13--60. Amsterdam
  University Press, 2008.

\bibitem[\protect\citeauthoryear{Baltag \bgroup \em et al.\egroup
  }{1998}]{baltag1998logic}
Alexandru Baltag, Lawrence~S. Moss, and Slawomir Solecki.
\newblock The logic of public announcements and common knowledge and private
  suspicions.
\newblock In Itzhak Gilboa, editor, {\em Proceedings of the 7th Conference on
  Theoretical Aspects of Rationality and Knowledge ({TARK}-98)}, pages 43--56.
  Morgan Kaufmann, 1998.

\bibitem[\protect\citeauthoryear{Belardinelli and
  Bolander}{2023}]{belardinelli2023attention}
Gaia Belardinelli and Thomas Bolander.
\newblock {Attention! Dynamic Epistemic Logic Models of (In)Attentive Agents}.
\newblock In {\em Proceedings of the 2023 International Conference on
  Autonomous Agents and Multiagent Systems}, AAMAS '23, page 391–399,
  Richland, SC, 2023. International Foundation for Autonomous Agents and
  Multiagent Systems.

\bibitem[\protect\citeauthoryear{Belardinelli}{2023}]{belardinelli2024phdthesis}
Gaia Belardinelli.
\newblock {\em {Epistemic Logic Models of Awareness, Attention, and Implicit
  Reasoning}}.
\newblock PhD thesis, K{\o}benhavns Universitet, 2023.

\bibitem[\protect\citeauthoryear{Blackburn \bgroup \em et al.\egroup
  }{2001}]{blac.ea:moda}
P.~Blackburn, M.~de~Rijke, and Y.~Venema.
\newblock {\em Modal Logic}, volume~53 of {\em Cambridge Tracts in Theoretical
  Computer Science}.
\newblock Cambridge University Press, Cambridge, UK, 2001.

\bibitem[\protect\citeauthoryear{Bolander and
  Lequen}{2023}]{bolander2023parameterized}
Thomas Bolander and Arnaud Lequen.
\newblock Parameterized complexity of dynamic belief updates: A complete map.
\newblock {\em Journal of Logic and Computation}, 33(6):1270--1300, 2023.

\bibitem[\protect\citeauthoryear{Bolander \bgroup \em et al.\egroup
  }{2016}]{bolander2015announcements}
Thomas Bolander, Hans van Ditmarsch, Andreas Herzig, Emiliano Lorini, Pere
  Pardo, and Fran{\c{c}}ois Schwarzentruber.
\newblock {Announcements to Attentive Agents}.
\newblock {\em Journal of Logic, Language and Information}, 25:1--35, 2016.

\bibitem[\protect\citeauthoryear{Bolander \bgroup \em et al.\egroup
  }{2020}]{bolander2020del}
Thomas Bolander, Tristan Charrier, Sophie Pinchinat, and Francois
  Schwarzentruber.
\newblock {DEL}-based epistemic planning: Decidability and complexity.
\newblock {\em Artificial Intelligence}, 287:1--34, 2020.

\bibitem[\protect\citeauthoryear{Bolander \bgroup \em et al.\egroup
  }{2021}]{bolander2021del}
Thomas Bolander, Lasse Dissing, and Nicolai Herrmann.
\newblock Del-based epistemic planning for human-robot collaboration: Theory
  and implementation.
\newblock In {\em Proceedings of the 18th International Conference on
  Principles of Knowledge Representation and Reasoning (KR 2021)}, 2021.

\bibitem[\protect\citeauthoryear{Bolander}{2018}]{bolander2018seeing}
Thomas Bolander.
\newblock Seeing is believing: Formalising false-belief tasks in dynamic
  epistemic logic.
\newblock In {\em Outstanding Contributions to Logic}, number~12, pages
  207--236. Springer, 2018.

\bibitem[\protect\citeauthoryear{Boyd \bgroup \em et al.\egroup
  }{2011}]{boyd2011cultural}
Robert Boyd, Peter~J Richerson, and Joseph Henrich.
\newblock The cultural niche: Why social learning is essential for human
  adaptation.
\newblock {\em Proceedings of the National Academy of Sciences},
  108(supplement\_2):10918--10925, 2011.

\bibitem[\protect\citeauthoryear{Fagin and Halpern}{1988}]{fagin1988belief}
R.~Fagin and J.~Halpern.
\newblock Belief, awareness, and limited reasoning.
\newblock {\em Artificial Intelligence}, 34:39--76, 1988.

\bibitem[\protect\citeauthoryear{Fagin \bgroup \em et al.\egroup
  }{1995}]{fagin1995reasoning}
Ronald Fagin, Joseph~Y. Halpern, Yoram Moses, and Moshe~Y. Vardi.
\newblock {\em Reasoning About Knowledge}.
\newblock MIT Press, 1995.

\bibitem[\protect\citeauthoryear{Fritz and Lederman}{2015}]{fritz2016standard}
Peter Fritz and Harvey Lederman.
\newblock {Standard State Space Models of Unawareness (Extended Abstract)}.
\newblock {\em Electronic Proceedings in Theoretical Computer Science},
  215:141–158, 2015.

\bibitem[\protect\citeauthoryear{Johnson}{2024}]{johnson2024varieties}
Gabbrielle~M Johnson.
\newblock Varieties of bias.
\newblock {\em Philosophy Compass}, 19(7):e13011, 2024.

\bibitem[\protect\citeauthoryear{Kooi and Renne}{2011a}]{kooi2011arrow}
Barteld Kooi and Bryan Renne.
\newblock Arrow update logic.
\newblock {\em The Review of Symbolic Logic}, 4(4):536--559, 2011.

\bibitem[\protect\citeauthoryear{Kooi and Renne}{2011b}]{kooi2011generalized}
Barteld Kooi and Bryan Renne.
\newblock Generalized arrow update logic.
\newblock In {\em Proceedings of the 13th Conference on Theoretical Aspects of
  Rationality and Knowledge}, pages 205--211. ACM, 2011.

\bibitem[\protect\citeauthoryear{Li}{2023}]{li2023phdthesis}
Lei Li.
\newblock {\em Games, Boards and Play: A Logical Perspective}.
\newblock PhD thesis, University of Amsterdam, 2023.

\bibitem[\protect\citeauthoryear{Munton}{2023}]{munton2023prejudice}
Jessie Munton.
\newblock Prejudice as the misattribution of salience.
\newblock {\em Analytic Philosophy}, 64(1):1--19, 2023.

\bibitem[\protect\citeauthoryear{Rendell \bgroup \em et al.\egroup
  }{2010}]{rendell2010copy}
Luke Rendell, Robert Boyd, Daniel Cownden, Marquist Enquist, Kimmo Eriksson,
  Marc~W Feldman, Laurel Fogarty, Stefano Ghirlanda, Timothy Lillicrap, and
  Kevin~N Laland.
\newblock {Why copy others? Insights from the social learning strategies
  tournament}.
\newblock {\em Science}, 328(5975):208--213, 2010.

\bibitem[\protect\citeauthoryear{Simons and Chabris}{1999}]{simons1999gorillas}
Daniel~J. Simons and Christopher~F. Chabris.
\newblock Gorillas in our midst: Sustained inattentional blindness for dynamic
  events.
\newblock {\em Perception: Visual perception}, 28(9):1059--1074, 1999.

\bibitem[\protect\citeauthoryear{Smith and Archer}{2020}]{smith2020epistemic}
Leonie Smith and Alfred Archer.
\newblock Epistemic injustice and the attention economy.
\newblock {\em Ethical Theory and Moral Practice}, 23(5):777--795, 2020.

\bibitem[\protect\citeauthoryear{van Ditmarsch and
  Kooi}{2008}]{ditmarsch2008semantic}
Hans van Ditmarsch and Barteld Kooi.
\newblock Semantic results for ontic and epistemic change.
\newblock In Giacomo Bonanno, Wiebe van~der Hoek, and Michael Wooldridge,
  editors, {\em Logic and the Foundation of Game and Decision Theory (LOFT 7)},
  Texts in Logic and Games 3, pages 87--117. Amsterdam University Press, 2008.

\bibitem[\protect\citeauthoryear{van Ditmarsch \bgroup \em et al.\egroup
  }{2007}]{ditmarsch2007dynamic}
Hans van Ditmarsch, Wiebe van~der Hoek, and Barteld Kooi.
\newblock {\em Dynamic Epistemic Logic}.
\newblock Springer Publishing Company, 2007.

\bibitem[\protect\citeauthoryear{{van
  Ditmarsch}}{2023}]{ditmarsch2023announced}
Hans {van Ditmarsch}.
\newblock To be announced.
\newblock {\em Information and Computation}, 292:105026, 2023.

\bibitem[\protect\citeauthoryear{van Eijck \bgroup \em et al.\egroup
  }{2012}]{eijck2012action}
Jan van Eijck, Ji~Ruan, and Tomasz Sadzik.
\newblock Action emulation.
\newblock {\em Synthese}, 185:131--151, 2012.

\bibitem[\protect\citeauthoryear{Watzl}{2017}]{watzl2017structuring}
Sebastian Watzl.
\newblock {\em Structuring mind: The nature of attention and how it shapes
  consciousness}.
\newblock Oxford University Press, 2017.

\end{thebibliography}

\end{document}